\newcommand{\norm}[1]{\left\lVert#1\right\rVert}
\def\eqref#1{equation~\ref{#1}}
\def\1{\bm{1}}
\def\btheta{{\bm{\theta}}}
\def\bxi{{\bm{\xi}}}
\def\bphi{{\bm{\phi}}}
\def\bPsi{{\bm{\Psi}}}
\def\bPhi{{\bm{\Phi}}}
\def\Dt{{\Delta t}}
\def\lip{{\mathrm{Lip}}}
\def\rve{{\mathbf{e}}}
\def\rvf{{\mathbf{f}}}
\def\rvg{{\mathbf{g}}}
\def\rvh{{\mathbf{h}}}
\def\rvu{{\mathbf{i}}}
\def\rvu{{\mathbf{u}}}
\def\rvv{{\mathbf{v}}}
\def\rvx{{\mathbf{x}}}
\def\rvy{{\mathbf{y}}}
\def\rvz{{\mathbf{z}}}
\def\rmA{{\mathbf{A}}}
\def\rmD{{\mathbf{D}}}
\def\rmF{{\mathbf{F}}}
\def\rmG{{\mathbf{G}}}
\def\rmI{{\mathbf{I}}}
\def\rmS{{\mathbf{S}}}
\def\beps{{\bm{\epsilon}}}
\DeclareMathAlphabet{\mathsfit}{\encodingdefault}{\sfdefault}{m}{sl}
\SetMathAlphabet{\mathsfit}{bold}{\encodingdefault}{\sfdefault}{bx}{n}
\newcommand{\E}{\mathbb{E}}
\newcommand{\R}{\mathbb{R}}
\newcommand{\Var}{\mathrm{Var}}
\newcommand{\Cov}{\mathrm{Cov}}
\DeclareMathOperator{\Tr}{Tr}
\newcommand*\diff{\mathop{}\!\mathrm{d}}
\newcommand{\abs}[1]{\left|#1\right|}
\newtheorem{proposition}{Proposition}[section]
\newtheorem{theorem}{Theorem}[section]
\newtheorem{assumption}{Assumption}[section]
\crefname{assumption}{Assumption}{Assumptions}
\Crefname{assumption}{Assumption}{Assumptions}
\newtheorem{lemma}{Lemma}[section]
\newtheorem{corollary}[theorem]{Corollary}
\title{\method: Mid-Training for Efficient Learning of Consistency, Mean Flow, and Flow Map Models}
\author{
Zheyuan Hu\textsuperscript{1,*} \quad
Chieh-Hsin Lai\textsuperscript{1,*} \quad
Yuki Mitsufuji\textsuperscript{1,2} \quad
Stefano Ermon\textsuperscript{3} \\
\textsuperscript{1}Sony AI \quad
\textsuperscript{2}Sony Group Corporation \quad
\textsuperscript{3}Stanford University \\
*Equal Contribution\\
\href{mailto:zyhu2001@gmail.com}{\texttt{zyhu2001@gmail.com}} \quad
\href{mailto:chieh-hsin.lai@sony.com}{\texttt{chieh-hsin.lai@sony.com}}
}
\colorlet{tgray}{black!6}              
\definecolor{OrangeBase}{RGB}{255,128,0}    
\colorlet{torange}{OrangeBase!20}
\newcommand{\method}{CMT}
\newcommand{\methods}{CMT }
\newcommand{\methodl}[2]{\mathrm{#1}\text{-}\mathrm{#2}}
\begin{document}

\maketitle


\begin{abstract}
Flow map models such as Consistency Models (CM) and Mean Flow (MF) enable few-step generation by learning {the long jump of the ODE solution of diffusion models}, yet training remains unstable, sensitive to hyperparameters, and costly. Initializing from a pre-trained diffusion model helps, but still requires converting infinitesimal steps into a long-jump map, leaving instability unresolved. We introduce \emph{mid-training}, the first concept and practical method that inserts a lightweight intermediate stage between the (diffusion) pre-training and the final flow map training (i.e., post-training) for vision generation. Concretely, \emph{Consistency Mid-Training} (\method) is a compact and principled stage that trains a model to map points along a solver trajectory from a pre-trained model, starting from a prior sample, directly to the solver-generated clean sample. It yields a trajectory-consistent and stable initialization. This initializer outperforms random and diffusion-based baselines and enables fast, robust convergence without heuristics. Initializing post-training with \methods weights further simplifies flow map learning.  Empirically, \methods achieves state of the art two step FIDs: 1.97 on CIFAR-10,
1.32 on ImageNet 64$\times$64, and 1.84 on ImageNet 512$\times$512, while using
up to 98\% less training data and GPU time, compared to CMs. On ImageNet
256$\times$256, \methods reaches 1-step FID 3.34 while cutting total training time by
about 50\% compared to MF from scratch (FID 3.43). This establishes \methods as a principled, efficient, and general framework for training flow map models. Code and models are available at the \url{https://github.com/sony/cmt}.
\end{abstract}

\section{Introduction}
\begin{wrapfigure}{r}{0.6\textwidth} 
\vspace{-2.0cm}
  \centering
  \includegraphics[width=0.6\textwidth]{
  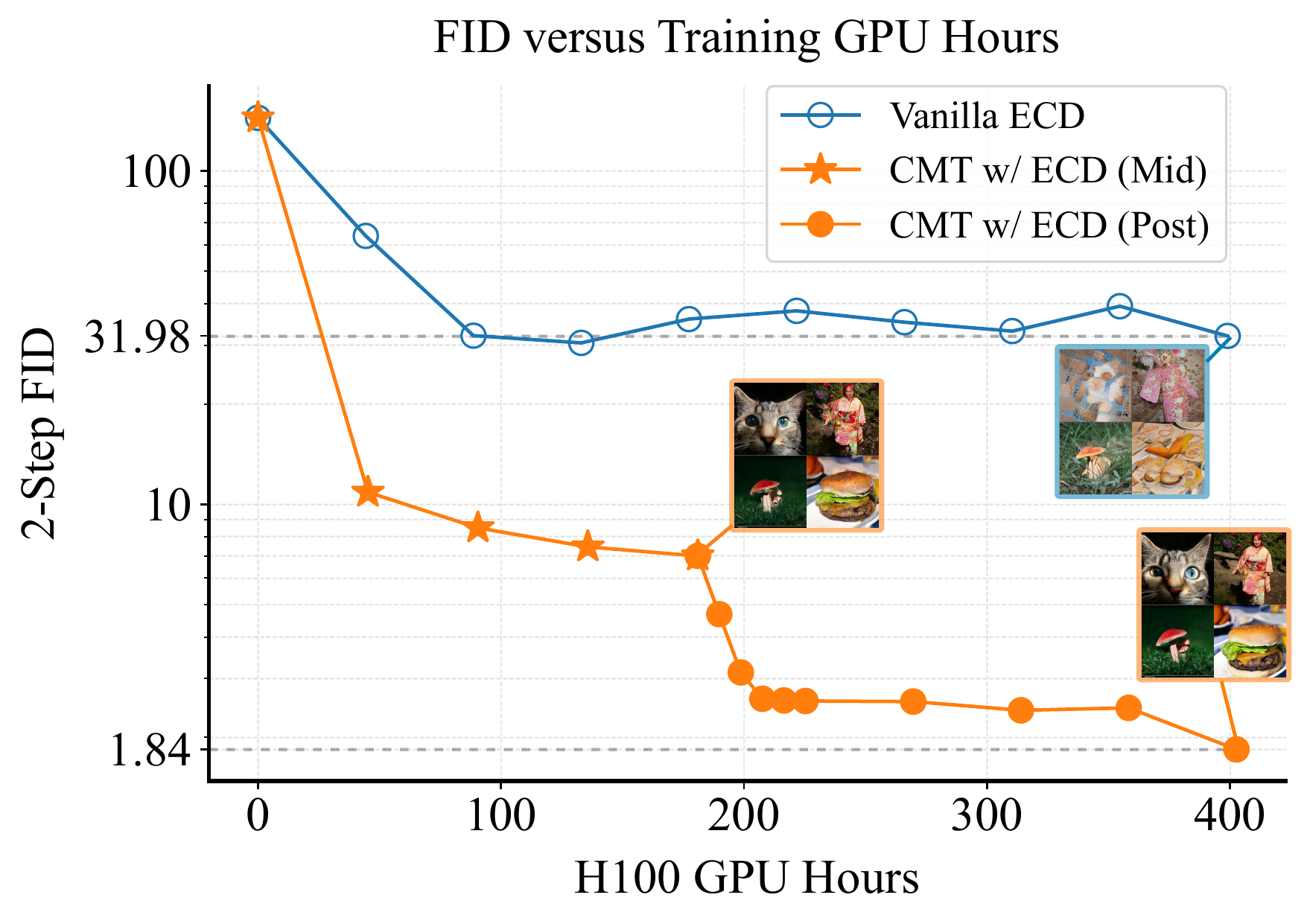}
  \vspace{-0.7cm}
  \caption{\footnotesize{\textbf{FID vs. training time for vanilla ECD~\citep{geng2024ect} and CMT (ours) on ImageNet $512{\times}512$.} 
With the proposed mid-training, our CMT w/ ECD (as post-trained flow map) achieves  SOTA two step FID of 1.84 using only 400 H100 GPU hours (mid- and post-training combined). Under the same budget, vanilla ECD still produces unrecognizable images, and even to reach a reasonable two step FID of 3.38 it requires 4643.99 hours. Overall, \methods reduces the total training cost of flow map models by \textbf{91.4\%} while achieving SOTA performance.
}
}
  \label{fig:cd_fid_vs_iter}
  \vspace{-0.8cm}
\end{wrapfigure}


Diffusion models~\citep{ho2020denoising,song2019generative} have become a cornerstone of modern generative modeling, yet their practical application is often hindered by a significant computational burden during inference. This latency arises because sampling is equivalent to solving a probability flow ordinary differential equation (PF-ODE)~ \citep{song2020score}, a process that requires many iterative steps. To circumvent this limitation, a promising direction focuses on directly learning the solution (integration) map of the PF-ODE, which is also referred to as a \emph{flow map model}.

Because the PF--ODE flow map lacks a closed form, recent methods learn surrogate maps by enforcing properties that any exact flow must satisfy: e.g., Consistency Models (CM)~\citep{song2023cm} impose cross–noise-level self-consistency and Mean Flow (MF)~\citep{geng2025mean} matches time averages along trajectories. However, these objectives~\citep{song2023cm,song2023ict,kim2023ctm,geng2025mean,lu2025sCM,sabour2025align} supervise against \emph{stop-gradient, network-dependent pseudo-targets} that drift with training dynamics. The lack of a true, time-invariant regression target injects bias, yields unstable optimization signals, and slows convergence. While recent works observed that initializing from pre-trained diffusion weights can mitigate instability~\citep{geng2024ect,lu2025sCM}, this does not address the root cause. Fundamentally, a flow map must learn large integrated jumps of the trajectory, whereas diffusion models capture only the \emph{infinitesimal} movements. This mismatch renders diffusion-based initialization fragile: flow map training then depends on brittle heuristics (e.g., time weightings and sampling schedules) yet still suffers from flow map learning's instability and converges slowly~\citep{geng2024ect}. In particular, recent studies~\citep{meanflow_pytorch} have observed that post-training MF, even when initialized from a well-trained large-scale diffusion model~\citep{ma2024sit}, is prone to divergence and requires careful configuration tuning.

We address the instability and high cost of training few-step flow maps by introducing \emph{mid-training} for vision generation, conceptually inspired by mid-training in large language models~\citep{groeneveld2024olmo}. In our setting, mid-training is a brief intermediate stage that bridges pre-training (e.g., diffusion model) and flow map post-training. We instantiate this idea as \emph{Consistency Mid-Training} (\method), a lightweight procedure that leverages trajectories generated by a pre-trained model to produce a trajectory-aware initialization. Concretely, \methods trains a model to map any point along a trajectory determined by a pre-trained model, from a prior sample directly to the clean endpoint of exactly that same trajectory in a single step.
 Mid-training with \methods requires no architectural changes, converges quickly, adds only modest cost, and avoids fragile heuristics such as stop gradients, time sampling, and weighting schedules. This trajectory-aligned initializer provides a better starting point for flow map post-training than either random or diffusion-based weight transfer, while also simplifying engineering practices. Most importantly, it significantly reduces the total training cost (in both time and required training data) and improves training stability.

Theoretically, we show that \methods reduce the gradient discrepancy between the oracle and practical flow map losses, providing a stronger and trajectory-aligned initializer for the flow map post-training. Empirically, we evaluate on pixel-space benchmarks (CIFAR-10, FFHQ $64\times64$, AFHQv2 $64\times64$, ImageNet $64\times64$) and latent-space high-resolution models (ImageNet $256\times256$/$512\times512$) as well as MSCOCO T2I~\citep{lin2014microsoft}. Initializing flow map models with \methods improves post-training stability, speeds up convergence, and boosts final quality. \methods achieves new SOTA two-step FIDs of 1.97 (CIFAR-10), 1.32 (ImageNet $64\times64$), 1.84 (ImageNet $512\times512$), 2.34 (AFHQv2 $64\times64$), and 2.75 (FFHQ $64\times64$), while reducing training budget (images processed, equivalently backprop steps) and GPU time by up to 98\% versus baselines without mid-training. On ImageNet $512\times512$, \methods reaches FID 1.84 with 91.4\% less training than the baseline (FID 3.38; \Cref{fig:cd_fid_vs_iter}); on ImageNet $256\times256$, it attains FID 3.34 with $\sim$50\% less training than MF from scratch (FID 3.43). On MSCOCO T2I, \methods achieves the best FID with $\sim$47\% less training.

\methods applies to both CM and MF, demonstrating broad applicability
across ODE-based flow map generators. To our knowledge, this work presents the first systematic investigation of mid-training for few-step flow map models in
vision generation, establishing \methods as an effective approach that
significantly reduces training cost while achieving state-of-the-art quality.
\section{Preliminaries and Related Work}
\subsection{Diffusion Models and Flow Matching}
Diffusion models define a forward process that perturbs clean data $\rvx_0 \sim p_{\mathrm{data}}$ into $\rvx_t=\alpha_t \rvx_0+\sigma_t\beps$, where $\beps\sim\mathcal{N}(\mathbf{0},\mathbf{I})$ and $t\in[0,T]$. Equivalently, $\rvx_t\sim p_t(\rvx_t|\rvx_0)=\mathcal{N}(\cdot;\alpha_t\rvx_0,\sigma_t^2\rmI)$, which induces marginals
$p_t(\rvx_t)= \int p_t(\rvx_t|\rvx_0) p_{\mathrm{data}}(\rvx_0) \mathrm{d}\rvx_0$.
Two closely related training approaches are standard.

\textbf{EDM}~\citep{karras2022edm} trains a denoiser $\rmD_{\btheta}(\rvx_t,t)$ with a preconditioned parametrization by minimizing $\mathcal{L}_{\mathrm{DM}}(\btheta)=\E_t\E_{\rvx_0,\beps}[\,w(t)\|\rmD_{\btheta}(\rvx_t,t)-\rvx_0\|_2^2]$. At optimum, $\rmD(\rvx_t,t)=\E[\rvx_0|\rvx_t]$. 
EDM uses $\alpha_t=1$, $\sigma_t=t$ for $t\in[0,T]$, so for large $T$, the prior $p_{\mathrm{prior}}$ approaches $\mathcal{N}(\mathbf{0},T^2\rmI)$.

\textbf{Flow Matching}~\citep{lipman2022flow} fits a vector field $\rvv_{\btheta}(\rvx_t,t)$ to the conditional velocity of the perturbation:
$
\mathcal{L}_{\mathrm{FM}}(\btheta)
=\E_t\,\E_{\rvx_0,\beps}\!\left[w(t)\,\big\|\rvv_{\btheta}(\rvx_t,t)-\big(\alpha_t' \rvx_0 + \sigma_t' \beps\big)\big\|_2^2\right]$.
At optimum, $\rvv(\rvx_t,t)=\E[\alpha_t'\rvx_0+\sigma_t'\beps| \rvx_t]$. A common choice $\alpha_t=1-t$, $\sigma_t=t$ for $t\in[0,1]$ yields a unit-Gaussian prior.

\textbf{Their Relationship.} The parametrizations are equivalent; the marginal optimal velocity and denoiser satisfy
$\rvv(\rvx_t,t)
=\Big(\alpha_t'-\alpha_t\tfrac{\sigma_t'}{\sigma_t}\Big)\rmD(\rvx_t,t)
+\tfrac{\sigma_t'}{\sigma_t}\,\rvx_t$,
so one can translate between $\rvv_{\btheta}$ and $\rmD_{\btheta}$ given the scheduler. Sampling integrates the PF-ODE, $\frac{\mathrm{d}\rvx_t}{\mathrm{d}t}=\rvv(\rvx_t,t)$,
starting from $\rvx_T\sim p_{\mathrm{prior}}$ (Gaussian in both views) down to $t=0$. Either $\rvv_{\btheta}\approx\rvv$ or $\rmD_{\btheta}\approx\rmD$ can be used to realize the drift.

\subsection{Few-Step Flow Map Generative Modeling} 
In this section, we propose a unified view that connects existing formulations of flow map models.
Numerical integration of the PF-ODE can be slow, as it requires simulating a system across many small time steps. Few-step models offer a more efficient alternative by directly learning the solution to the PF-ODE's integral: the \emph{flow map}, $\bPsi_{t \to s}(\cdot)$. This map takes an initial state $\rvx_t$ at time $t$ and jumps directly to its final destination at time $s$:
\begin{align}
\bPsi_{t\to s}(\rvx_t)\coloneqq \rvx_t + \int_t^s \rvv\big(\rvx_u,u\big)\mathrm{d}u,
\quad
\rvv(\rvx_u,u)=\left(\alpha'_u-\alpha_u\tfrac{\sigma'_u}{\sigma_u}\right)\rmD(\rvx_u,u)+\tfrac{\sigma'_u}{\sigma_u}\rvx_u.
\end{align}

\textbf{Special Flow Map: Consistency Models (CM).} The CM family adapts EDM’s framework and learn a few-step denoiser $\rvf_\btheta(\cdot,t)$ that approximates the flow map to the origin, $\bPsi_{t\to 0}(\cdot)$, for any $t\in(0,T]$.  
Training relies on the \emph{consistency property}: any two points along the same PF-ODE trajectory should map to the same origin.   We propose a \emph{principled re-interpretation} of the CM family objective~\citep{song2023cm,song2023ict,geng2024ect,lu2025sCM}:
\begin{align}\label{eq:oracle-cm}
    \mathcal{L}_{\methodl{oracle}{CM}}(\btheta)  := \mathbb{E}_t \mathbb{E}_{\rvx_t\sim p_t} \Big[ w(t) d(\rvf_\btheta(\rvx_t, t) , \bPsi_{t \to 0}(\rvx_t)) \Big],
\end{align}
with $d$ a point-wise distance (e.g., squared $\ell_2$ or perceptual~\citep{zhang2018lpips}). At optimum, $\rvf(\rvx_t,t)=\bPsi_{t\to 0}(\rvx_t)$ (\Cref{prop:oracle-cm-min}). Since $\bPsi_{t\to 0}$ is unavailable, CM uses a \emph{stop-gradient} surrogate from the previous step: $\bPsi_{t\to 0}(\rvx_t)\;\approx\;\rvf_{\btheta^-}\!\left(\rvx_{t-\Delta t},\,t-\Delta t\right)$ with $\Delta t>0$, where $\rvx_{t-\Delta t}$ comes from (i) \emph{Consistency Distillation (CD)}: a one-step solver with a pre-trained diffusion teacher, which calls the teacher during training; or (ii) \emph{Consistency Training (CT)}: the analytic estimate $\rvx_{t-\Delta t}=\alpha_{t-\Delta t}\rvx_0+\sigma_{t-\Delta t}\beps$ using the same $(\rvx_0,\beps)$ as in $\rvx_t=\alpha_t\rvx_0+\sigma_t\beps$, requiring no teacher calls.
Both approaches improve performance by initializing from pre-trained diffusion weights~\citep{lu2025sCM,geng2024ect}. In the CT setting, the CM surrogate loss is
\begin{align}\label{eq:surrogate-cm}
    \mathcal{L}_{\mathrm{CM}}(\btheta)
    := \E_{t,\rvx_t}\left[
    w(t) d\left(\rvf_\btheta(\rvx_t, t),  \rvf_{\btheta^-}(\rvx_{t-\Delta t}, t-\Delta t)\right) \right].
\end{align}
Recent variants (e.g., ECT~\citep{geng2024ect}) refine initialization, time steps, $w(t)$, and $d(\cdot,\cdot)$.

\textbf{General Flow Map.}
Consistency Trajectory Model (CTM) \citep{kim2023ctm} was the first to learn the
general flow map $\bPsi_{t \to s}$ for arbitrary $t>s$ via
$\rmG_{\btheta}(\rvx_t,t,s)$, minimizing
\begin{align}\label{eq:oracle-ctm}
    \mathcal{L}_{\methodl{oracle}{CTM}}(\btheta) := \mathbb{E}_{t> s} \mathbb{E}_{\rvx_t\sim p_t} \Big[ w(t) d(\rmG_\btheta(\rvx_t, t, s) , \bPsi_{t \to s}(\rvx_t) ) \Big].
\end{align}
As $\bPsi_{t\to s}$ is inaccessible, CTM uses a stop-gradient target evaluated at $\rmG_\btheta$ itself, similar to CM.

More recently, MF~\citep{geng2025mean} builds on the flow matching formulation by modeling the average drift $\rvh_{\btheta}(\rvx_t, t, s) \approx \rvh(\rvx_t, t, s) := \frac{1}{t-s}\int_s^t \rvv(\rvx_u,u)\diff u$
over an interval $[s,t]$, also following the principled \Cref{eq:oracle-ctm}. MF constructs a surrogate target by differentiating $(t-s)\rvh(\rvx_t,t,s)=\int_s^t \rvv(\rvx_u,u) \diff u$ w.r.t.\ $t$, yielding the MF training loss:
\begin{align}\label{eq:surrogate-mf}
    \mathcal{L}_{\mathrm{MF}}(\btheta) := \mathbb{E}_{t > s} \mathbb{E}_{\rvx_t\sim p_t}\Big[w(t)\,\|\rvh_{\btheta}(\rvx_t,t,s) - \rvh_{\btheta^-}^{\mathrm{tgt}}(\rvx_t,t,s)\|_2^2\Big],
\end{align}
where the regression target is applied with stop-gradient as
$\rvh_{\btheta^-}^{\mathrm{tgt}}(\rvx_t,t,s) := \rvv(\rvx_t,t) - (t-s)\bigl(\rvv(\rvx_t,t)\partial_\rvx \rvh_{\btheta^-} + \partial_t \rvh_{\btheta^-}\bigr)$. {In practice, the oracle $\rvv(\rvx_t,t)$ is approximated either by  
(i) a pre-trained diffusion model (\emph{distillation}), or 
(ii) the conditional velocity $\alpha_t' \rvx_0 + \sigma_t' \beps$ (\emph{training from scratch}).} CTM and MF share the same framework with equivalent losses up to a constant
(see \Cref{app:literature} and \Cref{eq:g-h-para}), differing only in parameterization and backbone
(CTM with EDM, MF with flow matching). We therefore use MF as the representative
flow map model $\bPsi_{t\to s}$.

\subsection{Related Work}
{\textbf{Diffusion Models and ODE Samplers.}  The PF-ODE enables deterministic ODE sampling~\citep{song2020score,lipman2022flow,rectified_flow,karras2022edm}. DDIM is a first-order (Euler-type) PF-ODE discretization that reduces steps without retraining~\citep{song2020denoising}. DPM-Solver methods use a log-SNR parameterization with 2nd/3rd-order exponential integrators to solve the PF-ODE in very few NFEs~\citep{lu2022dpm,lu2022dpm++}, and parallel schemes can further optimize the discretization grid~\citep{pmlr-v235-sabour24a,nguyen2024bellman}.

\textbf{Few-Step Generative Models.}
Because stepwise ODE integration is slow, distillation approaches train a student to imitate a multi-step teacher via long jumps~\citep{salimans2022progressive,luhman2021knowledge,zheng2023dfno}. CM instead learn a direct flow map via consistency property; subsequent work improves training and extends to continuous time~\citep{song2023cm,song2023ict,geng2024ect,lu2025sCM}. CTM learn maps between arbitrary times but rely on adversarial losses \citep{kim2023ctm,lai2023equivalence}. Later work removes the adversarial component with new parameterizations and losses while improving few-step fidelity~\citep{frans2025shortcut,boffi2024flow,sabour2025align}. MF~\citep{geng2025mean} trains from scratch without adversarial loss, at the cost of expensive Jacobian–vector products. Despite recent advances~\citep{lai2025principles}, training flow map models remains computationally expensive and often unstable.
}

\section{Consistency Mid-Training for Fast,  General Flow Map Learning}

\subsection{Proposed Pipeline for Flow Map Learning}
Despite recent advances, large-scale flow map training remains costly, unstable, and configuration-sensitive.
The key challenge is the lack of an oracle regression target $\bPsi_{t\to s}$: current methods rely on stop-gradients of imperfect models, yielding poor supervision and large deviations from the true flow.
To address this, we introduce a compact mid-training stage between pre-training and flow map post-training. Specifically, our pipeline incorporates the proposed \methods as a mid-training step, providing a general and cost-efficient framework for flow map learning:
\vspace{-0.15cm}
\begin{tcolorbox}[blanker, left=0mm, right=0mm, top=0mm, bottom=0mm]
  \begin{tcolorbox}[colback=white, colframe=white, boxrule=0pt, arc=0pt,
    outer arc=0pt, top=0pt, bottom=0pt, before skip=0pt, after skip=0pt]
\noindent\textbf{Stage 1: Pre-Training.} It aims to obtain a deterministic ODE sampler that transports samples from $p_{\mathrm{prior}}$ to $p_{\mathrm{data}}$, consistent with the marginals of the forward noising process $\mathcal{N}(\cdot;\alpha_t \rvx_0,\sigma_t^2\rmI)$. 
A practical choice is an off-the-shelf pre-trained diffusion model with its PF-ODE solver, as many such models are available~\citep{peebles2023dit,karras2024edm2,ma2024sit}. Alternatively, one may use a lightweight few-step flow map model supporting deterministic sampling (e.g., MF). We call these variants collectively the \emph{teacher sampler}. 
  \end{tcolorbox}

  \begin{tcolorbox}[colback=orange!10, colframe=orange!15, boxrule=0pt, arc=0pt,
    outer arc=0pt, top=0pt, bottom=0pt, before skip=0pt, after skip=0pt]
\noindent\textbf{Stage 2: Mid-Training (\method).} Efficiently learn a lightweight, trajectory-aligned proxy of the target flow map with minimal computation and stable convergence, without ad-hoc heuristics. 
\method’s loss is designed to match the objectives of post-training while using fixed, explicit regression targets supplied by the teacher {(see \Cref{eq:cmt-cm} or \Cref{eq:cmt-mf} below)}. 
Operationally, \methods learns to jump directly between points on the teacher-generated trajectory of a pre-trained model. 
Because the targets are fixed and high quality, \methods trains stably and yields a trajectory-aligned initializer.
  \end{tcolorbox}

  \begin{tcolorbox}[colback=white, colframe=white, boxrule=0pt, arc=0pt,
    outer arc=0pt, top=0pt, bottom=0pt, before skip=0pt, after skip=0pt]
\noindent\textbf{Stage 3: Post-Training.} Learn the final few-step flow-map model. 
Compared to random initialization or initialization from pre-trained diffusion models proposed by literature~\citep{geng2024ect,lu2025sCM}, the \methods initializer is trajectory-aligned, making post-training more stable, simpler, and faster (as supported by our theoretical analysis in \Cref{thm:informal-bias-formal} and \Cref{app:theory}). \methods offers a general recipe for significantly cost-efficient flow map learning.
  \end{tcolorbox}
\end{tcolorbox}
In what follows, we detail the mid-training stage with \method, first instantiating it for CM ($\bPsi_{t\to 0}$) and then extending it to the general flow map via MF ($\bPsi_{t\to s}$), {with pseudocode  in~\Cref{alg:cmt-pipeline}.}



\subsection{\methods for Learning Consistency Function}
Here, we focus on CM as the flow map post-training stage. To obtain a trajectory aligned initializer for this flow map and to motivate the design of the \method's mid-training loss, we revisit the CM oracle objective $\mathcal{L}_{\methodl{oracle}{CM}}$. 

We first propose a reinterpretation of $\mathcal{L}_{\methodl{oracle}{CM}}$ from a reverse time generative perspective, under which the objective becomes transparent. Every point $\rvx_t\sim p_t$ along a PF-ODE trajectory is uniquely determined by its terminal state $\rvx_T$. Hence,  one may sample a single terminal state $\rvx_T\sim p_{\mathrm{prior}}$ and trace its entire trajectory backward. 
Training then reduces to mapping every point on this reverse path to its single consistent origin in the data distribution $p_{\mathrm{data}}$. This yields the following equivalent formulation of the oracle loss; the proof is provided in \Cref{app:oracle}.
\begin{theorem}\label{thm:cm-oracle-is-cmt}
If $p_{\mathrm{prior}}$ matches the diffused marginal $p_T$\footnote{This holds for sufficiently large $T$ as in EDM, or with appropriate noise schedules as in flow matching. {Empirically, $p_T$ (data-dependent) and $p_{\mathrm{prior}}$ (data-free) perform identically, so we adopt $p_{\mathrm{prior}}$ in all experiments.}}, the oracle loss can be expressed as
\begin{align}\label{eq:oracle-cmt}
\mathcal{L}_{\methodl{oracle}{CM}}(\btheta) 
= \mathbb{E}_t\, \mathbb{E}_{\rvx_T \sim p_{\mathrm{prior}}} \Big[ w(t) d(\rvf_\btheta\big(\bPsi_{T \to t}(\rvx_T), t\big) , \bPsi_{T \to 0}(\rvx_T) ) \Big].
\end{align}
\end{theorem}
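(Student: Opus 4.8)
The plan is to obtain \eqref{eq:oracle-cmt} from the definition of $\mathcal{L}_{\methodl{oracle}{CM}}$ by a single change of variables in the inner expectation, relying on two structural facts about the probability flow ODE: (i) the flow map $\bPsi_{T\to t}$ transports the prior marginal onto the intermediate marginal $p_t$, and (ii) flow maps compose along trajectories. Both are consequences of the PF-ODE being deterministic with unique solutions, together with the hypothesis $p_{\mathrm{prior}} = p_T$.

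First I would record the marginal-preservation fact: since $\rvv(\cdot,\cdot)$ is the marginal velocity field of the PF-ODE, the pushforward of $p_T$ under $\bPsi_{T\to t}$ equals $p_t$ for every $t\in(0,T]$ (this is the defining property of the probability flow ODE, that its time marginals coincide with those of the forward noising process; under mild regularity on $\rvv$ it follows from the continuity equation). Because $p_{\mathrm{prior}}=p_T$ by assumption, this yields the change-of-variables identity $\mathbb{E}_{\rvx_t\sim p_t}[\phi(\rvx_t)] = \mathbb{E}_{\rvx_T\sim p_{\mathrm{prior}}}\big[\phi(\bPsi_{T\to t}(\rvx_T))\big]$ for any integrable test function $\phi$.

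Next I would establish the composition identity $\bPsi_{t\to 0}\circ\bPsi_{T\to t} = \bPsi_{T\to 0}$. Fix $\rvx_T$ and let $\gamma:[0,T]\to\R^d$ be the unique solution of $\dot\gamma(u)=\rvv(\gamma(u),u)$ with $\gamma(T)=\rvx_T$; then by definition of the flow map $\gamma(t)=\bPsi_{T\to t}(\rvx_T)$ and $\gamma(0)=\bPsi_{T\to 0}(\rvx_T)$. The restriction of $\gamma$ to $[0,t]$ is the unique solution taking value $\gamma(t)$ at time $t$, so $\bPsi_{t\to 0}(\gamma(t))=\gamma(0)$, which is exactly the claimed identity. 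Applying the change of variables with $\phi(\rvx_t)=w(t)\,d\big(\rvf_\btheta(\rvx_t,t),\,\bPsi_{t\to 0}(\rvx_t)\big)$, substituting $\bPsi_{t\to 0}(\bPsi_{T\to t}(\rvx_T))=\bPsi_{T\to 0}(\rvx_T)$ inside $d(\cdot,\cdot)$, and passing the outer expectation over $t$ through by Fubini/Tonelli gives \eqref{eq:oracle-cmt}.

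The main obstacle is a matter of hypotheses rather than technique: the change of variables is valid only if the PF-ODE genuinely preserves marginals, which requires $p_{\mathrm{prior}}=p_T$ (the stated assumption, exact in the flow-matching parametrization and asymptotically exact for large $T$ in EDM) and enough regularity on $\rvv$ for existence and uniqueness of trajectories and for the continuity equation to hold. I would therefore state these regularity conditions explicitly (or cite the standard PF-ODE marginal-matching result) so that both the pushforward identity in step two and the well-posedness of $\bPsi$ used in the composition step are licensed; the remainder is bookkeeping.
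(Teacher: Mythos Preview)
Your proposal is correct and follows essentially the same route as the paper: both arguments rest on the pushforward identity $p_t=(\bPsi_{T\to t})_\#\,p_{\mathrm{prior}}$ together with the semigroup relation $\bPsi_{t\to 0}\circ\bPsi_{T\to t}=\bPsi_{T\to 0}$, then perform the change of variables inside the expectation. The paper carries this out via the delta-function integral representation of the pushforward, whereas you invoke the abstract change-of-variables formula directly; you are also more explicit about the regularity hypotheses needed for well-posedness of the PF-ODE, which the paper leaves implicit.
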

 
Building on the reverse-time formulation in \Cref{eq:oracle-cmt}, we now
introduce the training objective of the proposed \method. {On the interval $[0,T]$, we fix a time grid $0 = t_0 < t_1 < \cdots < t_M = T$ with $M$ discretization steps.} Given a sample
$\rvx_T \sim p_{\mathrm{prior}}$, we obtain a discrete reference trajectory
$\{\hat{\rvx}_{t_i}\}_{i=0}^M$ by running a numerical ODE solver with the
pre–trained diffusion model $\rmD_\bphi$ (in EDM formulation) as the teacher
sampler, anchored at $\hat{\rvx}_{t_M} = \rvx_T$. The goal of \methods is for $\rvf_\btheta$ to match any intermediate state
$\hat{\rvx}_{t_i}$ back to its clean origin $\hat{\rvx}_{t_0}$. Training proceeds
by minimizing the following loss:
\begin{align}\label{eq:cmt-cm}
\mathcal{L}_{\methodl{\method}{CM}}(\bm{\theta})
:= \mathbb{E}_{i } \mathbb{E}_{\rvx_T\sim p_{\mathrm{prior}}}
\Bigl[
d( \rvf_{\bm{\theta}}\bigl(\hat\rvx_{t_i}, t_i\bigr),   \hat\rvx_{t_0})
\Bigr].
\end{align}

This objective is a discrete approximation of the oracle loss $\mathcal{L}_{\methodl{oracle}{CM}}$, since the solver-generated points approximate the true flow map, i.e., $\hat{\rvx}_{t_i} \approx \bPsi_{T \to t_i}(\rvx_T)$.

Since the starting states $\rvx_T \sim p_{\mathrm{prior}}$ are randomly sampled, the set of possible trajectories can be arbitrarily many. Yet, once a particular $\rvx_T$ is fixed, the corresponding trajectory is uniquely determined. In principle, \methods can thus be trained with arbitrarily many distinct trajectories, avoiding the overfitting issues that arise in standard supervised tasks.

\subsection{\method~~for Learning General Flow Map}
We now focus on the MF parameterization for the general flow map learning. MF aims to learn the average drift, defined as
$\rvh(\rvx_t, t, s) := \frac{1}{t-s}\int_s^t \rvv(\rvx_u,u)\diff u$,
which aggregates the ODE velocity over the interval $[s,t]$. We observe that this quantity can also be expressed through the flow map.  
Let $\rvx_T$ denote the initial state at time $T$ on the same PF-ODE trajectory as $\rvx_t$.  
Then
\begin{align*}
\rvh(\rvx_t, t, s) 
= \frac{1}{t-s}\Bigl(\int_s^T \rvv(\rvx_u,u)\diff u - \int_t^T \rvv(\rvx_u,u)\diff u\Bigr) 
= \frac{1}{t-s}\bigl(\bPsi_{T\to t}(\rvx_T) - \bPsi_{T\to s}(\rvx_T)\bigr).
\end{align*}

Motivated by this decomposition, \methods allows to construct a teacher–reference
trajectory $\{\hat{\rvx}_{t_i}\}$ from a prior sample
$\rvx_T \sim p_{\mathrm{prior}}$ using two possible teacher samplers. The first
employs a numerical ODE solver applied to the PF-ODE of a pre–trained flow matching model
$\rvv_\bphi$. Alternatively, since MF supports deterministic sampling, we may
use a smaller and lightweight MF model to perform multi–step deterministic
generation. Although not optimal, this model is much easier to train and still yields
a valid teacher trajectory. In both cases, the resulting trajectory provides a
feasible approximation of the oracle states,
$\bPsi_{T\to t_i}(\rvx_T) \approx \hat{\rvx}_{t_i}$.

The \methods loss for MF then encourages the average drift parametrization
$\rvh_{\btheta}$ to align with the finite differences between successive
reference states:
\begin{align}\label{eq:cmt-mf}
\mathcal{L}_{\methodl{CMT}{MF}}(\btheta)
= \mathbb{E}_{i>j} \; \mathbb{E}_{\rvx_T\sim p_{\mathrm{prior}}}
\Bigl[\bigl\|\rvh_{\btheta}(\hat\rvx_{t_i}, t_i, t_j) 
- \tfrac{\hat\rvx_{t_i} - \hat\rvx_{t_j}}{t_i - t_j}\bigr\|_2^2\Bigr].
\end{align}

Crucially, in both \Cref{eq:cmt-cm} and \Cref{eq:cmt-mf}, our formulation reduces training to a standard regression problem with a fixed target, either $\hat\rvx_0$ or $\tfrac{\hat\rvx_{t_i} - \hat\rvx_{t_j}}{t_i - t_j}$.  
The \methods loss for MF generalizes the CM case. In fact, if we fix $t_j=0$ in
$\mathcal{L}_{\methodl{CMT}{MF}}$, the loss reduces to learning a
mapping from every point on the trajectory directly to the clean data,
thereby recovering the CM formulation.

\section{Experimental Results}
\subsection{Experimental Setups}

\begin{table}[th!]
\centering
\caption{\small{Sample quality on unconditional CIFAR-10 32$\times$32 and class-conditional ImageNet 64$\times$64.}}
\label{tab:cifar_imgnet64}
\begin{minipage}{0.495\textwidth}
\fontsize{8.5pt}{9.5pt}\selectfont
{%
\begin{tabular}{l@{\hskip -5pt}c@{\hskip 5pt}c}
\multicolumn{3}{l}{\textbf{Unconditional CIFAR-10 32$\times$32}} \\
\toprule
\textbf{METHOD} & \textbf{NFE} ($\downarrow$) & \textbf{FID} ($\downarrow$) \\
\midrule
\multicolumn{3}{l}{\textbf{Diffusion Models}} \\
\midrule
EDM~\citep{karras2022edm} & 35 & 2.01 \\
\midrule
\multicolumn{3}{l}{\textbf{Joint Training}} \\
\midrule
CTM~\citep{kim2023ctm} & 1 & \textbf{1.87} \\
DMD~\citep{yin2024dmd} & 1 & 3.77 \\
SiD~\citep{zhou2024SiD} & 1 & 1.92 \\
\midrule
\multicolumn{3}{l}{\textbf{Diffusion Distillation}} \\
\midrule
DFNO~\citep{zheng2023dfno} & 1 & 3.78 \\
2-Rectified Flow~\citep{rectified_flow} & 1 & 4.85 \\
TRACT~\citep{berthelot2023tract} & 1 / 2 & 3.78 / 3.32 \\
PD~\citep{salimans2022progressive} & 1 / 2 & 8.34 / 5.58 \\
\midrule
\multicolumn{3}{l}{\textbf{Flow Map Models}} \\
\midrule
CD~\citep{song2023cm} & 1 / 2 & 3.55 / 2.93 \\
iCT~\citep{song2023ict} & 1 / 2 & 2.83 / 2.46 \\
iCT-deep~\citep{song2023ict} & 1 / 2 & 2.51 / 2.24 \\
ECT~\citep{geng2024ect} & 1 / 2 & 3.60 / 2.11 \\
{sCT~\citep{lu2025sCM}} & 1 / 2 & 2.85 / 2.06 \\
sCD~\citep{lu2025sCM} & 1 / 2 & 3.66 / 2.52 \\
Stable CT~\citep{wang2024stable} & 1 / 2 & 2.92 / 2.02 \\
VCT~\citep{silvestri2025vct} & 1 / 2 & 3.26 / 2.02 \\
{TCM~\citep{lee2024TCM}} & 1 / 2 & \textbf{2.46} / 2.05 \\
IMM~\citep{zhou2025inductive} & 1 / 2 & 3.20 / 1.98 \\
 MF~\citep{geng2024ect} & 1 & 2.92 \\
 \rowcolor{torange}
 \method\ (w/ ECT) (Ours) & 1 / 2 & 2.74 / \textbf{1.97} \\
\bottomrule
\end{tabular} %
}
\end{minipage}
\begin{minipage}{0.495\textwidth}
\fontsize{8.5pt}{9.5pt}\selectfont
{%
\begin{tabular}{l@{\hskip -5pt}c@{\hskip 5pt}c}
\multicolumn{3}{l}{\textbf{Class-Conditional ImageNet 64$\times$64}} \\
\toprule
\textbf{METHOD} & \textbf{NFE} ($\downarrow$) & \textbf{FID} ($\downarrow$) \\
\midrule
\multicolumn{3}{l}{\textbf{Diffusion Models ($^*$Auto-Guidance)}} \\
\midrule
RIN~\citep{jabri2022rin} & 1000 & 1.23 \\
EDM2~\citep{karras2024edm2} & 63 & 1.33 \\
EDM2$^*$~\citep{karras2024edm2autog} & 63 & \textbf{1.01} \\
\midrule
\multicolumn{3}{l}{\textbf{Joint Training}} \\
\midrule
DMD2~\citep{yin2024dmd2} & 1 & \textbf{1.28} \\
SiD~\citep{zhou2024SiD} & 1 & 1.52 \\
CTM~\citep{kim2023ctm} & 1 / 2 & 1.92 / 1.73 \\
\midrule
\multicolumn{3}{l}{\textbf{Auto-Guidance Diffusion Distillation}} \\
\midrule
AYF~\citep{sabour2025align} & 1 / 2 & 2.98 /  \textbf{1.25} \\
ECD~\citep{geng2024ect} & 1 / 2 & 2.24 /  1.50 \\
 \rowcolor{torange}
 \method\ (w/ ECD) (Ours) & 1 / 2 & \textbf{1.78} /  {1.32} \\
\midrule
\multicolumn{3}{l}{\textbf{Flow Map Models}} \\
\midrule
CD~\citep{song2023cm} & 1 / 2 & 6.20 / 4.70 \\
iCT~\citep{song2023ict} & 1 / 2 & 4.02 / 3.20 \\
iCT-deep~\citep{song2023ict} & 1 / 2 & 3.25 / 2.77 \\
ECT~\citep{geng2024ect} & 1 / 2 & 2.49 / 1.67 \\
sCT~\citep{lu2025sCM} & 1 / 2 & 2.04 / 1.48 \\
sCD~\citep{lu2025sCM} & 1 / 2 & 2.44 / 1.66 \\
MultiStep-CD~\citep{heek2024multistepcm} & 1 / 2 & 3.20 / 1.90 \\
Stable CT~\citep{wang2024stable} & 1 / 2 & 2.42 / 1.55 \\
VCT~\citep{silvestri2025vct} & 1 / 2 & 4.93 / 3.07 \\
{TCM~\citep{lee2024TCM}} & 1 / 2 & 2.20 / 1.62 \\
\rowcolor{torange}
  \method\ (w/ ECT) (Ours) & 1 / 2 & \textbf{2.02} / \textbf{1.48} \\
\bottomrule
\end{tabular} %
}
\end{minipage}
\end{table}
\textbf{Datasets \& Setup.}
We evaluate on CIFAR10 at 32$\times$32~\citep{krizhevsky2009cifar}, AFHQv2 at 64$\times$64, FFHQ at 64$\times$64~\citep{karras2022edm}, and ImageNet~\citep{deng2009imagenet} at 64$\times$64, 256$\times$256, and 512$\times$512. The low-resolution \emph{unconditional} datasets (CIFAR10, AFHQv2, FFHQ) follow EDM/ECT/VCT protocols~\citep{karras2022edm,geng2024ect,silvestri2025vct}. For ImageNet 64$\times$64 and 512$\times$512, we adopt EDM2~\citep{karras2024edm2}, training the 512$\times$512 case in the latent space of Stable Diffusion (SD) autoencoders. For ImageNet 256$\times$256, we follow MF and SiT~\citep{geng2025mean,ma2024sit}, also in the SD latent space. Detailed experimental setup is provided in Appendix~\ref{appendix:exp_detail}.

\textbf{Teachers and Solvers for \method's Mid-Training.}
Across datasets, \method{} employs different teacher–solver pairs based on availability: 
EDM w/ DPM-Solver++~\citep{lu2022dpm,lu2022dpm++} on \{CIFAR10, AFHQv2, FFHQ\}; 
EDM2 w/ DPM-Solver++ on \{ImageNet 64$\times$64, 512$\times$512\}; 
and MF-B/4 on ImageNet 256$\times$256. 
DPM-Solver++ uses 16 solver steps and MF-B/4 uses 8 with fixed discretization. {We adopt these solvers with their default log-SNR sampling schedulers due to their strong theoretical foundations and keep all other settings as simple as possible.}
For mid-training of EDM/EDM2-related settings, we apply a learned perceptual loss to align \method’s predictions with the teacher’s high-fidelity outputs, specifically using LPIPS~\citep{zhang2018lpips} in pixel space and ELatentLPIPS~\citep{kang2024distilling} in latent space. We use squared $\ell^2$ loss for MF. 
We discuss \methods loss function selection motivation in Appendix \ref{appendix:loss}.

\textbf{Post-Training of Flow Map Model.}
After mid-training, we respectively train a flow map with: ECT on \{CIFAR-10, AFHQv2, FFHQ\}, ECT/ECD on ImageNet 64$\times$64,
MF on ImageNet 256$\times$256,
and ECD on ImageNet 512$\times$512 for stability in very high dimensions~\citep{lu2025sCM}. We remark that ECT and MF serve as strong representatives of flow map models: ECT builds on EDM’s backbone, while MF builds on FM’s backbone, both of which are widely used in practice.
These post-training methods are chosen due to public availability and representativeness on respective datasets.

\textbf{Metrics.}
We report FID~\citep{heusel2017fid}, data cost in millions of images (Mimgs) for data efficiency, and training A100 GPU (80GB) time for convergence speed. 
Specifically, the data cost is computed via batch size per iteration $\times$ total iterations, where each batch is randomly drawn from the entire dataset at each iteration, i.e., the data cost equals the number of backpropagated inputs.

\subsection{Mid-Training with \methods Improves Flow Map Post-Training}
In this section, we benchmark \methods against baselines across datasets. Because ECT and related distillation methods (e.g., the distilled variant of MF) typically start post-training from the weights of a pre-trained diffusion model, the most direct and fair evaluation of our mid-training strategy is to compare ECT vs. \methods (w/ ECT) and MF vs. \methods (w/ MF). Beyond these direct comparisons, we also report broader results against other baselines in terms of both FID and training cost.

\textbf{CIFAR-10 and ImageNet 64$\times$64.} Table~\ref{tab:cifar_imgnet64} shows that \methods (w/ ECT) attains SOTA with 2-step FID=1.97 on CIFAR-10, surpassing the teacher EDM’s 2.01 with 35 steps and outperforming prior CMs and flow maps. 
On ImageNet 64$\times$64, \methods (w/ ECT) achieves the best FIDs among all CMs and flow maps; our 2-step FID=1.48 is a satisfactory choice over EDM’s 1.33 with 63 NFEs. 
Further, we follow AYF~\citep{sabour2025align} to distill a strong EDM2 with Auto-Guidance to surpass the vanilla flow map model, improving the 1/2-step FID to 1.78/1.32 for \methods (w/ ECD). 

For \textbf{CIFAR-10}, the total budget is 51.2 Mimgs (38.4 Mimgs mid-training, 12.8 Mimgs post-training). Under the same 51.2 Mimgs, \methods beats ECT and VCT. sCT uses more budget; under the same 51.2 Mimgs, sCT's 1-step FID is 3.09 vs.\ our 2.74, demonstrating better data efficiency. Stable CT requires 153.6 Mimgs yet still trails our 51.2 Mimgs results. TCM reaches comparable SOTA but at 332.8 Mimgs. 
For \textbf{ImageNet 64$\times$64}, \methods (w/ ECT) uses only 12.8 Mimgs (6.4 mid, 6.4 post), whereas ECT and Stable CT use 102.4 Mimgs, sCT 819.2 Mimgs, and TCM 143.36 Mimgs. Compared to sCT, we save up to $98\%$ training images. Because \methods has lower per-iteration cost than sCT (no expensive JVP), we also cut GPU time by $98\%$ while achieving SOTA.
Meanwhile, \methods (w/ ECD) uses only 19.2 Mimgs (6.4 mid, 12.8 post), while vanilla ECD and AYF use 102.4 Mimgs. 

Overall, \methods is both SOTA and highly data-efficient. Notably, \methods with ECT/ECD as the post-training flow map consistently outperforms vanilla ECT/ECD under equal or lower budgets, underscoring the importance of our mid-training. Additional CIFAR-10 evidence appears in \Cref{subsec:Importance_of_Midtraining}.

\textbf{AFHQv2 and FFHQ 64$\times$64.} Table~\ref{tab:ffhq_afhq} compares CMs under a 51.2 Mimgs budget. Since AFHQv2 and FFHQ are also unconditional, we directly transfer the CIFAR-10 hyperparameters. \methods achieves the best 1-step and 2-step FIDs, and with ECT as post-training again outperforms ECT at the same budget, highlighting both hyperparameter robustness and the critical role of \methods across datasets.

\begin{table}[t]
\centering
\caption{\small{Comparison between various CMs given the identical 51.2 million training images budget on AFHQv2 64$\times$64 and FFHQ 64$\times$64. Our \methods achieve the best 1-step and 2-step FIDs.}}
\label{tab:ffhq_afhq}
\begin{minipage}{0.495\textwidth}
\fontsize{8.25pt}{9.25pt}\selectfont
{%
\begin{tabular}{lcc}
\multicolumn{2}{l}{\textbf{Unconditional AFHQv2 64$\times$64}} \\
\toprule
\textbf{METHOD} & \textbf{NFE} ($\downarrow$) & \textbf{FID} ($\downarrow$) \\
\midrule
iCT~\citep{song2023ict} & 1 / 2 & 5.40 / 2.92  \\
ECT~\citep{geng2024ect} & 1 / 2 & 3.89 / 2.61  \\
VCT~\citep{silvestri2025vct} & 1 / 2 & 3.84 / 2.71 \\
\rowcolor{torange}
\methods (w/ ECT) (Ours) & 1 / 2 & \textbf{3.28} / \textbf{2.34} \\
\bottomrule
\end{tabular} %
}

\end{minipage}
\begin{minipage}{0.495\textwidth}
\fontsize{8.25pt}{9.25pt}\selectfont
{%
\begin{tabular}{lcc}
\multicolumn{2}{l}{\textbf{Unconditional FFHQ 64$\times$64}} \\
\toprule
\textbf{METHOD} & \textbf{NFE} ($\downarrow$) & \textbf{FID} ($\downarrow$) \\
\midrule
iCT~\citep{song2023ict} & 1 / 2 & 5.80 / 4.02 \\
ECT~\citep{geng2024ect} & 1 / 2 & 5.99 / 4.39 \\
VCT~\citep{silvestri2025vct} & 1 / 2 & 5.47 / 4.16 \\
\rowcolor{torange}
\methods (w/ ECT) (Ours) & 1 / 2 & \textbf{3.89} / \textbf{2.75} \\
\bottomrule
\end{tabular} %
}
\end{minipage}
\vspace{-0.4cm}
\end{table}

\textbf{ImageNet 512$\times$512.} Table~\ref{tab:imagenet-512} reports results along with training costs (Mimgs) for flow map models and \method, and their comparison with diffusion models. For post-training, we use ECD, making CMT directly comparable to vanilla ECD. \methods substantially outperforms vanilla ECD, again confirming the critical role of mid-training. 
Overall, \methods achieves the best 2-step FID=1.84 and a competitive 1-step FID=3.46 at dramatically $93\%$ lower cost than previous sCD. 
The same advantage holds for GPU time, since sCD requires costly JVP computations per iteration. Random samples generated by the trained \methods (w/ ECD) are shown in \Cref{fig:img-512-samples}.

\begin{table}[t]
\centering
\caption{\small{Sample quality on class-conditional ImageNet 512$\times$512 of diffusion models and flow map models. The cost comparisons for flow map models and \methods are measured under millions of training images (Mimgs).
}}
\label{tab:imagenet-512}
\begin{minipage}[t]{0.46\textwidth}
\fontsize{8.3pt}{9.3pt}\selectfont
\begin{tabular}{l@{\hskip 5pt}c@{\hskip 6pt}c}
\toprule
\textbf{METHOD} & \textbf{NFE} ($\downarrow$) & \textbf{FID} ($\downarrow$) \\ 
\midrule
\multicolumn{3}{l}{\textbf{Diffusion Models ($^*$Auto-Guidance)}} \\
\midrule
RIN~\citep{jabri2022rin} & 1000 & 3.95 \\
EDM2~\citep{karras2024edm2} & 63$\times$2 & {1.81}  \\
EDM2$^*$~\citep{karras2024edm2autog} & 63$\times$2 & \textbf{1.25} \\
DiT~\citep{peebles2023dit} & 250$\times$2 & 3.04 \\
Large-DiT~\citep{zhang2023largedit} & 250$\times$2 & 2.52\\
SiT~\citep{ma2024sit} & 250$\times$2 & 2.62 \\ 
\bottomrule
\end{tabular}
\end{minipage}
\begin{minipage}[t]{0.52\textwidth}
\fontsize{8.3pt}{9.3pt}\selectfont
\begin{tabular}{l@{\hskip 2pt}c@{\hskip 8pt}c@{\hskip 8pt}c}
\toprule
\textbf{METHOD} & \textbf{NFE} ($\downarrow$) & \textbf{FID}  ($\downarrow$)  & \textbf{Cost}  ($\downarrow$)  \\
\midrule
\multicolumn{3}{l}{\textbf{Flow Map Models}} \\
\midrule 
ECT~\citep{geng2024ect} & 1 / 2 & 9.98 / 6.28 & 204.8 \\
ECD~\citep{geng2024ect} & 1 / 2 &  8.47 / 3.38  & 409.6\\ 
sCT~\citep{lu2025sCM} & 1 / 2 & 4.29 / 3.76  & 204.8\\
sCD~\citep{lu2025sCM} & 1 / 2 & \textbf{2.28} / {1.88} & 409.6 \\
AYF~\citep{sabour2025align} & 1 / 2 & 3.32 / {1.87} & 102.4 \\
 \rowcolor{torange}
\methods (w/ ECD) (Ours) & 1 / 2 &  {{3.38}} / \textbf{1.84} & \textbf{28.8} \\
\bottomrule
\end{tabular} %
\end{minipage}
\vspace{-0.2cm}
\end{table}

\begin{table}[]
\centering
\caption{\small{Comparison between \methods and MF on ImageNet 256$\times$256.}}
\label{tab:mf_main_exp}
\footnotesize
\begin{tabular}{cccccc}
\toprule
\textbf{Method}                & \textbf{Pre-Training} & \textbf{Mid-Training} & \textbf{Post-Training} & \textbf{Total Time ($\downarrow$)} & \textbf{FID ($\downarrow$)} \\ \midrule
MF-XL/2 (Rand. Init.)  & 0            & 0            & 1520 hours    & 1520 hours & 3.43       \\ 
MF-XL/2 (SiT Init.)   & $>$1520 hours    & 0            & 357 hours      & $>$1520 hours  & 4.52    \\  \rowcolor{torange}
CMT-XL/2  & 38 hours     & 135 hours      & 587 hours       & \textbf{760 hours}    & \textbf{3.34}       \\ \bottomrule
\end{tabular}
\end{table}

\textbf{Remark: Simplicity and Stability of \methods on CM-family Experiments.} 
In mid-training, \methods learns a flow map proxy via an explicit regression target, avoiding stop-gradients, custom time sampling, and handcrafted weights $w(t)$, leading to stable training. 
Although reference trajectories require a diffusion ODE solver, few-step ($\sim$16) methods such as DPM-Solver++ suffice, and the multistep scheme reuses past states $\hat\rvx_{t_k}$ to build later ones, keeping overhead low.  On ImageNet $64\times64$ and $512\times512$, this yields clear gains: \methods outperforms sCT/sCD while cutting mid- and post-training data and GPU time by 93\%–98\% (see \Cref{appendix:exp_detail,appendix:speed}).


\methods initialization simplifies post-training of ECT/ECD models on $\{$CIFAR-10, ImageNet 64$\times$64/512$\times$512, AFHQv2, FFHQ$\}$, removing ad hoc tricks such as $\Delta t$ annealing, loss reweighting, custom time sampling, EMA variants, and nonlinear learning-rate schedules. The resulting pipeline consistently outperforms ECT/ECD baselines and converges faster with minimal engineering.
\begin{wrapfigure}{r}{0.55\textwidth} 
  \centering
    \vspace{-0.7cm}
  \includegraphics[width=0.55\textwidth]{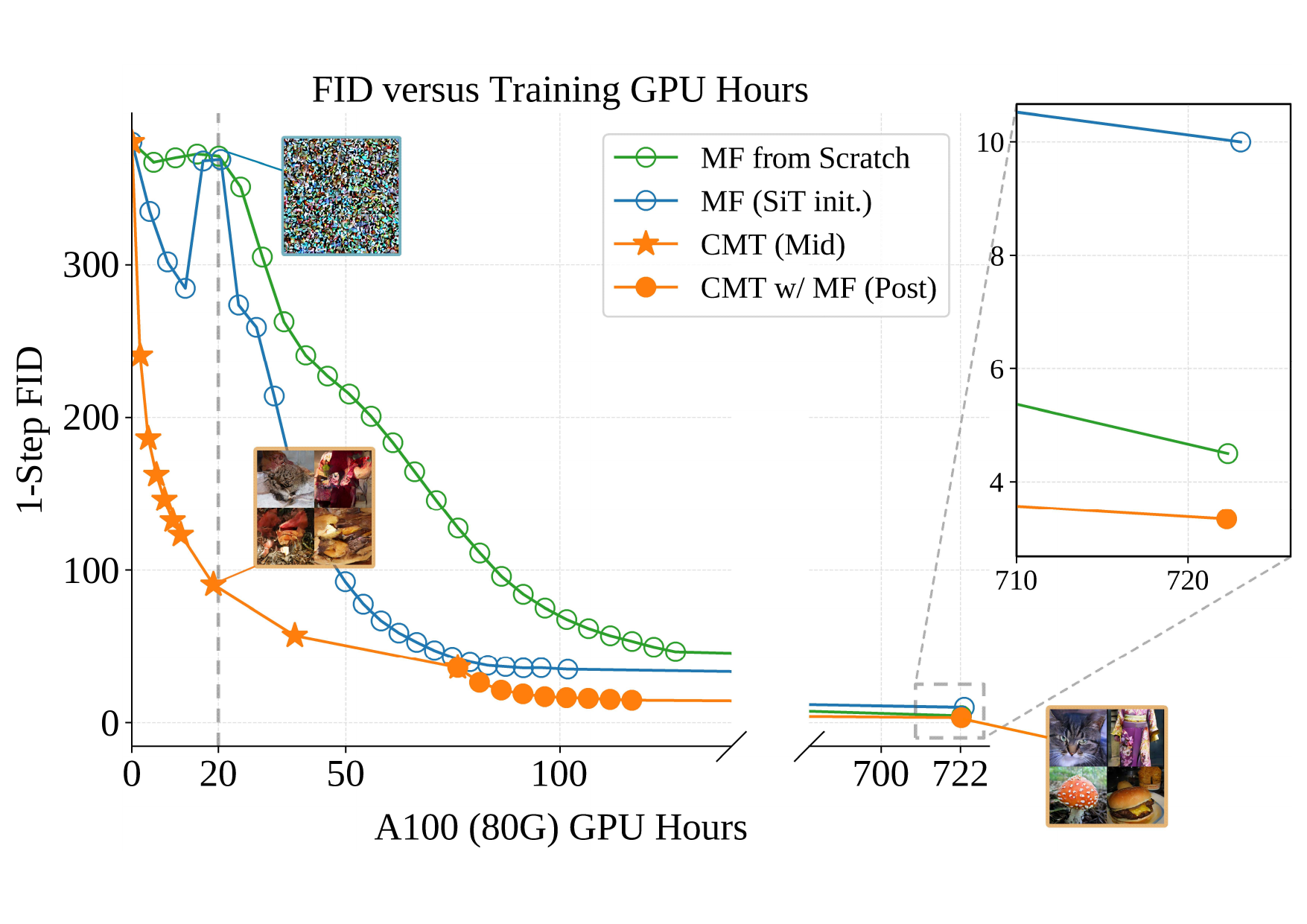} 
  \vspace{-0.7cm}
  \caption{\footnotesize{\textbf{FID vs. training time for vanilla MF and CMT (ours) on ImageNet $256{\times}256$.
} We perform mid-training starting from a randomly-initialized XL/2 model, where \methods of XL/2 size learns to match the deterministic sampler of a weaker, smaller teacher MF-B/4. The resulting mid-trained weights of \method-XL/2 are then used to initialize MF-XL/2 post-training. This initialization produces semantically meaningful samples early and drives significantly faster convergence. 
With \method's pipeline, training reaches lower FID in only half the GPU hours compared to MF trained from scratch. 
MF initialized from SiT also converges fast, but requires more than $1520$ hours of pre-training, which exceeds the cost of training MF itself.
}
}
  \label{fig:mf_fid_vs_iter}
  \vspace{-0.8cm}
\end{wrapfigure}

\textbf{ImageNet 256$\times$256: \methods Enables Flexible Teacher Samplers Beyond Diffusion.} 
We test whether \method's mid-training can use a non-diffusion teacher sampler, even if its quality is low. We compare performance on a larger MF-XL/2 model under three settings:
(1) \textbf{MF-XL/2 (Rand. Init.)}: post-train only (vanilla MF with random initialization);
(2) \textbf{MF-XL/2 (SiT Init.)}: pre-train with SiT~\citep{meanflow_pytorch} followed by post-training with its weights as initialization;
(3) \textbf{\method-XL/2}: train a small MF-B/4 for quick convergence, use it in mid-training as a teacher sampler to generate ODE trajectories for XL/2 with random initialization, then post-train MF-XL/2 initialized from the mid-trained model (see \Cref{appendix:mf_exp_details}). 

\Cref{tab:mf_main_exp} reports the pre-training, mid-training, and
post-training time together with the final FID. In particular, \method-XL/2 cuts
total training time by 50\% compared to the other two settings, while
achieving even better FID. Even though MF-B/4 is a weak teacher 
(1 / 2 / 8-step FID = 24.47 / 14.96 / 13.44), using it in \methods substantially
accelerates MF-XL/2 training, which converges faster and achieves better FID than
vanilla MF. 
By contrast, SiT pre-training at XL/2 scale requires
very long training and its weights lead to unstable MF post-training~\citep{meanflow_pytorch}, making it impractical as a pre-training or
mid-training teacher. We also provide qualitative comparisons in \Cref{fig:mf_fid_vs_iter}. After 20
GPU hours, both MF-XL/2 (Rand. Init.) and MF-XL/2 (SiT Init.) still produce noise,
whereas \methods already generates semantically meaningful images. Eventually, \methods attains superior FID with only half the total GPU hours compared to training MF from scratch.

These results show that diffusion initialization alone is insufficient and that mid-training is crucial. Using B/4 for pre-training but XL/2 for mid-training further demonstrates that \method’s mid-training is architecture-agnostic, since it directly learns a map aligned with the teacher sampler.

\begin{table}[]
\centering
\caption{\small{{Comparison between MF (initialized with random weights or pretrained SiT) and CMT (MF initialized with CMT mid-trained weights) on the MS-COCO text-to-image (T2I) dataset.}}}
\label{tab:mf_t2i}
\footnotesize
{
\begin{tabular}{cccccc}
\toprule
\textbf{Method}                & \textbf{Pre-Training} & \textbf{Mid-Training} & \textbf{Post-Training} & \textbf{Total Time ($\downarrow$)} & \textbf{1/2-Step FID ($\downarrow$)} \\ \midrule
MF (Rand. Init.)  & 0            & 0            & 60 hours    & 60 hours & 15.97 / 5.42       \\ 
MF (SiT Init.)   & 22 hours    & 0            & 38 hours      & 60 hours  & 15.55 / 5.26    \\  \rowcolor{torange}
CMT  & 22 hours     & 1 hours      & 9 hours       & \textbf{32 hours}    & \textbf{15.12 / 5.01}       \\ \bottomrule
\end{tabular}}
\vspace{-0.4cm}
\end{table}

{
\textbf{MS-COCO Text-to-Image (T2I): CMT Accelerates and Improves T2I Generation.} We follow the setup of U-ViT~\citep{bao2023all} and train MF with the MM-DiT architecture~\citep{esser2024scaling} on MS-COCO~\citep{lin2014microsoft} from scratch, and we use the validation set for FID evaluation. We consider three MF configurations:
(1) \textbf{MF (Rand. Init.)}: MF is trained only in the post-training stage, starting from random initialization;
(2) \textbf{MF (SiT Init.)}: we first pre-train a SiT model and then post-train MF initialized from the SiT weights;
(3) \textbf{CMT}: we pre-train SiT, use it as the teacher sampler in the CMT mid-training stage to generate ODE trajectories for an MF model initialized from SiT, and finally post-train MF starting from this mid-trained initializer.
Detailed hyperparameters and training settings are provided in \Cref{appendix:mscoco_exp_details}, and largely follow the ImageNet $256\times256$ configuration, since MS-COCO shares the same resolution.

\Cref{tab:mf_t2i} shows that \methods also performs well for T2I, with significantly faster convergence: it achieves the best FID while reducing total training time by about $47\%$. CMT’s 2-step FID (5.01) is also close to the teacher SiT’s 50-step FID (4.73). We further note that the relatively large 1-step FIDs observed across all variants may be attributable to limitations of the dataset rather than to CMT itself: within a fixed backbone, CMT consistently improves performance and convergence speed over alternative initialization schemes. CMT should therefore be viewed as a general recipe for achieving faster and more stable convergence, not as a modification of the backbone architecture. Together with the ImageNet $256\times256$ results, these experiments demonstrate that both a smaller MF and a SiT diffusion model can serve as CMT’s teacher and yield strong performance even in T2I generation, highlighting the generality and effectiveness of CMT across problem settings and model architectures.

%
}


\subsection{Empirical Analysis of the Proposed Mid-Training Scheme}\label{subsec:Importance_of_Midtraining}

\textbf{Importance of Mid-Training.} We further assess the importance of our mid-training stage on CIFAR-10 using ECT as the flow map model of $\bPsi_{t\to 0}$. 
To fairly isolate \method's mid-training, post-training and vanilla ECT share the same hyperparameters (constant $\Delta t=1/256$, learning rate $10^{-4}$), with all other settings held fixed. We evaluate:
\begin{itemize}[leftmargin=1.2em, labelsep=.4em, itemsep=.2ex, topsep=.2ex, parsep=0pt, partopsep=0pt]
\item \textbf{Vanilla ECT:}  Under a 51.2Mimgs budget, we obtain the final 1-step / 2-step FID as 3.54 / 2.12.
\item \textbf{\method$^\mathrm{short}$:} Short mid-training for 1.28Mimgs + long post-training for 49.92Mimgs. 
We obtain the final 1-step / 2-step FID as 3.42 / 2.11.
\item \textbf{\method$^\mathrm{long}$:} Long mid-training for 25.6Mimgs + short post-training for 25.6Mimgs. 
We obtain the final 1-step / 2-step FID as 3.30 / 2.04.
\end{itemize}
These results show that longer mid-training outperforms shorter mid-training, which in turn improves over no mid-training, further confirming the importance of \methods's mid-training stage.

\textbf{Ablation Study on Potential Alternatives for Post-Training Initialization.}
Although we have demonstrated the efficiency of \methods as a mid-training scheme, other variants are possible.
We focus on learning the flow map $\bPsi_{t\to 0}$ and highlight two straightforward alternatives.
Following our mid-training design principles, these variants aim to be stable and easy to train (e.g., stop-gradient-free regression targets) while using fewer ad-hoc tricks and hyperparameters:
\begin{align*}
\mathcal{L}_{\mathrm{var}}^{(1)}(\btheta)
:= \E_{t,\,p_t}\!\left[ \|\rvf_\btheta(\rvx_t,t)-\hat\rvx_0(\rvx_t)\|_2^2 \right], \quad
\mathcal{L}_{\mathrm{var}}^{(2)}(\btheta)
:= \E_{\rvx_T\sim p_{\mathrm{prior}}}\!\left[ \|\rvf_\btheta(\rvx_T,T)-\hat\rvx_0(\rvx_T)\|_2^2 \right],
\end{align*}
where $\hat\rvx_0(\rvx_t)$ denotes the estimate at $t{=}0$ obtained by running the solver with the pre-trained diffusion model starting from the forward-perturbed sample $\rvx_t$, and $\hat\rvx_0(\rvx_T)$ is obtained similarly by running backward from a prior sample $\rvx_T$. We refer to $\mathcal{L}_{\mathrm{var}}^{(1)}(\btheta)$ as \emph{Slow \method}: it follows the same principle as \method\ but uses only the endpoints of the ODE trajectory, ignoring intermediate states, so generating the same amount of training data requires substantially more ODE-solver inference. The loss $\mathcal{L}_{\mathrm{var}}^{(2)}(\btheta)$ is the standard \emph{knowledge distillation (KD)} objective of \citet{luhman2021knowledge}.

We compare \method’s mid-training with KD and Slow \methods on CIFAR-10, training all three under the same settings and then post-training ECT for flow-map learning for 12.8Mimgs with each initialization.
Comparing \method\ with KD, \method’s 1-step / 2-step FID (2.74 / 1.97) is significantly better than KD’s (3.54 / 2.19), confirming the benefit of learning intermediate steps.
Comparing \method\ with Slow \method, their FIDs are similar (2.74 / 1.97 vs.\ 2.75 / 1.98), but Slow \method’s mid-training costs roughly $3\times$ more GPU time because its regression targets cannot reuse intermediate ODE states.

\section{Theoretical Analysis}\label{sec:theory-main}

In mid–training, \methods learns a reliable proxy of the flow map, which then serves
as a well–aligned initialization for post–training. To quantify this effect, we analyze
the CM flow map $\bPsi_{t\to 0}$ in the post–training stage; the same reasoning applies
to general flow maps $\bPsi_{t\to s}$ such as MF.

We compare how well the surrogate CM objective tracks an oracle objective under
three initialization schemes: \methods mid–training ($\btheta_{\mathrm{\methods}}$),
a pre–trained diffusion model ($\btheta_{\mathrm{DM}}$), and random initialization
($\btheta_{\mathrm{rand}}$). For simplicity, we consider the squared distance
$d(\rvx,\rvy)=\|\rvx-\rvy\|_2^2$ and a uniform weight $w(t)\equiv 1$.
Let $\mathcal{L}_{\methodl{oracle}{CM}}(\btheta)$ and $\mathcal{L}_{\mathrm{CM}}(\btheta)$
denote the oracle and surrogate CM objectives (see \Cref{eq:oracle-cm,eq:surrogate-cm}).
We define the \emph{gradient bias} $\mathcal{B}(\btheta)
:= \big\|
\nabla_{\btheta}\mathcal{L}_{\methodl{oracle}{CM}}(\btheta)
- \nabla_{\btheta}\mathcal{L}_{\mathrm{CM}}(\btheta)
\big\|_2^2$,
and evaluate $\mathcal{B}(\btheta)$ at the initial parameters
$\btheta\in\{\btheta_{\mathrm{\methods}},\btheta_{\mathrm{DM}},\btheta_{\mathrm{rand}}\}$.
A smaller $\mathcal{B}(\btheta)$ means that SGD steps on $\mathcal{L}_{\mathrm{CM}}$
closely follow those on the oracle objective. The following informal result summarizes the worst–case bias for each scheme.
\begin{theorem}[Informal Bias Comparison]\label{thm:informal-bias-formal}
Fix a tolerance $\varepsilon>0$ and a small time step $\Delta t$.
Then:
\begin{enumerate}[label=(\roman*),leftmargin=2em, labelsep=.4em, itemsep=.2ex, topsep=.2ex, parsep=0pt, partopsep=0pt]
\item \textbf{\methods Initialization.}
If $\mathcal{L}_{\methodl{\method}{CM}}(\btheta_{\mathrm{\methods}})<\varepsilon$,
then
$\mathcal{B}(\btheta_{\mathrm{\methods}})=\mathcal{O}(\varepsilon+\Delta t^2)$.
\item \textbf{Diffusion Initialization.}
If $\mathcal{L}_{\mathrm{DM}}(\btheta_{\mathrm{DM}})<\varepsilon$, then $\mathcal{B}(\btheta_{\mathrm{DM}})
= \mathcal{O}\left(
\varepsilon+\Delta t^2+\E_t[\sigma_t^2/\alpha_t^2]
\right)
+ \E_{t,\rvx_t}
\bigl[\|\bPsi_{t\to 0}(\rvx_t)-\E[\rvx_0|\rvx_t]\|_2^2\bigr].$
\item \textbf{Random Initialization.}
For random weights $\btheta_{\mathrm{rand}}$,
$\mathcal{B}(\btheta_{\mathrm{rand}})=\mathcal{O}(1)$.
\end{enumerate}
\end{theorem}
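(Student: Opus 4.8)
All three bounds flow from one reduction: I would first rewrite the gradient bias as a squared $L^2$ discrepancy between the CM surrogate target and the oracle flow map, and then bound that discrepancy separately for each initializer. With $d(\rvx,\rvy)=\|\rvx-\rvy\|_2^2$, $w\equiv 1$, and using that at the onset of post-training the stop-gradient copy coincides with the trainable weights ($\btheta^-=\btheta$), differentiating the two losses and subtracting cancels the shared $\rvf_{\btheta}$ term and leaves
\[
\nabla_{\btheta}\mathcal{L}_{\methodl{oracle}{CM}}(\btheta)-\nabla_{\btheta}\mathcal{L}_{\mathrm{CM}}(\btheta)
=2\,\E_{t,\rvx_t}\!\Big[\big(\partial_{\btheta}\rvf_{\btheta}(\rvx_t,t)\big)^{\!\top}\big(\rvf_{\btheta}(\rvx_{t-\Delta t},t-\Delta t)-\bPsi_{t\to 0}(\rvx_t)\big)\Big].
\]
Under the regularity conditions of \Cref{app:theory} (notably a uniform operator-norm bound $\|\partial_{\btheta}\rvf_{\btheta}\|\le L$), Jensen together with Cauchy--Schwarz gives $\mathcal{B}(\btheta)\le 4L^2\,R(\btheta)$ with $R(\btheta):=\E_{t,\rvx_t}[\|\rvf_{\btheta}(\rvx_{t-\Delta t},t-\Delta t)-\bPsi_{t\to 0}(\rvx_t)\|_2^2]$, so the theorem reduces to bounding the scalar $R$ at $\btheta\in\{\btheta_{\mathrm{CMT}},\btheta_{\mathrm{DM}},\btheta_{\mathrm{rand}}\}$.

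\textbf{Peeling discretization, and the CMT case.} I would next strip the discretization error uniformly: by the semigroup identity $\bPsi_{t\to 0}=\bPsi_{(t-\Delta t)\to 0}\circ\bPsi_{t\to (t-\Delta t)}$, the local accuracy of the CT/solver sub-step ($\rvx_{t-\Delta t}=\bPsi_{t\to(t-\Delta t)}(\rvx_t)+\mathcal{O}(\Delta t^2)$), and Lipschitzness of $\bPsi_{s\to 0}$, one replaces $\bPsi_{t\to 0}(\rvx_t)$ by $\bPsi_{(t-\Delta t)\to 0}(\rvx_{t-\Delta t})$ at the cost of $\mathcal{O}(\Delta t^2)$, so (writing $s:=t-\Delta t$) $R(\btheta)=\mathcal{O}\big(\E_{s,\rvx_s}[\|\rvf_{\btheta}(\rvx_s,s)-\bPsi_{s\to 0}(\rvx_s)\|_2^2]+\Delta t^2\big)$. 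For CMT, \Cref{thm:cm-oracle-is-cmt} identifies $\mathcal{L}_{\methodl{\method}{CM}}$ as the teacher-solver discretization of $\mathcal{L}_{\methodl{oracle}{CM}}$; since the teacher solver is consistent ($\hat\rvx_{t_i}\approx\bPsi_{T\to t_i}(\rvx_T)$ and $\hat\rvx_{t_0}\approx\bPsi_{t_i\to 0}(\hat\rvx_{t_i})$ up to $\mathcal{O}(\Delta t^2)$) and, when $p_{\mathrm{prior}}=p_T$, the law of $\hat\rvx_{t_i}$ matches $p_{t_i}$, the hypothesis $\mathcal{L}_{\methodl{\method}{CM}}(\btheta_{\mathrm{CMT}})<\varepsilon$ yields $\E_{t,\rvx_t}[\|\rvf_{\btheta_{\mathrm{CMT}}}(\rvx_t,t)-\bPsi_{t\to 0}(\rvx_t)\|_2^2]=\mathcal{O}(\varepsilon+\Delta t^2)$, hence $R(\btheta_{\mathrm{CMT}})=\mathcal{O}(\varepsilon+\Delta t^2)$ and $\mathcal{B}(\btheta_{\mathrm{CMT}})=\mathcal{O}(\varepsilon+\Delta t^2)$.

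\textbf{Diffusion and random cases.} For $\btheta_{\mathrm{DM}}$, the CM boundary-condition parametrization makes $\rvf_{\btheta_{\mathrm{DM}}}$ equal to the pre-trained denoiser $\rmD_{\btheta_{\mathrm{DM}}}$ for $t$ bounded away from $0$, and $\mathcal{L}_{\mathrm{DM}}(\btheta_{\mathrm{DM}})<\varepsilon$ forces $\E_{s,\rvx_s}[\|\rmD_{\btheta_{\mathrm{DM}}}(\rvx_s,s)-\E[\rvx_0|\rvx_s]\|_2^2]=\mathcal{O}(\varepsilon)$ (the excess over the irreducible denoising variance). Splitting
\[
\rvf_{\btheta_{\mathrm{DM}}}(\rvx_s,s)-\bPsi_{s\to 0}(\rvx_s)=\big(\rmD_{\btheta_{\mathrm{DM}}}(\rvx_s,s)-\E[\rvx_0|\rvx_s]\big)+\big(\E[\rvx_0|\rvx_s]-\bPsi_{s\to 0}(\rvx_s)\big),
\]
the first bracket is $\mathcal{O}(\sqrt{\varepsilon})$ in $L^2$ and the second is exactly the non-vanishing term $\E_{t,\rvx_t}[\|\bPsi_{t\to 0}(\rvx_t)-\E[\rvx_0|\rvx_t]\|_2^2]$ appearing in the statement; an extra $\mathcal{O}(\E_t[\sigma_t^2/\alpha_t^2])$ enters from evaluating the denoiser at $\rvx_{t-\Delta t}$ rather than at the argument attached to $\rvx_t$, controlled via the conditional covariance $\Cov(\rvx_0\mid\rvx_t)=\mathcal{O}(\sigma_t^2/\alpha_t^2)$ (a Tweedie-type estimate under bounded data). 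Summing the pieces reproduces the stated bound for $\btheta_{\mathrm{DM}}$. For $\btheta_{\mathrm{rand}}$, nothing ties $\rvf_{\btheta_{\mathrm{rand}}}$ to $\bPsi$, so I would only use bounded second moments of $\rvf$ and $\bPsi$ (bounded data, standard weight initialization, bounded activations) to get $R(\btheta_{\mathrm{rand}})=\mathcal{O}(1)$, hence $\mathcal{B}(\btheta_{\mathrm{rand}})=\mathcal{O}(1)$.

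\textbf{Main obstacle.} The delicate part is the diffusion case: cleanly isolating the irreducible gap $\|\bPsi_{t\to 0}(\rvx_t)-\E[\rvx_0|\rvx_t]\|_2^2$, pinning down precisely which second-order quantity the shifted-argument mismatch produces, and showing it is $\mathcal{O}(\sigma_t^2/\alpha_t^2)$ without double-counting the gap term --- this requires careful Tweedie/conditional-variance bounds and the bounded-data hypothesis. A secondary technical point is transporting the CMT guarantee from the teacher-trajectory law to the noising marginal $p_s$, which hinges on $p_{\mathrm{prior}}=p_T$ and on quantitative solver-consistency estimates; and throughout one must ensure the hidden $\mathcal{O}$-constants depend only on the regularity parameters (Lipschitz constants of $\bPsi$ and of the parametrization, data bounds), so that the cross-initializer comparison is legitimate.
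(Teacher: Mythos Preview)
Your reduction of $\mathcal{B}(\btheta)$ to the quantity $R(\btheta)=\E_{t,\rvx_t}\big[\|\rvf_{\btheta}(\rvx_{t-\Delta t},t-\Delta t)-\bPsi_{t\to 0}(\rvx_t)\|_2^2\big]$ via a Jacobian bound is exactly what the paper does, and your treatment of the \methods\ and random cases follows the paper's lines (the paper peels the $\Delta t$ step by Lipschitz continuity of $\rvf_\btheta$ rather than by the semigroup identity, and handles the transport from the teacher-trajectory law to $p_t$ by a round-trip $\mathtt{Solver}_{T\to t}\circ\mathtt{Solver}_{t\to T}$ construction together with a coupling argument, but your plan is equivalent in spirit).

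The one place you diverge is the diffusion case, and there your route is actually \emph{tighter} than the paper's --- but your explanation of where the $\E_t[\sigma_t^2/\alpha_t^2]$ term comes from is wrong. Splitting through the posterior mean, your two pieces are $\E\|\rmD_{\btheta_{\mathrm{DM}}}-\E[\rvx_0|\rvx_s]\|_2^2\le\mathcal{L}_{\mathrm{DM}}<\varepsilon$ (by the Pythagorean decomposition of the DM loss) and the gap term $\E\|\E[\rvx_0|\rvx_s]-\bPsi_{s\to 0}(\rvx_s)\|_2^2$; together with your $\mathcal{O}(\Delta t^2)$ peeling this already gives $\mathcal{B}(\btheta_{\mathrm{DM}})=\mathcal{O}(\varepsilon+\Delta t^2)+\text{gap}$, which implies the stated (looser) bound with no further work. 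The $\sigma_t^2/\alpha_t^2$ term is \emph{not} produced by any argument shift: in the paper it appears because they route through the random sample $\rvx_0$ rather than its posterior mean, bounding $\|\rvf_{\btheta_{\mathrm{DM}}}(\rvx_t,t)-\rvx_0\|$ and $\|\rvx_0-\bPsi_{t\to 0}(\rvx_t)\|$ separately; the second of these decomposes (Lemma~\ref{thm:decomposition}) as $\E\big[\Tr\Var(\rvx_0|\rvx_t)\big]+\text{gap}$, and the conditional variance is then upper-bounded by the risk of the linear estimator $\rvx_t/\alpha_t$, which is $D\,\sigma_t^2/\alpha_t^2$. So drop the ``evaluating at a shifted argument'' justification; either keep your posterior-mean split and note that it proves a stronger bound than claimed, or, if you want to reproduce the paper's exact form, route through $\rvx_0$ instead and invoke the Pythagorean decomposition on $\E\|\rvx_0-\bPsi_{t\to 0}(\rvx_t)\|_2^2$.
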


This comparison explains why \method's mid–training provides a particularly
robust starting point: it already yields a good proxy to the oracle flow map,
so the CM gradient is nearly unbiased up to $\mathcal{O}(\varepsilon+\Delta t^2)$.
In contrast, diffusion–based initialization~\citep{geng2024ect} necessarily incurs
additional bias from the forward noising process and from the mismatch between the
PF–ODE solution $\bPsi_{t\to 0}(\rvx_t)$ and the posterior mean $\E[\rvx_0|\rvx_t]$,
while random initialization can be arbitrarily far from the oracle. We present the complete and rigorous version of this result in \Cref{thm:bias-formal}, which also covers initialization from a trained consistency distillation model. In that case, the bias contains an additional, uncontrolled discrepancy term, and in practice its training often requires extra ad–hoc stabilization, further limiting robustness compared to \method.
We refer to \Cref{app:theory} for the full bias–variance analysis and the resulting excess–risk guarantees.

\section{Conclusion}
We introduced \method, an efficient mid-training stage that learns a trajectory-consistent initialization for flow map models from teacher sampler trajectories. This simple, architecture-agnostic step stabilizes optimization, removes reliance on stop-gradient targets and ad hoc time weighting, and accelerates convergence. With \methods as initialization, flow map models such as Consistency Models and Mean Flow attain SOTA two-step FIDs across pixel and latent benchmarks while reducing training data budget and GPU time by up to 98\%. The approach makes training of flow map models more efficient and practical, and in principle, it applies to a broad class of ODE-based generative models.

\newpage
\section*{Statement on the Use of Large Language Models} 
This work made use of large language models to assist with proofreading and improving the clarity of the writing. All research ideas, theoretical development, experiments, and coding were carried out solely by the authors.

\section*{Statement on Reproducibility} We provide all necessary code and models to reproduce \method's results at \url{https://github.com/sony/cmt}. Comprehensive details of our experimental configurations are further provided in \Cref{appendix:exp_detail} to ensure faithful reproducibility.

\section*{Statement on Ethics} As with other generative models, CMT can inadvertently produce harmful or inappropriate content (e.g., violent, deepfakes, or derogatory/NSFW materials). These risks are mitigated by enforcing comprehensive safety policies with automated content screening and moderation pipelines that prevent such outputs.


\newpage

\bibliography{iclr2026_conference}

\begin{thebibliography}{57}
\providecommand{\natexlab}[1]{#1}
\providecommand{\url}[1]{\texttt{#1}}
\expandafter\ifx\csname urlstyle\endcsname\relax
  \providecommand{\doi}[1]{doi: #1}\else
  \providecommand{\doi}{doi: \begingroup \urlstyle{rm}\Url}\fi

\bibitem[Bao et~al.(2023)Bao, Nie, Xue, Cao, Li, Su, and Zhu]{bao2023all}
Fan Bao, Shen Nie, Kaiwen Xue, Yue Cao, Chongxuan Li, Hang Su, and Jun Zhu.
\newblock {All are worth words: A ViT backbone for diffusion models}.
\newblock In \emph{Proceedings of the IEEE/CVF conference on computer vision and pattern recognition}, pp.\  22669--22679, 2023.

\bibitem[Berthelot et~al.(2023)Berthelot, Autef, Lin, Yap, Zhai, Hu, Zheng, Talbot, and Gu]{berthelot2023tract}
David Berthelot, Arnaud Autef, Jierui Lin, Dian~Ang Yap, Shuangfei Zhai, Siyuan Hu, Daniel Zheng, Walter Talbot, and Eric Gu.
\newblock Tract: Denoising diffusion models with transitive closure time-distillation.
\newblock \emph{arXiv preprint arXiv:2303.04248}, 2023.

\bibitem[Boffi et~al.(2024)Boffi, Albergo, and Vanden-Eijnden]{boffi2024flow}
Nicholas~M Boffi, Michael~S Albergo, and Eric Vanden-Eijnden.
\newblock Flow map matching.
\newblock \emph{arXiv preprint arXiv:2406.07507}, 2024.

\bibitem[Deng et~al.(2009)Deng, Dong, Socher, Li, Li, and Fei-Fei]{deng2009imagenet}
Jia Deng, Wei Dong, Richard Socher, Li-Jia Li, Kai Li, and Li~Fei-Fei.
\newblock {ImageNet}: A large-scale hierarchical image database.
\newblock In \emph{IEEE Conference on Computer Vision and Pattern Recognition}, pp.\  248--255. IEEE, 2009.

\bibitem[Esser et~al.(2024)Esser, Kulal, Blattmann, Entezari, M{\"u}ller, Saini, Levi, Lorenz, Sauer, Boesel, et~al.]{esser2024scaling}
Patrick Esser, Sumith Kulal, Andreas Blattmann, Rahim Entezari, Jonas M{\"u}ller, Harry Saini, Yam Levi, Dominik Lorenz, Axel Sauer, Frederic Boesel, et~al.
\newblock Scaling rectified flow transformers for high-resolution image synthesis.
\newblock In \emph{Forty-first international conference on machine learning}, 2024.

\bibitem[Frans et~al.(2025)Frans, Hafner, Levine, and Abbeel]{frans2025shortcut}
Kevin Frans, Danijar Hafner, Sergey Levine, and Pieter Abbeel.
\newblock One step diffusion via shortcut models.
\newblock In \emph{International Conference on Learning Representations}, 2025.
\newblock URL \url{https://openreview.net/forum?id=OlzB6LnXcS}.

\bibitem[Geng et~al.(2025{\natexlab{a}})Geng, Deng, Bai, Kolter, and He]{geng2025mean}
Zhengyang Geng, Mingyang Deng, Xingjian Bai, J~Zico Kolter, and Kaiming He.
\newblock Mean flows for one-step generative modeling.
\newblock \emph{arXiv preprint arXiv:2505.13447}, 2025{\natexlab{a}}.

\bibitem[Geng et~al.(2025{\natexlab{b}})Geng, Pokle, Luo, Lin, and Kolter]{geng2024ect}
Zhengyang Geng, Ashwini Pokle, William Luo, Justin Lin, and J~Zico Kolter.
\newblock Consistency models made easy.
\newblock In \emph{International Conference on Learning Representations}, 2025{\natexlab{b}}.
\newblock URL \url{https://openreview.net/forum?id=xQVxo9dSID}.

\bibitem[Groeneveld et~al.(2024)Groeneveld, Beltagy, Walsh, Bhagia, Kinney, Tafjord, Jha, Ivison, Magnusson, Wang, et~al.]{groeneveld2024olmo}
Dirk Groeneveld, Iz~Beltagy, Evan Walsh, Akshita Bhagia, Rodney Kinney, Oyvind Tafjord, Ananya Jha, Hamish Ivison, Ian Magnusson, Yizhong Wang, et~al.
\newblock Olmo: Accelerating the science of language models.
\newblock In \emph{Proceedings of the 62nd Annual Meeting of the Association for Computational Linguistics (Volume 1: Long Papers)}, pp.\  15789--15809, 2024.

\bibitem[Heek et~al.(2024)Heek, Hoogeboom, and Salimans]{heek2024multistepcm}
Jonathan Heek, Emiel Hoogeboom, and Tim Salimans.
\newblock Multistep consistency models.
\newblock \emph{arXiv preprint arXiv:2403.06807}, 2024.

\bibitem[Heusel et~al.(2017)Heusel, Ramsauer, Unterthiner, Nessler, and Hochreiter]{heusel2017fid}
Martin Heusel, Hubert Ramsauer, Thomas Unterthiner, Bernhard Nessler, and Sepp Hochreiter.
\newblock {GANs trained by a two time-scale update rule converge to a local Nash equilibrium}.
\newblock \emph{Advances in Neural Information Processing Systems}, 30, 2017.

\bibitem[Ho et~al.(2020)Ho, Jain, and Abbeel]{ho2020denoising}
Jonathan Ho, Ajay Jain, and Pieter Abbeel.
\newblock Denoising diffusion probabilistic models.
\newblock \emph{Advances in Neural Information Processing Systems}, 33:\penalty0 6840--6851, 2020.

\bibitem[Jabri et~al.(2023)Jabri, Fleet, and Chen]{jabri2022rin}
Allan Jabri, David~J Fleet, and Ting Chen.
\newblock Scalable adaptive computation for iterative generation.
\newblock In \emph{International Conference on Machine Learning}, pp.\  14569--14589. PMLR, 2023.

\bibitem[Kang et~al.(2024)Kang, Zhang, Barnes, Paris, Kwak, Park, Shechtman, Zhu, and Park]{kang2024distilling}
Minguk Kang, Richard Zhang, Connelly Barnes, Sylvain Paris, Suha Kwak, Jaesik Park, Eli Shechtman, Jun-Yan Zhu, and Taesung Park.
\newblock Distilling diffusion models into conditional {GAN}s.
\newblock In \emph{European Conference on Computer Vision}, pp.\  428--447. Springer, 2024.

\bibitem[Karras et~al.(2022)Karras, Aittala, Aila, and Laine]{karras2022edm}
Tero Karras, Miika Aittala, Timo Aila, and Samuli Laine.
\newblock Elucidating the design space of diffusion-based generative models.
\newblock \emph{Advances in Neural Information Processing Systems}, 35:\penalty0 26565--26577, 2022.

\bibitem[Karras et~al.(2024{\natexlab{a}})Karras, Aittala, Kynk{\"a}{\"a}nniemi, Lehtinen, Aila, and Laine]{karras2024edm2autog}
Tero Karras, Miika Aittala, Tuomas Kynk{\"a}{\"a}nniemi, Jaakko Lehtinen, Timo Aila, and Samuli Laine.
\newblock Guiding a diffusion model with a bad version of itself.
\newblock \emph{Advances in Neural Information Processing Systems}, 37:\penalty0 52996--53021, 2024{\natexlab{a}}.

\bibitem[Karras et~al.(2024{\natexlab{b}})Karras, Aittala, Lehtinen, Hellsten, Aila, and Laine]{karras2024edm2}
Tero Karras, Miika Aittala, Jaakko Lehtinen, Janne Hellsten, Timo Aila, and Samuli Laine.
\newblock Analyzing and improving the training dynamics of diffusion models.
\newblock In \emph{Proceedings of the IEEE/CVF Conference on Computer Vision and Pattern Recognition}, pp.\  24174--24184, 2024{\natexlab{b}}.

\bibitem[Kim et~al.(2024)Kim, Lai, Liao, Murata, Takida, Uesaka, He, Mitsufuji, and Ermon]{kim2023ctm}
Dongjun Kim, Chieh-Hsin Lai, Wei-Hsiang Liao, Naoki Murata, Yuhta Takida, Toshimitsu Uesaka, Yutong He, Yuki Mitsufuji, and Stefano Ermon.
\newblock Consistency trajectory models: Learning probability flow {ODE} trajectory of diffusion.
\newblock In \emph{International Conference on Learning Representations}, 2024.
\newblock URL \url{https://openreview.net/forum?id=ymjI8feDTD}.

\bibitem[Kingma \& Ba(2015)Kingma and Ba]{kingma2014adam}
Diederik~P Kingma and Jimmy Ba.
\newblock Adam: A method for stochastic optimization.
\newblock In \emph{International Conference on Learning Representations (ICLR)}, 2015.

\bibitem[Krizhevsky et~al.(2009)Krizhevsky, Hinton, et~al.]{krizhevsky2009cifar}
Alex Krizhevsky, Geoffrey Hinton, et~al.
\newblock Learning multiple layers of features from tiny images.
\newblock \emph{Technical Report}, 2009.

\bibitem[Lai et~al.(2023)Lai, Takida, Uesaka, Murata, Mitsufuji, and Ermon]{lai2023equivalence}
Chieh-Hsin Lai, Yuhta Takida, Toshimitsu Uesaka, Naoki Murata, Yuki Mitsufuji, and Stefano Ermon.
\newblock On the equivalence of consistency-type models: Consistency models, consistent diffusion models, and {Fokker-Planck} regularization.
\newblock In \emph{ICML 2023 Workshop on Structured Probabilistic Inference {\&} Generative Modeling}, 2023.
\newblock URL \url{https://openreview.net/forum?id=wjtGsScvAO}.

\bibitem[Lai et~al.(2025)Lai, Song, Kim, Mitsufuji, and Ermon]{lai2025principles}
Chieh-Hsin Lai, Yang Song, Dongjun Kim, Yuki Mitsufuji, and Stefano Ermon.
\newblock The principles of diffusion models.
\newblock \emph{arXiv preprint arXiv:2510.21890}, 2025.

\bibitem[Lee et~al.(2025)Lee, Xu, Geffner, Fanti, Kreis, Vahdat, and Nie]{lee2024TCM}
Sangyun Lee, Yilun Xu, Tomas Geffner, Giulia Fanti, Karsten Kreis, Arash Vahdat, and Weili Nie.
\newblock Truncated consistency models.
\newblock In \emph{International Conference on Learning Representations}, 2025.
\newblock URL \url{https://openreview.net/forum?id=ZYDEJEvCbv}.

\bibitem[Lin et~al.(2014)Lin, Maire, Belongie, Hays, Perona, Ramanan, Doll{\'a}r, and Zitnick]{lin2014microsoft}
Tsung-Yi Lin, Michael Maire, Serge Belongie, James Hays, Pietro Perona, Deva Ramanan, Piotr Doll{\'a}r, and C~Lawrence Zitnick.
\newblock {Microsoft COCO: Common objects in context}.
\newblock In \emph{European conference on computer vision}, pp.\  740--755. Springer, 2014.

\bibitem[Lipman et~al.(2023)Lipman, Chen, Ben-Hamu, Nickel, and Le]{lipman2022flow}
Yaron Lipman, Ricky T.~Q. Chen, Heli Ben-Hamu, Maximilian Nickel, and Matthew Le.
\newblock Flow matching for generative modeling.
\newblock In \emph{The Eleventh International Conference on Learning Representations}, 2023.
\newblock URL \url{https://openreview.net/forum?id=PqvMRDCJT9t}.

\bibitem[Liu et~al.(2020)Liu, Jiang, He, Chen, Liu, Gao, and Han]{liu2019RAdam}
Liyuan Liu, Haoming Jiang, Pengcheng He, Weizhu Chen, Xiaodong Liu, Jianfeng Gao, and Jiawei Han.
\newblock On the variance of the adaptive learning rate and beyond.
\newblock In \emph{International Conference on Learning Representations}, 2020.
\newblock URL \url{https://openreview.net/forum?id=rkgz2aEKDr}.

\bibitem[Liu et~al.(2023)Liu, Gong, and Liu]{rectified_flow}
Xingchao Liu, Chengyue Gong, and Qiang Liu.
\newblock Flow straight and fast: Learning to generate and transfer data with rectified flow.
\newblock In \emph{International Conference on Learning Representations}, 2023.
\newblock URL \url{https://openreview.net/forum?id=XVjTT1nw5z}.

\bibitem[Lu \& Song(2025)Lu and Song]{lu2025sCM}
Cheng Lu and Yang Song.
\newblock Simplifying, stabilizing and scaling continuous-time consistency models.
\newblock In \emph{International Conference on Learning Representations}, 2025.
\newblock URL \url{https://openreview.net/forum?id=LyJi5ugyJx}.

\bibitem[Lu et~al.(2022)Lu, Zhou, Bao, Chen, Li, and Zhu]{lu2022dpm}
Cheng Lu, Yuhao Zhou, Fan Bao, Jianfei Chen, Chongxuan Li, and Jun Zhu.
\newblock {DPM}-{S}olver: A fast {ODE} solver for diffusion probabilistic model sampling in around 10 steps.
\newblock \emph{Advances in Neural Information Processing Systems}, 35:\penalty0 5775--5787, 2022.

\bibitem[Lu et~al.(2025)Lu, Zhou, Bao, Chen, Li, and Zhu]{lu2022dpm++}
Cheng Lu, Yuhao Zhou, Fan Bao, Jianfei Chen, Chongxuan Li, and Jun Zhu.
\newblock {DPM}-{S}olver++: Fast solver for guided sampling of diffusion probabilistic models.
\newblock \emph{Machine Intelligence Research}, pp.\  1--22, 2025.

\bibitem[Luhman \& Luhman(2021)Luhman and Luhman]{luhman2021knowledge}
Eric Luhman and Troy Luhman.
\newblock Knowledge distillation in iterative generative models for improved sampling speed.
\newblock \emph{arXiv preprint arXiv:2101.02388}, 2021.

\bibitem[Ma et~al.(2024)Ma, Goldstein, Albergo, Boffi, Vanden-Eijnden, and Xie]{ma2024sit}
Nanye Ma, Mark Goldstein, Michael~S Albergo, Nicholas~M Boffi, Eric Vanden-Eijnden, and Saining Xie.
\newblock {SiT}: Exploring flow and diffusion-based generative models with scalable interpolant transformers.
\newblock In \emph{European Conference on Computer Vision}, pp.\  23--40. Springer, 2024.

\bibitem[Nguyen et~al.(2024)Nguyen, Nguyen, and Nguyen]{nguyen2024bellman}
Bao Nguyen, Binh Nguyen, and Viet~Anh Nguyen.
\newblock Bellman optimal stepsize straightening of flow-matching models.
\newblock In \emph{The Twelfth International Conference on Learning Representations}, 2024.
\newblock URL \url{https://openreview.net/forum?id=Iyve2ycvGZ}.

\bibitem[Paszke et~al.(2019)Paszke, Gross, Massa, Lerer, Bradbury, Chanan, Killeen, Lin, Gimelshein, Antiga, et~al.]{paszke2019pytorch}
Adam Paszke, Sam Gross, Francisco Massa, Adam Lerer, James Bradbury, Gregory Chanan, Trevor Killeen, Zeming Lin, Natalia Gimelshein, Luca Antiga, et~al.
\newblock {PyTorch}: An imperative style, high-performance deep learning library.
\newblock \emph{Advances in Neural Information Processing Systems}, 32, 2019.

\bibitem[Peebles \& Xie(2023)Peebles and Xie]{peebles2023dit}
William Peebles and Saining Xie.
\newblock Scalable diffusion models with transformers.
\newblock In \emph{Proceedings of the IEEE/CVF International Conference on Computer Vision}, pp.\  4195--4205, 2023.

\bibitem[Sabour et~al.(2024)Sabour, Fidler, and Kreis]{pmlr-v235-sabour24a}
Amirmojtaba Sabour, Sanja Fidler, and Karsten Kreis.
\newblock Align your steps: Optimizing sampling schedules in diffusion models.
\newblock In Ruslan Salakhutdinov, Zico Kolter, Katherine Heller, Adrian Weller, Nuria Oliver, Jonathan Scarlett, and Felix Berkenkamp (eds.), \emph{Proceedings of the 41st International Conference on Machine Learning}, volume 235 of \emph{Proceedings of Machine Learning Research}, pp.\  42947--42975. PMLR, 21--27 Jul 2024.
\newblock URL \url{https://proceedings.mlr.press/v235/sabour24a.html}.

\bibitem[Sabour et~al.(2025)Sabour, Fidler, and Kreis]{sabour2025align}
Amirmojtaba Sabour, Sanja Fidler, and Karsten Kreis.
\newblock Align your flow: Scaling continuous-time flow map distillation.
\newblock \emph{arXiv preprint arXiv:2506.14603}, 2025.

\bibitem[Salimans \& Ho(2022)Salimans and Ho]{salimans2022progressive}
Tim Salimans and Jonathan Ho.
\newblock Progressive distillation for fast sampling of diffusion models.
\newblock In \emph{International Conference on Learning Representations}, 2022.
\newblock URL \url{https://openreview.net/forum?id=TIdIXIpzhoI}.

\bibitem[Shi et~al.(2024)Shi, Hu, Lin, and Kawaguchi]{shi2024stde}
Zekun Shi, Zheyuan Hu, Min Lin, and Kenji Kawaguchi.
\newblock Stochastic {T}aylor derivative estimator: Efficient amortization for arbitrary differential operators.
\newblock \emph{Advances in Neural Information Processing Systems}, 37:\penalty0 122316--122353, 2024.

\bibitem[Shih et~al.(2023)Shih, Belkhale, Ermon, Sadigh, and Anari]{shih2023parallel}
Andy Shih, Suneel Belkhale, Stefano Ermon, Dorsa Sadigh, and Nima Anari.
\newblock Parallel sampling of diffusion models.
\newblock \emph{Advances in Neural Information Processing Systems}, 36:\penalty0 4263--4276, 2023.

\bibitem[Silvestri et~al.(2025)Silvestri, Ambrogioni, Lai, Takida, and Mitsufuji]{silvestri2025vct}
Gianluigi Silvestri, Luca Ambrogioni, Chieh-Hsin Lai, Yuhta Takida, and Yuki Mitsufuji.
\newblock {VCT}: Training consistency models with variational noise coupling.
\newblock In \emph{Forty-second International Conference on Machine Learning}, 2025.
\newblock URL \url{https://openreview.net/forum?id=CMoX0BEsDs}.

\bibitem[Simonyan \& Zisserman(2015)Simonyan and Zisserman]{simonyan2015vgg}
Karen Simonyan and Andrew Zisserman.
\newblock Very deep convolutional networks for large-scale image recognition.
\newblock In \emph{International Conference on Learning Representations}, 2015.

\bibitem[Song et~al.(2020)Song, Meng, and Ermon]{song2020denoising}
Jiaming Song, Chenlin Meng, and Stefano Ermon.
\newblock Denoising diffusion implicit models.
\newblock In \emph{International Conference on Learning Representations}, 2020.

\bibitem[Song \& Dhariwal(2024)Song and Dhariwal]{song2023ict}
Yang Song and Prafulla Dhariwal.
\newblock Improved techniques for training consistency models.
\newblock In \emph{The Twelfth International Conference on Learning Representations}, 2024.
\newblock URL \url{https://openreview.net/forum?id=WNzy9bRDvG}.

\bibitem[Song \& Ermon(2019)Song and Ermon]{song2019generative}
Yang Song and Stefano Ermon.
\newblock Generative modeling by estimating gradients of the data distribution.
\newblock \emph{Advances in Neural Information Processing Systems}, 32, 2019.

\bibitem[Song et~al.(2021)Song, Sohl-Dickstein, Kingma, Kumar, Ermon, and Poole]{song2020score}
Yang Song, Jascha Sohl-Dickstein, Diederik~P Kingma, Abhishek Kumar, Stefano Ermon, and Ben Poole.
\newblock Score-based generative modeling through stochastic differential equations.
\newblock In \emph{International Conference on Learning Representations}, 2021.
\newblock URL \url{https://openreview.net/forum?id=PxTIG12RRHS}.

\bibitem[Song et~al.(2023)Song, Dhariwal, Chen, and Sutskever]{song2023cm}
Yang Song, Prafulla Dhariwal, Mark Chen, and Ilya Sutskever.
\newblock Consistency models.
\newblock In \emph{International Conference on Machine Learning}, pp.\  32211--32252. PMLR, 2023.

\bibitem[Wang et~al.(2025)Wang, Geng, and Li]{wang2024stable}
Fu-Yun Wang, Zhengyang Geng, and Hongsheng Li.
\newblock Stable consistency tuning: Understanding and improving consistency models.
\newblock In \emph{ICLR 2025 Workshop on Deep Generative Model in Machine Learning: Theory, Principle and Efficacy}, 2025.
\newblock URL \url{https://openreview.net/forum?id=5RoPe2ShXx}.

\bibitem[Yin et~al.(2024{\natexlab{a}})Yin, Gharbi, Park, Zhang, Shechtman, Durand, and Freeman]{yin2024dmd2}
Tianwei Yin, Micha{\"e}l Gharbi, Taesung Park, Richard Zhang, Eli Shechtman, Fredo Durand, and Bill Freeman.
\newblock Improved distribution matching distillation for fast image synthesis.
\newblock \emph{Advances in Neural Information Processing Systems}, 37:\penalty0 47455--47487, 2024{\natexlab{a}}.

\bibitem[Yin et~al.(2024{\natexlab{b}})Yin, Gharbi, Zhang, Shechtman, Durand, Freeman, and Park]{yin2024dmd}
Tianwei Yin, Micha\"el Gharbi, Richard Zhang, Eli Shechtman, Fr\'edo Durand, William~T. Freeman, and Taesung Park.
\newblock One-step diffusion with distribution matching distillation.
\newblock In \emph{Proceedings of the IEEE/CVF Conference on Computer Vision and Pattern Recognition (CVPR)}, pp.\  6613--6623, June 2024{\natexlab{b}}.

\bibitem[Yu et~al.(2025)Yu, Kwak, Jang, Jeong, Huang, Shin, and Xie]{repa}
Sihyun Yu, Sangkyung Kwak, Huiwon Jang, Jongheon Jeong, Jonathan Huang, Jinwoo Shin, and Saining Xie.
\newblock Representation alignment for generation: Training diffusion transformers is easier than you think.
\newblock In \emph{The Thirteenth International Conference on Learning Representations}, 2025.

\bibitem[Zhang et~al.(2023)Zhang, Han, Liu, Gao, Zhou, Hu, Yan, Lu, Li, and Qiao]{zhang2023largedit}
Renrui Zhang, Jiaming Han, Chris Liu, Peng Gao, Aojun Zhou, Xiangfei Hu, Shilin Yan, Pan Lu, Hongsheng Li, and Yu~Qiao.
\newblock {LLaMA-Adapter: Efficient Finetuning of Language Models with Zero-init Attention}.
\newblock \emph{arXiv preprint arXiv:2303.16199}, 2023.

\bibitem[Zhang et~al.(2018)Zhang, Isola, Efros, Shechtman, and Wang]{zhang2018lpips}
Richard Zhang, Phillip Isola, Alexei~A Efros, Eli Shechtman, and Oliver Wang.
\newblock The unreasonable effectiveness of deep features as a perceptual metric.
\newblock In \emph{Proceedings of the IEEE conference on computer vision and pattern recognition}, pp.\  586--595, 2018.

\bibitem[Zheng et~al.(2023)Zheng, Nie, Vahdat, Azizzadenesheli, and Anandkumar]{zheng2023dfno}
Hongkai Zheng, Weili Nie, Arash Vahdat, Kamyar Azizzadenesheli, and Anima Anandkumar.
\newblock Fast sampling of diffusion models via operator learning.
\newblock In \emph{International Conference on Machine Learning}, pp.\  42390--42402. PMLR, 2023.

\bibitem[Zhou et~al.(2025)Zhou, Ermon, and Song]{zhou2025inductive}
Linqi Zhou, Stefano Ermon, and Jiaming Song.
\newblock Inductive moment matching.
\newblock In \emph{International Conference on Machine Learning}, 2025.
\newblock URL \url{https://openreview.net/forum?id=pwNSUo7yUb}.

\bibitem[Zhou et~al.(2024)Zhou, Zheng, Wang, Yin, and Huang]{zhou2024SiD}
Mingyuan Zhou, Huangjie Zheng, Zhendong Wang, Mingzhang Yin, and Hai Huang.
\newblock Score identity distillation: Exponentially fast distillation of pretrained diffusion models for one-step generation.
\newblock In \emph{International Conference on Machine Learning}, pp.\  62307--62331. PMLR, 2024.

\bibitem[Zhu(2025)]{meanflow_pytorch}
Yu~Zhu.
\newblock {MeanFlow: PyTorch Implementation}.
\newblock \url{https://github.com/zhuyu-cs/MeanFlow}, 2025.
\newblock PyTorch implementation of Mean Flows for One-step Generative Modeling.

\end{thebibliography}
\bibliographystyle{iclr2026_conference}

\newpage
\appendix
\tableofcontents
\section{Relationship of Flow Map Models}\label{app:literature} 
The principled objective for learning the general flow map $\bPsi_{t\to s}$ is
to train a neural network $\rmG_{\btheta}(\rvx_t,t,s)$ by minimizing
\Cref{eq:oracle-ctm}~\citep{lai2025principles}:
\begin{align*}
    \mathcal{L}_{\methodl{oracle}{CTM}}(\btheta)
    := \mathbb{E}_{t>s}\,\mathbb{E}_{\rvx_t\sim p_t}
       \Big[w(t)\,d\big(\rmG_\btheta(\rvx_t,t,s),\,\bPsi_{t\to s}(\rvx_t)\big)\Big].
\end{align*}
In CTM, the network is parameterized in a form inspired by an Euler step:
\[
\rmG_{\btheta}(\rvx_t,t,s)
:= \frac{s}{t}\rvx_t + \frac{t-s}{t}\rvg_\btheta(\rvx_t,t,s).
\]

Since the true flow map $\bPsi_{t\to s}$ cannot be accessed directly, CTM
constructs a surrogate target using its own outputs, in the spirit of a
stop-gradient approximation (as in CM). Concretely, it replaces the oracle with
an intermediate reference:
\[
\bPsi_{t \to s}(\rvx_t)
 \approx 
\rmG_{\btheta^-}(\bPsi_{t \to u}(\rvx_t),u,s),
\quad \text{for} \quad  t>u>s,
\]
where $\bPsi_{t \to u}(\rvx_t)$ is obtained either by applying a few-step solver
to a pre-trained diffusion model (distillation), or by using CTM’s own
parameterization $\rvg_\btheta(\rvx_t,t,t)$ to generate a self-teacher trajectory.

In contrast, MF takes a different perspective: instead of directly predicting
$\bPsi_{t\to s}(\rvx_t)$, it parameterizes the network to approximate the
\emph{average drift} along the trajectory:
\[
\rvh_{\btheta}(\rvx_t,t,s)  \approx 
\rvh(\rvx_t,t,s) :=
\frac{1}{s-t}\int_t^s \rvv(\rvx_u,u)\,\diff u.
\]

Conceptually, CTM and MF share the same underlying framework, but differ in
how the learned function is parameterized. The relation can be written as
\begin{align*}
\bPsi_{t\to s}(\rvx_t)
&= \frac{s}{t}\rvx_t
   + \frac{t-s}{t}\underbrace{\Big[\rvx_t + \frac{t}{t-s}\int_t^s \rvv(\rvx_u,u)\,\diff u\Big]}_{\approx\rvg_\btheta} \\
&= \rvx_t + (s-t)\underbrace{\Big[\frac{1}{s-t}\int_t^s \rvv(\rvx_u,u)\diff u\Big]}_{\approx\rvh_\btheta}.
\end{align*}
Thus, CTM can be seen as approximating the first form via $\rvg_\btheta$, while
MF approximates the second via $\rvh_\btheta$. Their backbone choices also
differ: CTM builds on EDM, whereas MF builds on flow matching. 

Let
\[
\rvg_\btheta(\rvx_t,t,s) := \rvx_t - t \,\rvh_\btheta(\rvx_t,t,s),
\]
and take the distance to be the squared norm $d(\rvx,\rvy):=\|\rvx-\rvy\|^2$.
Substituting into \Cref{eq:oracle-ctm}, we can expand the loss term as
\begin{align}
&~d\big(\rmG_\btheta(\rvx_t,t,s), \bPsi_{t\to s}(\rvx_t)\big) \notag
\\ = &~\norm{\rmG_\btheta(\rvx_t,t,s) - \bPsi_{t\to s}(\rvx_t)}^2   \notag
\\ = &~\norm{\left(\frac{s}{t}\rvx_t + \frac{t-s}{t}\,\rvg_\btheta(\rvx_t,t,s)\right)-\left(\frac{s}{t}\rvx_t
   + \frac{t-s}{t}\Big[\rvx_t + \frac{t}{t-s}\int_t^s \rvv(\rvx_u,u)\,\diff u\Big]\right)}^2 \notag
\\ = &~\left(\frac{t-s}{t}\right)^2\norm{\rvg_\btheta(\rvx_t,t,s)-\left(\rvx_t + \frac{t}{t-s}\int_t^s \rvv(\rvx_u,u)\,\diff u\right)}^2 \label{eq:g-para}
\\ = &~\left(\frac{t-s}{t}\right)^2\norm{\left(\rvx_t - t \rvh_\btheta(\rvx_t,t,s)\right)-\left(\rvx_t + \frac{t}{t-s}\int_t^s \rvv(\rvx_u,u)\,\diff u\right)}^2 \notag
\\ = &~\left(\frac{t-s}{t}\right)^2\norm{\left(\rvx_t - t \rvh_\btheta(\rvx_t,t,s)\right)-\left(\rvx_t + \frac{t}{t-s}\int_t^s \rvv(\rvx_u,u)\,\diff u\right)}^2 \notag
\\ = &~\left(t-s\right)^2\norm{ \rvh_\btheta(\rvx_t,t,s)- \left(\frac{1}{s-t}\int_t^s \rvv(\rvx_u,u)\diff u \right)}^2 \label{eq:h-para}
\end{align}
\Cref{eq:g-para,eq:h-para}  show that the two
parameterizations are tightly connected. In particular,
\begin{equation}\label{eq:g-h-para}
    \frac{1}{t^2}
\Bigg\|\rvg_\btheta(\rvx_t,t,s)-\Big(\rvx_t + \frac{t}{t-s}\int_t^s \rvv(\rvx_u,u) \diff u\Big)\Bigg\|^2
= \Bigg\|\rvh_\btheta(\rvx_t,t,s)-\Big(\frac{1}{s-t}\int_t^s \rvv(\rvx_u,u) \diff u\Big)\Bigg\|^2.
\end{equation}

Hence, the CTM and MF training losses are mathematically related, differing
only by a multiplicative constant. Moreover, in both cases setting $s=0$
recovers the CM scenario, where each state is mapped directly to the clean data. Based on this observation, we will focus our theoretical analysis on $\bPsi_{s\to0}$ mostly (\Cref{app:theory}), noting that the same arguments extend naturally to the general $\bPsi_{s\to t}$, including the MF case.

\clearpage

\section{Experimental Details}
\label{appendix:exp_detail}
\subsection{Summary of the CMT Pipeline and Algorithms}

\begin{algorithm}[H]
\caption{CMT Pipeline for Fast Flow Map Learning (CM or MF)}
\label{alg:cmt-pipeline}
\centering
\begin{algorithmic}
\State \textbf{Input:} 
\State \hspace{0.7em} flow map type $\mathtt{flow\_map\_type} \in \{\texttt{CM}, \texttt{MF}\}$,
time grid $0 = t_0 < t_1 < \cdots < t_M = T$,
\State \hspace{0.7em} pre-trained teacher sampler weights $\bphi$,
student flow map weights $\btheta$,
\State \hspace{0.7em} mid-training learning rate $\eta_{\text{mid}}$,
post-training learning rate $\eta_{\text{post}}$.

\vspace{0.4em}
\State \textbf{Stage 1: Pre-training (teacher sampler).}
\If{teacher sampler not already available}
    \If{$\mathtt{flow\_map\_type} = \texttt{CM}$}
        \State Train diffusion model $\rmD_\bphi$ with a standard EDM-style objective.
        \State Construct PF-ODE sampler (e.g.\ DPM-Solver++) from $\rmD_\bphi$.
    \ElsIf{$\mathtt{flow\_map\_type} = \texttt{MF}$}
        \State Train a flow-matching model $\rvv_\bphi$ or a smaller MF model $\rvh^{\mathrm{teacher}}_\bphi$ in the usual way.
        \State Construct a deterministic sampler (e.g., a Heun solver with the PF-ODE determined by $\rvv_\bphi$), or an MF multi-step solver based on $\rvh^{\mathrm{teacher}}_\bphi$.
    \EndIf
\EndIf
\State Freeze $\bphi$ and treat the resulting sampler as the teacher sampler $\texttt{TeacherSampler}(\bphi; \cdot)$.

\vspace{0.4em}
\Statex
\State \textbf{Stage 2: CMT's Mid-Training.}
\State Initialize $\btheta$ (optionally $\btheta \gets \bphi$ when architectures match).
\Repeat
    \State Sample prior noise $\rvx_T \sim p_{\mathrm{prior}}$.
    \State Generate teacher reference trajectory
    \State \hspace{2.1em} $\{\hat{\rvx}_{t_i}\}_{i=0}^M 
      \gets \texttt{TeacherSampler}(\bphi; \rvx_T)$.
    \If{$\mathtt{flow\_map\_type} = \texttt{CM}$}
        \Comment{Special flow map $\bPsi_{t \to 0}$}
        \State Compute CMT loss as in \Cref{eq:cmt-cm}
        \State \hspace{2.1em}
        $\mathcal{L}_{\methodl{\method}{CM}}(\btheta)
          = \sum_{i=0}^M d\bigl(\rvf_{\btheta}(\hat{\rvx}_{t_i}, t_i), \hat{\rvx}_{t_0}\bigr)$.
    \ElsIf{$\mathtt{flow\_map\_type} = \texttt{MF}$}
        \Comment{General flow map $\bPsi_{t \to s}$}
        \State Compute CMT loss as in \Cref{eq:cmt-mf}
        \State \hspace{2.1em}
        $\mathcal{L}_{\methodl{\method}{MF}}(\btheta)
          = \sum_{i>j}
            \bigl\|
              \rvh_{\btheta}(\hat{\rvx}_{t_i}, t_i, t_j)
              - \tfrac{\hat{\rvx}_{t_i} - \hat{\rvx}_{t_j}}{t_i - t_j}
            \bigr\|_2^2$.
    \EndIf
    \State Update CMT parameters
    \State \hspace{2.1em} 
      $\btheta \gets \btheta - \eta_{\text{mid}} \nabla_{\btheta} \mathcal{L}_{\methodl{\method}{CM}}(\btheta)$; or  $\btheta \gets \btheta - \eta_{\text{mid}} \nabla_{\btheta} \mathcal{L}_{\methodl{\method}{MF}}(\btheta)$.
\Until{CMT converges (trajectory-aligned initializer $\btheta$ obtained)}

\vspace{0.4em}
\State \textbf{Stage 3: Post-Training of Flow Map Model.}
\State Use the converged CMT mid-trained weights $\btheta$ as the initialization of the flow map model.
\If{$\mathtt{flow\_map\_type} = \texttt{CM}$}
    \Comment{CM-style post-training for $\bPsi_{t \to 0}$, e.g.\ ECT/ECD}
    \Repeat
        \State Sample training batch (data or noise) and times $t$.
        \State Compute CM post-training loss $\mathcal{L}_{\text{CM}}(\btheta)$ as in \Cref{eq:surrogate-cm}.
        \State Update $\btheta \gets \btheta - \eta_{\text{post}} \nabla_{\btheta} \mathcal{L}_{\text{CM}}(\btheta)$.
    \Until{post-training converges}
\ElsIf{$\mathtt{flow\_map\_type} = \texttt{MF}$}
    \Comment{MF post-training for general $\bPsi_{t \to s}$}
    \Repeat
        \State Sample training batch and pairs $(t,s)$ with $t > s$.
        \State Compute MF post-training loss $\mathcal{L}_{\text{MF}}(\btheta)$ as in \Cref{eq:surrogate-mf}.
        \State Update $\btheta \gets \btheta - \eta_{\text{post}} \nabla_{\btheta} \mathcal{L}_{\text{MF}}(\btheta)$.
    \Until{post-training converges}
\EndIf

\vspace{0.4em}
\State \textbf{Output:} Learned few-step flow map $\bPsi_{t \to 0}$ (CM) or $\bPsi_{t \to s}$ (MF) with parameters $\btheta$.
\end{algorithmic}
\end{algorithm}

\subsection{CIFAR-10, AFHQv2, and FFHQ}
We use the variance-preserving (VP) formulation and DDPM++ model structure in Score-SDE~\citep{song2020score}, which is also adopted in the teacher EDM diffusion model~\citep{karras2022edm}.

For the \methods mid-training stage, we utilize a third-order DPM-solver++~\citep{lu2022dpm++} with 16 NFEs to generate the ODE trajectory, achieving an FID of 2.14/2.25/2.99 compared to FIDs of 1.97/1.96/2.39 on CIFAR-10/AFHQv2/FFHQ, respectively, under an abundant 79 NFEs. The good FID under just 16 steps ensures the sample quality while making \methods fast since the ODE-solver across steps cannot be parallelized without additional care \cite{shih2023parallel}.
We use the same batch size of 128, 0.2 dropout rate, and RAdam optimizer~\citep{liu2019RAdam} as the ECT stage later. We almost keep the same hyperparameters as the latter ECT, but make the following changes. We choose a 2e-4 learning rate for mid-training, which linearly decays to zero until the end of optimization. The EMA $\beta = 0.999$ since \methods is stable to ensure faster convergence.
The loss metric for \methods is LPIPS~\citep{zhang2018lpips}, and we use the simplest unit weighting.

For the ECT stage, we adopt the same hyperparameters as the original ECT setting~\citep{geng2024ect} on CIFAR-10 but keep the $\Delta t$ fixed to 1/4096, 1/1024, and 1/512 on CIFAR-10, AFHQv2, and FFHQ, respectively. We use the same 1e-4 learning rate but decay it linearly to zero until the end of optimization. This simplifies the complicated $\Delta t$ annealing trick in ECT.
The choice of $\Delta t$ in our setting is quite straightforward. We search for the smallest $\Delta t$ that will not trigger a loss spike during the first several iterations.

\subsection{ImageNet 64×64}

\subsubsection{Experimental Details}
We use the EDM2-XL~\citep{karras2024edm2} model setting.

For the \methods mid-training stage, we use a third-order DPM-solver++~\citep{lu2022dpm++} with 16 NFEs to generate the ODE trajectory.
We do not use classifier-free guidance (CFG) to accelerate the trajectory generation.
Our 16-NFE FID is 1.56 compared with the EDM2's best 1.33 under 63 NFEs, ensuring a good teacher for mid-training.
We use the same hyperparameters, including batch size, dropout rate, Adam optimizer~\citep{kingma2014adam}, etc., as the ECT stage later, but make the following modifications. We choose a 7e-4 learning rate for mid-training, which linearly decays to zero until the end of optimization. The EMA $\beta = 0.9999$. The loss metric for \methods is LPIPS~\citep{zhang2018lpips}, and we use the simplest unit weighting. We train for 6.4 Mimgs.

For the ECT stage, we primarily adopt the same hyperparameters as the original ECT setting~\citep{geng2024ect} on ImageNet 64$\times$64 with the XL size and a batch size of 128, while simplifying the following hyperparameters.
We keep the $\Delta t$ fixed to 1/512 instead of the original ECT's complex annealing trick. We use an initial learning rate of 1e-4 decaying linearly to zero at the end of optimization, which is simpler than the quadratic decay in the original ECT and EDM2.
Furthermore, we just use a simple vanilla EMA with $\beta=0.9999$ instead of the power function post-hoc EMA in ECT and EDM2.
This simplifies various tricks in ECT. We conduct ECT for another 6.4 Mimgs.

For ECD with the Auto-Guidance~\citep{karras2024edm2autog} augmented EDM2, we start from the mid-trained CMT checkpoint. We keep the $\Delta t$ fixed to 1/256 and the learning rate fixed to 1e-4. We conduct ECD for 12.8 Mimgs.

\subsubsection{Cost Details}

We compare training cost and data budget for ECT, ECD, sCT, AYF, and \methods here.

ECT and \methods require the standard EDM2 diffusion pre-training, with a batch size of 2048 and a total of 327680 iterations. Hence, the total training data budget is 671088640 $\approx$ 671.1 Mimgs. AYF uses a batch size of 2048 and a total of 524288 iterations, leading to about 1073.7 Mimgs diffusion pre-training cost. 
sCT requires a TrigFlow diffusion pre-training, with a batch size of 2048 and a total of 540000 iterations. Hence, the total training data budget is 1105920000 $\approx$ 1105.9 Mimgs.

The total pre-training, mid-training, and post-training data budget costs of all methods are summarized in Table \ref{tab:appendix_64_cost}. Our 98\% (CMT w/ ECT over sCT) and 81.25\% (CMT w/ ECD over AYF) training data budget reduction includes both mid-training and post-training. In other words, we compare CMT's mid-training + post-training total budget with other methods' post-training budget.
sCT's TrigFlow-based EDM2 is just reproducing vanilla EDM2, and the teacher diffusion quality is almost the same. And we are focusing on flow map learning, but not the diffusion model pre-training part. Thus, for EDM2-related experiments, including ImageNet 64$\times$64 and 512$\times$512, we focus on comparing the mid-training + post-training costs.

\begin{table}[htbp]
\centering
\resizebox{0.8\textwidth}{!}{%
\begin{tabular}{l@{\hskip 5pt}c@{\hskip 6pt}c@{\hskip 6pt}c@{\hskip 6pt}c}
\toprule
\textbf{Method} & \textbf{Pre-Training} & \textbf{Mid-Training } & \textbf{Post-Training} & \textbf{FID} ($\downarrow$) \\
\midrule
ECT~\citep{geng2024ect}        & 671.1  & 0   & 102.4 & 2.49 / 1.67 \\
ECD~\citep{geng2024ect}        & 671.1  & 0   & 102.4 & 2.24 / 1.50 \\
sCT~\citep{lu2025sCM}          & 1105.9 & 0   & 819.2 & 2.04 / 1.48 \\
AYF~\citep{sabour2025align}    & 1073.7 & 0   & 102.4 & 2.98 / 1.25 \\
CMT (w/ ECT) \textit{(Ours)}   & 671.1  & 6.4 & 6.4   & 2.02 / 1.48 \\
CMT (w/ ECD) \textit{(Ours)}   & 671.1  & 6.4 & 12.8  & 1.78 / 1.32 \\
\bottomrule
\end{tabular}
}
\caption{ImageNet 64$\times$64: Pre-, mid-, and post-training data costs (in Mimgs).}
\label{tab:appendix_64_cost}
\end{table}


Furthermore, we summarize \method's time reduction and speedup compared to the baselines in \Cref{tb:runtime-comparison-64}, where we compare the A100 (80G) GPU time for training ECT, ECD, sCT, and \method. We compare with ECT and ECD since they are the post-training methods in our \method. Meanwhile, we compare with the competitive sCT. We conduct experiments to measure per-iteration time for every method, and compute the total training time as total iterations $\times$ per-iteration time. 
\begin{table}[ht!]
\centering
\begin{tabular}{l@{\hskip 6pt}l@{\hskip 8pt}c@{\hskip 8pt}c@{\hskip 8pt}c@{\hskip 8pt}c}
\toprule
\textbf{Method} & \textbf{Baseline} & \textbf{Baseline (hrs)} & \textbf{Ours (hrs)} & \textbf{Reduction} & \textbf{Speedup} \\
\midrule
\methods w/ ECT & ECT  & 1280  & 180 & 85.9\% & 7.11$\times$ \\
\methods w/ ECT & sCT  & 13312 & 180 & 98.6\% & 73.96$\times$ \\
\methods w/ ECD & ECD  & 1664  & 308 & 81.5\% & 5.40$\times$ \\
\methods w/ ECD & sCD  & 16000 & 308 & 98.1\% & 51.95$\times$ \\
\bottomrule
\end{tabular}
\caption{ImageNet 64$\times$64: Comparison of training time reduction and speedup for four cases:  (1) \methods w/ ECT \& ECT; (2)  \methods w/ ECT \& sCT; (3) \methods w/ ECD \& ECD; and (4) \methods w/ ECD \& sCD. }
\label{tb:runtime-comparison-64}
\end{table}

We emphasize that the reported best performances of ECT and sCT are achieved by initializing their flow map models from pre-trained diffusion models, which have similar generation quality as the teacher model that \methods uses for trajectory creation. Hence, in our comparison, the teacher model’s training cost is excluded. Overall, we observe that flow map training with \methods (including both mid- and post-training stages) achieves an 80\%--98\% reduction in training time compared to training a flow map model alone.

\subsection{ImageNet 256$\times$256}\label{appendix:mf_exp_details}
We follow SiT~\citep{ma2024sit} and Mean Flow~\citep{geng2025mean} for this setting.

Regarding MF from scratch, we directly use the default setting in the original MF paper \cite{geng2025mean} and follow the PyTorch \cite{paszke2019pytorch} implementation \cite{meanflow_pytorch}. The efficient forward-mode JVP \cite{shi2024stde} is used to maximize MF training efficiency. 

Regarding MF initialized by SiT, we follow \cite{meanflow_pytorch} for a two-stage post-training, starting with MF without CFG for stability and then switching to the default MF training with CFG. However, we found that this approach still diverges at some point during optimization and cannot be mitigated by changing the random seed and restarting. Furthermore, if one directly tunes the MF initialized by SiT with CFG, then the optimization directly diverges, and the gradient explodes at the very beginning. These observations all point to the instability of SiT initialized MF, i.e., the diffusion initialization.

Regarding \method, the post-training MF hyperparameter is kept the same as vanilla MF except that we reduce the batch size from 256 to 64. Since \methods can stabilize training by providing a better initialization, there is no need to use a large batch size to stabilize training as in MF from scratch. 
For pre-training a tiny and efficient MF-B/4, we use the MF-B/4 training hyperparameters in the original MF paper \cite{geng2025mean} but change the CFG-related hyperparameters the same as the post-training.
For mid-training, we generate the reference ODE trajectory with the pre-trained MF-B/4 with eight uniform steps between 0 and 1.
We also use a constant learning rate of 1e-4 and do not use any weighting trick, and use squared $\ell_2$ loss. We use four random samples to generate trajectories, and each sample provides 28 pairs of $(\hat{\rvx}_{t_i}, \hat{\rvx}_{t_j)}$. With this batch size, we conduct mid-training for 200k iterations.
We found that the key is to use the same CFG scale for all the stages. MF with various CFG scales during training has a different ODE trajectory.
Therefore, it is imperative to match the pre-, mid-, and post-training stage CFG scale.
Otherwise, one would obtain inferior results due to the trajectory mismatch during different stages.

\subsection{ImageNet 512$\times$512}
\subsubsection{Experimental Details}
\textbf{ELatentLPIPS}. 
We follow the standard approach to train a VGG~\citep{simonyan2015vgg} for ELatentLPIPS. 
We train VGG for 100 epochs with SGD. The initial learning rate is 0.1 and decays at the 30th, 60th, and 90th epochs with a 0.1 decay rate. The batch size is 256.
The resulting VGG achieves 95\% top1 accuracy on the train set and 64\% validation top1 accuracy.
Then, this VGG should have been fine-tuned on the BAPPS~\citep{zhang2018lpips} data to learn human perception. However, we do not take this step to ensure a fair comparison with other baselines, i.e., we keep the training data as ImageNet only and do not rely on additional data that other CMs do not.

\textbf{\method}. We mainly transfer our ImageNet 64$\times$64 hyperparameters since they all follow the EDM2 setting. We highlight the difference below. We do not use dropout to stabilize the training. We use the XXL model size.

\textbf{\methods Mid-Training}. 
We use a third-order DPM-solver++~\citep{lu2022dpm++} with 16 NFEs to generate the ODE trajectory.
We do not use classifier-free guidance (CFG) to accelerate the trajectory generation.
We choose a 2e-4 learning rate for mid-training, which linearly decays to zero until the end of optimization. The EMA $\beta = 0.999$. The loss metric for \methods is ELatentLPIPS~\citep{kang2024distilling}, and we use the simplest unit weighting. We train for 12.8 Mimgs with a batch size of 128.

\textbf{\methods Post-Training's ECD}. 
We use ECD as post-training to distill the EDM2 Auto-Guidance~\citep{karras2024edm2autog} model of Size XXL.
We keep the $\Delta t$ fixed to 1/1024 instead of the original ECD's complex annealing trick. We use a constant learning rate of 1e-4. The batch size is 128. The total training budget is 12.8 Mimgs.

\subsubsection{Cost Details}
Similar to the ImageNet 64$\times$64 case, we compare various methods' training data budget cost and training time. Table \ref{tab:appendix_512_cost} shows the training data budget, where we achieve 93\% lower cost than the sCD and 71\% lower cost than the AYF. We report H100 GPU training time in Table \ref{tb:runtime-comparison-512}, where we used a better GPU for this higher-dimensional generation task with a larger model.
Table \ref{tb:runtime-comparison-512} demonstrates that \methods (including both mid-
and post-training stages) achieves an 75\%–92.8\% reduction in training time compared to training a
flow map model alone.

\begin{table}[htbp]
\centering
\resizebox{0.85\textwidth}{!}{%
\begin{tabular}{l@{\hskip 5pt}c@{\hskip 6pt}c@{\hskip 6pt}c@{\hskip 6pt}c}
\toprule
\textbf{Method} & \textbf{Pre-Training} & \textbf{Mid-Training } & \textbf{Post-Training} & \textbf{FID} ($\downarrow$) \\
\midrule
ECT~\citep{geng2024ect}        &  939.5 & 0   & 204.8 &  9.98 / 6.28 \\
ECD~\citep{geng2024ect}        & 939.5  & 0   & 409.6 & 8.47 / 3.38 \\
sCT~\citep{lu2025sCM}          & 770.0 & 0   & 204.8 & 4.29 / 3.76 \\
sCD~\citep{lu2025sCM}          &  770.0 & 0   & 409.6 & 2.28 / 1.88 \\
AYF~\citep{sabour2025align}    & 2147.5  & 0   & 102.4 & 3.32 / 1.87 \\
CMT (w/ ECD) \textit{(Ours)}   & 939.5  & 12.8 & 16 & 3.38 / 1.84 \\
\bottomrule
\end{tabular}
}
\caption{ImageNet 512$\times$512: Pre-, mid-, and post-training data costs (in Mimgs).}
\label{tab:appendix_512_cost}
\end{table}

\begin{table}[ht!]
\centering
\resizebox{0.82\textwidth}{!}{%
\begin{tabular}{l@{\hskip 6pt}l@{\hskip 8pt}c@{\hskip 8pt}c@{\hskip 8pt}c@{\hskip 8pt}c}
\toprule
\textbf{Method} & \textbf{Baseline} & \textbf{Baseline (hrs)} & \textbf{Ours (hrs)} & \textbf{Reduction} & \textbf{Speedup} \\
\midrule
\methods w/ ECD & ECT  & 1611.18 & 403.63 & 75.0\% & 3.99$\times$ \\
\methods w/ ECD & sCT  & 2339.88  & 403.63 & 82.7\% & 5.80$\times$ \\
\methods w/ ECD & ECD  & 4643.99  & 403.63 & 91.3\% & 11.51$\times$ \\
\methods w/ ECD & sCD  & 5591.74 & 403.63 & 92.8\% & 13.85$\times$ \\
\bottomrule
\end{tabular}
}
\caption{ImageNet 512$\times$512: Comparison of training time reduction and speedup for four cases:  (1) \methods w/ ECD \& ECT; (2)  \methods w/ ECD \& sCT; (3) \methods w/ ECD \& ECD; and (4) \methods w/ ECD \& sCD. }
\label{tb:runtime-comparison-512}
\end{table}

\subsection{MS-COCO Text-to-Image (T2I)}\label{appendix:mscoco_exp_details}
The SiT teacher model is trained with REPA~\citep{repa} for 50k iterations, achieving a 50-step FID of 4.73 using a second-order Heun ODE solver, and the CFG scale is 2. All other models are also trained with REPA. The SiT teacher additionally serves as the pre-training stage.

Regarding MF from scratch, we mainly reuse the default setting in our ImageNet 256$\times$256 MF experiment. But we set the MF's effective CFG scale to 2 and the CFG trigger interval as $[0,1]$ since the SiT teacher uses the same CFG hyperparameters. We specifically set $\kappa = 0.5$ and $\omega=1$.
Regarding MF initialized by SiT and CMT with MF, we use the same MF training configuration for post-training.

For mid-training in \method, we generate the reference ODE trajectory with the pre-trained SiT with an 8-step Heun solver.
We also use a constant learning rate of 1e-4 and do not use any weighting trick, and use squared $\ell_2$ loss. We use four random samples to generate trajectories, and each sample provides 28 pairs of $(\hat{\rvx}_{t_i}, \hat{\rvx}_{t_j)}$, inducing a total $28 \times 4 = 112$ batch size per iteration.

\section{Training Speed and Memory Cost}\label{appendix:speed}

\subsection{Empirical Runtime Comparison}\label{appendix:speed-empirical}

We report the running speed of \method, CT, and CD. For ImageNet 512$\times$512, we used a single H100 GPU, while for other datasets, we tested on a single A100 GPU with 80 GB (81920 MiB) of memory. 
We chose the simple ECT and ECD as representatives for comparison. CMs with additional tricks may incur larger costs. We adopt a second-order Heun or a first-order Euler solver in CD.
The training hyperparameters, especially the batch size, are kept the same as in the main results.
For easy speed comparison, we normalize our method's speed to 1 unit, where a larger number means a lower speed.

\begin{table}[htbp]
\centering
\begin{tabular}{l@{\hskip 5pt}c@{\hskip 6pt}c@{\hskip 6pt}c@{\hskip 6pt}c@{\hskip 6pt}c}
\toprule
\textbf{Dataset} & \textbf{Batch} & \textbf{\method} & \textbf{CT} ($\downarrow$) & \textbf{CD-Euler} ($\downarrow$) & \textbf{CD-Heun} ($\downarrow$) \\
\midrule
CIFAR-10               & 128 & 1 & 0.79 & 0.98 & 1.17 \\
AFHQv2 \& FFHQ         & 128 & 1 & 0.85 & 1.05 & 1.25 \\
ImageNet 64$\times$64  &  32 & 1 & 0.80 & 0.92 & 1.04 \\
ImageNet 512$\times$512&  16 & 1 & 0.68 & 0.83 & 0.98 \\
\bottomrule
\end{tabular}
\end{table}

The memory costs of all methods are similar, as they all involve one backpropagation step.
But CT has the smallest memory cost, \methods is the second, and CD has the largest memory cost. This is because CT does not require any additional teacher network, while \methods requires one unguided teacher. Lastly, CD requires two additional types of nets: guided and unguided teachers.

\subsection{Analysis of Computational Cost in CM}

\paragraph{Empirical Runtime Comparison with \method, CD, and CT.}

The important factor in \method's wall-clock time is the number of ODE solver steps (NFEs): steps along the trajectory are sequential, so higher NFE directly increases wall time. While solver-parallelization may ease this bottleneck~\citep{shih2023parallel}, we target fast training under prevailing practice by operating in the low-NFE regime. 

Throughout, we fix $\mathrm{NFE}=16$, which we found to be a sweet spot: it provides sufficiently accurate supervision while keeping \methods only slightly slower than CT. We use the third-order multistep DPM-Solver++~\citep{lu2022dpm++} for CM-style teachers, as it's stable and effective at low NFEs. 
The multistep scheme also reuses previously computed states $\hat\rvx_{t_k}$ to construct later states $\hat\rvx_{t_i}$, further reducing overhead.
Two error sources matter: (i) the \emph{distillation fit} error due to nonzero training loss and (ii) the \emph{teacher discretization} error from small NFE; empirically, (i) dominates. 

With $\mathrm{NFE}=16$, DPM-Solver++ attains FID typically within $0.2$ of the best large-NFE setting across all datasets, indicating diminishing returns beyond $16$. 
Consequently, \methods with $16$ NFEs preserves speed while maintaining competitive quality. 
Concretely, \methods attains training efficiency comparable to CD (which requires teacher inference) and continuous CT or MF requiring student JVP. Further, \method's cost is also close to that of discrete CT~\citep{lu2025sCM}. Per iteration, \methods is only 15\%–25\% slower than discrete CT~\citep{geng2024ect}, where we used Easy CT and Easy CD~\citep{geng2024ect}'s framework for time evaluation.
However, \methods converges faster in far fewer iterations than these alternatives in the entire training loop; thus, the wall-clock runtime is lower. This advantage is clear on ImageNet $64\times64$ and $512\times512$, where \methods outperforms sCT/sCD while reducing training data and GPU cost by 93\%–98\%. We also cut 50\% of the training GPU time compared with vanilla MF while outperforming in FID.

Overall, because it provides a stronger proxy of the flow-map trajectory during mid-training, it substantially accelerates the subsequent flow-map post-training (e.g., CT). 

If teacher trajectories are pre-generated, training with \methods reduces to a single backpropagated student evaluation per pair, which is the fastest regime and is used by prior distillation work~\citep{zheng2023dfno}. However, pre-generation requires extra preparation and storage; to keep the setup simple and comparable, all our experiments run the ODE solver on-the-fly during training.

\paragraph{Theoretical NFEs Comparison with \methods, CD, and CT.}
We compare \method, CD, and CT by teacher function evaluations (NFEs), student forwards, and student backpropagations. Costs are normalized per training pair, where a pair is one input–target term in the loss.

In CMT, each teacher trajectory $\{\hat\rvx_{t_i}\}_{i=0}^{M}$ yields $M$ pairs $(\hat\rvx_{t_i},t_i)\mapsto \hat\rvx_{0}$ for $i=1,\dots,M$. Let $M\!\ge\!k$ be the number of steps from $t_M$ to $t_0$, $k$ the multistep order, and $s$ the NFE cost per bootstrap step used to initialize the first $k{-}1$ history points (e.g., $s{=}1$ for Euler, $s{=}2$ for Heun). An explicit $k$-step solver then incurs one new teacher evaluation per step thereafter. Hence
\[
\text{NFEs}_{\text{traj}}
\;=\; s\,(k{-}1)\;+\;\bigl(M-(k{-}1)\bigr)
\;=\; M + (s{-}1)(k{-}1),
\]
and the \emph{per-pair} teacher cost for CMT is
\[
\text{Teacher NFEs per pair (CMT)} \;=\; 1 \;+\; \frac{(s{-}1)(k{-}1)}{M}.
\]
In CD and CT, each pair corresponds to a single sampled time $t$. If $q$ denotes teacher NFEs for the one-step teacher update used inside CD (e.g., $q{=}1$ Euler, $q{=}2$ Heun), then CD has \(\text{Teacher NFEs per pair}=q\); CT has none.

The above yields the following per-pair cost summary:
\[
\begin{aligned}
\text{\bf CMT:}\quad & \text{Teacher NFEs} = 1 + \frac{(s-1)(k-1)}{M},
&& \text{Student} = 1~\text{fwd} + 1~\text{bwd},\\[2pt]
\text{\bf CD:}\quad & \text{Teacher NFEs} = q,
&& \text{Student} = 1~\text{fwd} + 1~\text{bwd},\\[2pt]
\text{\bf CT:}\quad & \text{Teacher NFEs} = 0,
&& \text{Student} = 2~\text{fwd} + 1~\text{bwd}.
\end{aligned}
\]

We now instantiate the parameters to match our experimental setup. With $M{=}16$ and $k\in\{2,3\}$,
\[
\text{CMT teacher NFEs per pair} =
\begin{cases}
1, & s{=}1~\text{(Euler warm-up)},\\[2pt]
1 + \tfrac{k-1}{16} \in [1.06,\,1.12], & s{=}2~\text{(Heun warm-up)}.
\end{cases}
\]
Thus, relative to CD: CMT matches CD when $q{=}1$ and $s{=}1$; it is only $+6\%\!\sim\!12\%$ higher when $q{=}1$, $s{=}2$, $k\in\{2,3\}$; and it is cheaper than CD when $q{=}2$ for these $M,k$. These accounting predictions align closely with the empirical measurements reported earlier in this subsection, further supporting the efficiency of \methods.

In short, we analyze the cost per input–target pair. In \method, one teacher-generated trajectory yields many pairs by matching intermediate states to the clean target, whereas CD and CT generate each pair independently from a sampled time. Per pair, \methods needs one teacher call plus one student forward and one backpropagation; CD needs one teacher call plus the same student cost; CT needs no teacher call but two student forwards and one backpropagation. Hence, \methods is roughly as costly as CD and only slightly slower than CT, consistent with our empirical findings (Appendix~\ref{appendix:speed-empirical}). 
Meanwhile, \methods achieves near–unit teacher cost per pair while using a single student forward, making it a lightweight and effective choice for the mid-training stage.

\paragraph{Theoretical NFEs Comparison with \methods and Slow \methods Variant.}
Mathematically, both the CMT loss $\mathcal{L}_{\text{CMT-CM}}$ in \Cref{eq:cmt-cm} and the Slow CMT variant $\mathcal{L}_{\mathrm{var}}^{(1)}$ are discrete-time approximations to the same oracle loss $\mathcal{L}_{\text{oracle-CM}}$ in \Cref{eq:surrogate-mf}. However, they differ crucially in how the teacher-generated supervision targets are constructed. In CMT, we first draw $\mathbf{x}_T \sim p_{\mathrm{prior}}$, run the teacher ODE solver once from $T$ down to $0$ to obtain a full trajectory $\{\hat{\mathbf{x}}_{t_i}\}_{i=0}^M$, and then use the single endpoint $\hat{\mathbf{x}}_{t_0}$ as a shared target for all intermediate states $(\hat{\mathbf{x}}_{t_i}, t_i)$ on that trajectory. Thus, one teacher solve yields $M$ supervised pairs $(\hat{\mathbf{x}}_{t_i}, t_i) \mapsto \hat{\mathbf{x}}_{t_0}$. In contrast, in Slow CMT (based on $\mathcal{L}_{\mathrm{var}}^{(1)}$), we sample $(\mathbf{x}_t, t)$ and, for each such pair, independently run the teacher solver from $\mathbf{x}_t$ to $t=0$ to obtain $\hat{\mathbf{x}}_0(\mathbf{x}_t)$. Each supervised pair $(\mathbf{x}_t, t) \mapsto \hat{\mathbf{x}}_0(\mathbf{x}_t)$ therefore requires its own teacher integration. The two objectives coincide in the continuous-time limit, but their discrete implementations use teacher trajectories in fundamentally different ways.

Because of this difference, the mid-training cost of Slow CMT is significantly higher for a fixed number of supervised pairs. Let $M$ denote the number of discretization steps on $[0,T]$, and let $\mathrm{NFEs}_{\text{traj}}$ be the teacher function evaluations required for one trajectory. In CMT, a single teacher trajectory is amortized over $M$ pairs, so the teacher cost per training pair is $\mathrm{NFEs}_{\text{traj}}/M = \mathcal{O}(1)$. In Slow CMT, each training pair requires a fresh teacher solve, so the teacher cost per pair is $\mathcal{O}(\mathrm{NFEs}_{\text{traj}})$. Consequently, to train on the same number of supervised pairs, Slow CMT would require roughly a factor of $M$ more teacher evaluations (and thus GPU time) than CMT; conversely, under a fixed teacher-compute budget, Slow CMT can only afford about $1/M$ as many pairs. In our experimental configuration, this leads to an overhead of roughly $3\times$ in wall-clock GPU time, which is what we refer to in the paper when stating that the Slow CMT mid-training stage is about three times more expensive. We did not perform an exhaustive sweep that precisely equalizes GPU hours between the two variants during the review period, but the existing experiments already show that CMT achieves better FID at substantially lower teacher cost, which is exactly the mid-training efficiency advantage we aim to demonstrate.

\section{Discussion on \methods Loss Metric}\label{appendix:loss}
\textbf{Loss Metric: LPIPS}. The metric is crucial to \method's performance. The LPIPS metric~\citep{zhang2018lpips} measuring perceptual similarity is known to align more closely with human vision.
Optimizing LPIPS loss helps models generate images that are perceptually similar to the original because it penalizes differences in a feature space that aligns better with human visual processing, rather than $L_2$ loss in pixel space.
The features are derived from VGG~\citep{simonyan2015vgg}, which is initially pretrained for ImageNet classification in torchvision~\citep{paszke2019pytorch}, and then fine-tuned on human perceptual judgments (BAPPS dataset) to better reflect human judgments of image similarity.
\methods generates high-quality supervision signals using high-order multistep ODE solvers, providing accurate and stable labels. To encourage the student model to closely match the teacher's output, we employ the LPIPS loss. Moreover, since \methods provides fixed and stable labels, the training process becomes inherently robust, obviating the need for additional robust loss functions such as the Huber loss used in iCT~\citep{song2023ict}. This also eliminates the burden of tuning extra hyperparameters associated with such losses.
While minimizing the $L_2$ loss yields outputs with low pixel-wise error relative to the teacher's predictions, it often results in blurry images that fail to capture perceptual fidelity. This is because $L_2$ loss penalizes deviations uniformly across all pixels, disregarding the spatial and structural cues that are critical for human visual perception.

\textbf{Why ECT/iCT Uses Huber/$L_1$ Loss?} The optimization objectives and training dynamics of ECT/iCT and \methods differ fundamentally. \methods leverages high-quality, fixed teacher labels generated via accurate numerical solvers, providing a reliable supervision signal throughout training. In contrast, ECT and iCT rely on self-generated guidance, where the model learns from its own predictions. In such self-training settings, the use of perceptual losses like LPIPS may introduce additional bias, as the supervision signal is inherently noisy and evolving.

\textbf{Latent LPIPS Loss}. 
LPIPS operates exclusively in pixel space, whereas latent CM faces the challenge of lacking a refined metric and must resort to traditional $L_2$.
ELatentLPIPS~\citep{kang2024distilling} trains an LPIPS metric in the autoencoder-dependent latent space. The idea is still to first train a VGG~\citep{simonyan2015vgg} and then fine-tune it on BAPPS~\citep{zhang2018lpips} in the latent space.
For our latent space experiments, we train a VGG following their setup on the ImageNet datasets, but do not fine-tune on the additional BAPPS dataset to ensure a fair comparison with other methods. In other words, \methods do not resort to additional datasets while using the same train set as other baselines.

\textbf{\methods with MF uses Squared $\ell_2$ Loss}. This is because MF is a general flow map requiring mapping between any time steps, but not just the initial time corresponding to the clean data. Hence, the label in \methods with MF can be noisy data within the trajectory, rendering the LPIPS loss inapplicable. Hence, we resort to the common squared $\ell_2$ loss.

\section{Samples Generated by \methods on ImageNet 512$\times$512}

To illustrate the visual quality of \method, we show two step samples generated by \method (with ECD) trained on 512$\times$512 in \Cref{fig:img-512-samples}.
\begin{figure}
    \centering
    \includegraphics[width=\linewidth]{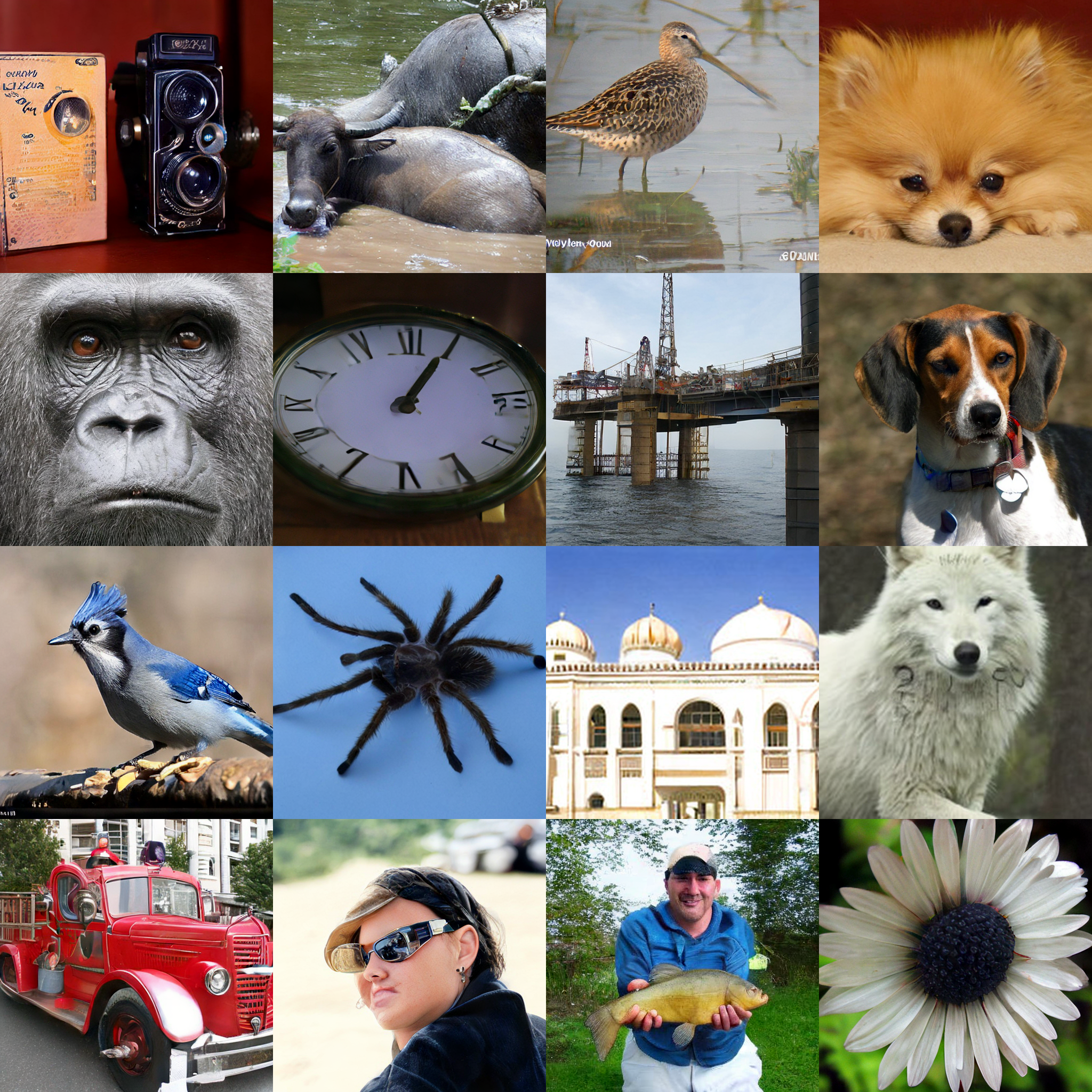}
    \caption{\textbf{Two-Step Generated Images by \method.} Using the trained \methods (w/
ECD) on 512$\times$512, we achieve the best two-step FID of 1.84, at 93\% lower cost than previous sCD.
}
    \label{fig:img-512-samples}
\end{figure}

\section{Theoretical Analysis of \methods }\label{app:theory}

\subsection{Oracle Loss and \method's Approximation}\label{app:oracle}

\paragraph{The Minimizer of \Cref{eq:oracle-cm}.}
We first show that the optimizer of \Cref{eq:oracle-cm} recovers the oracle flow map $\bPsi_{t \to 0}$.
\begin{proposition}[Oracle CM minimizer]
\label{prop:oracle-cm-min}
Assume:
(i) $d:\mathbb R^D\times\mathbb R^D\to[0,\infty)$  satisfies $d(\mathbf y,\mathbf z)\ge 0$ and $d(\mathbf y,\mathbf z)=0$ iff $\mathbf y=\mathbf z$;
(ii) $\rvf_\btheta(\rvx_t,t)$ $\E\|\rvf_\btheta(\rvx_t,t)\|_2^2<\infty$. Then any minimizer of
\[
\mathcal{L}_{\methodl{oracle}{CM}}(\btheta)
= \E_{t} \E_{\rvx_t\sim p_t} \Big[w(t)  d\big(\rvf_{\btheta}(\rvx_t,t), \bPsi_{t\to 0}(\rvx_t)\big)\Big]
\]
satisfies
\[
\rvf_{\btheta^*}(\rvx_t,t)=\bPsi_{t\to 0}(\rvx_t)
\quad\text{for a.e. } \rvx_t \sim p_t \text{ and } t.
\]
If, in addition, $d( \cdot ,\mathbf z)$ is strictly convex for each fixed $\mathbf z$, then this minimizer is unique (a.e.).
\end{proposition}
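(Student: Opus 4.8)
The plan is the standard pointwise/variational minimization argument. First I would note that the integrand is nonnegative: $w(t)\ge 0$ and, by assumption (i), $d(\cdot,\cdot)\ge 0$, so $\mathcal{L}_{\methodl{oracle}{CM}}(\btheta)\ge 0$ for every $\btheta$. Then I would exhibit a configuration attaining the value $0$, namely the measurable choice $\rvf(\rvx_t,t)=\bPsi_{t\to 0}(\rvx_t)$, which makes $d\big(\rvf(\rvx_t,t),\bPsi_{t\to 0}(\rvx_t)\big)=0$ identically; the integrability assumption (ii) (and the implicit assumption that the function class used to minimize is rich enough to represent $\bPsi_{t\to 0}$, or equivalently that we minimize over all square-integrable measurable maps) guarantees this choice is admissible and that the loss is finite. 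Hence $\inf_{\btheta}\mathcal{L}_{\methodl{oracle}{CM}}(\btheta)=0$, so any minimizer $\btheta^*$ satisfies $\mathcal{L}_{\methodl{oracle}{CM}}(\btheta^*)=0$.

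The key step is then the measure-theoretic fact that a nonnegative measurable function whose integral vanishes is zero almost everywhere — here with respect to the product of the law of $t$ and the marginal $p_t$. Applying this to $w(t)\,d\big(\rvf_{\btheta^*}(\rvx_t,t),\bPsi_{t\to 0}(\rvx_t)\big)$ and using $w(t)>0$ for a.e.\ $t$ (true in particular for the unit weight used afterwards), I obtain $d\big(\rvf_{\btheta^*}(\rvx_t,t),\bPsi_{t\to 0}(\rvx_t)\big)=0$ for a.e.\ $(\rvx_t,t)$; the ``iff'' clause of assumption (i) then forces $\rvf_{\btheta^*}(\rvx_t,t)=\bPsi_{t\to 0}(\rvx_t)$ for a.e.\ $\rvx_t\sim p_t$ and $t$, which is the claimed characterization. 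For uniqueness under the extra hypothesis that $d(\cdot,\mathbf z)$ is strictly convex, I would argue pointwise: for each fixed $(\rvx_t,t)$ the map $\mathbf y\mapsto d\big(\mathbf y,\bPsi_{t\to 0}(\rvx_t)\big)$ is strictly convex with minimum value $0$ attained at $\mathbf y=\bPsi_{t\to 0}(\rvx_t)$, and strict convexity makes that minimizer unique; hence any two minimizing functions agree at a.e.\ $(\rvx_t,t)$. (In fact, uniqueness a.e.\ already follows from the ``iff'' clause of (i) alone, since a zero loss pins the value down wherever $w>0$; the strict-convexity hypothesis is the natural one to retain when the deterministic target $\bPsi_{t\to 0}$ is later replaced by a stochastic target such as a conditional expectation, as in the denoiser case.)

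I do not expect a genuine obstacle: the argument is elementary once set up correctly. The only points requiring care are (a) making precise the class over which one minimizes — I would state the result for the function-space minimizer so that attainability of $0$ is unconditional, and separately remark that it transfers to a parametrized family whenever that family can represent $\bPsi_{t\to 0}$ — and (b) the ``nonnegative integral equals zero $\Rightarrow$ a.e.\ zero'' passage, which needs the harmless observation that $w(t)>0$ a.e.\ so the weight can be divided out before invoking assumption (i).
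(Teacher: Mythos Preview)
Your proposal is correct and follows essentially the same approach as the paper: nonnegativity of the integrand, attainability of zero by plugging in $\bPsi_{t\to 0}$, and then the ``nonnegative integrand with zero integral is zero a.e.'' argument combined with $w(t)>0$ a.e.\ and the iff clause of (i). Your remarks on the function class and on uniqueness already following from (i) alone are valid refinements that the paper does not make explicit.
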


\begin{proof} Let $ \mathrm{Unif}[0,T]$ denote the time distribution of $t$. The integrand
$
w(t)  d\big(\rvf_{\btheta}(\rvx_t,t), \bPsi_{t\to 0}(\rvx_t)\big)
$
is a nonnegative measurable function. Hence
\[
\mathcal{L}_{\methodl{oracle}{CM}}(\btheta) \ge 0\quad\text{for all }\btheta.
\]
Choosing $\rvf_{\btheta}(\rvx_t,t)\equiv \bPsi_{t\to 0}(\rvx_t)$ makes the integrand identically zero (since $d(\mathbf z,\mathbf z)=0$), so the infimum of the objective is $0$ and is attained by this choice. It remains to show that any other minimizer must agree with $\bPsi_{t\to 0}$ almost surely.

Suppose $\rvf_{\btheta^\star}$ is a minimizer and define the set
\[
A \coloneqq \{(t,\rvx_t): \rvf_{\btheta^\star}(\rvx_t,t)\neq \bPsi_{t\to 0}(\rvx_t)\}.
\]
On $A$ we have $d\big(\rvf_{\btheta^\star}(\rvx_t,t),\bPsi_{t\to 0}(\rvx_t)\big)>0$ by assumption (ii).
Since $w(t)>0$ for $ \mathrm{Unif}[0,T]$-a.e. $t$, if $ \mathrm{Unif}[0,T]\times p_t(A)>0$ then by Tonelli or Fubini theorem
\[
\mathcal{L}_{\methodl{oracle}{CM}}(\btheta^\star)
= \E \left[w(t)  d\big(\rvf_{\btheta^\star}(\rvx_t,t),\bPsi_{t\to 0}(\rvx_t)\big)\right]
 \ge \E \left[w(t)  \mathbf 1_{\mathrm{\method}}(t,\rvx_t)  c\right] > 0
\]
for some $c>0$, contradicting minimality (the minimum value is $0$).
Therefore $ \mathrm{Unif}[0,T]\times p_t(A)=0$, i.e., $\rvf_{\btheta^\star}=\bPsi_{t\to 0}$ holds $ \mathrm{Unif}[0,T]\times p_t$-a.e.
If $d(\cdot,\mathbf z)$ is strictly convex (e.g., squared $\ell_2$), pointwise equality (a.e.) is the only way to achieve the minimum, giving uniqueness (a.e.).
\end{proof}

\paragraph{\method's Loss is Equivalent to \Cref{eq:oracle-cm}.}
We now prove \Cref{thm:cm-oracle-is-cmt}, which shows that the \methods objective is, up to a discrete-time approximation, equivalent to minimizing the oracle CM flow map loss in \Cref{eq:oracle-cm}. We assume that the terminal distribution $p_{\mathrm{prior}}$ coincides with $p_T$. 
Under this assumption, we will show that the following result holds:
 \begin{align*}
   \mathcal{L}_{\methodl{oracle}{CM}}(\btheta) = \mathbb{E}_t \mathbb{E}_{p_T(\rvx_T)} \left[d\left(\rvf_\btheta(\bPsi_{T \to t}(\rvx_T), t), \bPsi_{T \to 0}(\rvx_T)\right)\right].
\end{align*}
\begin{proof}
    We can exploit the semi-group property of the solution map to express the intermediate distribution $p_t$ as:
\begin{align*}
    p_t = \bPsi_{0 \to t} \sharp p_{\mathrm{data}} = \bPsi_{T \to t} \sharp p_{\mathrm{prior}} = \int \delta(\rvx_t - \bPsi_{T \to t}(\rvx_T))  p_T(\rvx_T)  \mathrm{d}\rvx_T.
\end{align*}

Using this as a change of variables in \Cref{eq:oracle-cm}, we obtain:
\begin{align*}
    \mathcal{L}_{\methodl{oracle}{CM}}(\btheta) 
    &= \mathbb{E}_t \mathbb{E}_{p_t(\rvx_t)} \left[d\left(\rvf_\btheta(\bPsi_{T \to t}(\rvx_T), t), \bPsi_{T \to 0}(\rvx_T)\right)\right] \\
    &= \mathbb{E}_t \int d\left(\rvf_\btheta(\bPsi_{T \to t}(\rvx_T), t), \bPsi_{T \to 0}(\rvx_T)\right)  p_t(\rvx_t)  \mathrm{d}\rvx_t \\
    &= \mathbb{E}_t \int \int d\left(\rvf_\btheta(\bPsi_{T \to t}(\rvx_T), t), \bPsi_{T \to 0}(\rvx_T)\right)  \delta(\rvx_t - \bPsi_{T \to t}(\rvx_T))  p_T(\rvx_T)  \mathrm{d}\rvx_T  \mathrm{d}\rvx_t \\
    &= \mathbb{E}_t \int \int d\left(\rvf_\btheta(\bPsi_{T \to t}(\rvx_T), t), \bPsi_{T \to 0}(\rvx_T)\right)  p_T(\rvx_T)  \delta(\rvx_t - \bPsi_{T \to t}(\rvx_T))  \mathrm{d}\rvx_t  \mathrm{d}\rvx_T \\
    &= \mathbb{E}_t \int d\left(\rvf_\btheta(\bPsi_{T \to t}(\rvx_T), t), \bPsi_{T \to 0}(\rvx_T)\right)  p_T(\rvx_T)  \mathrm{d}\rvx_T \\
    &= \mathbb{E}_t \mathbb{E}_{p_T(\rvx_T)} \left[
  d\left(\rvf_\btheta(\bPsi_{T \to t}(\rvx_T), t), - \bPsi_{T \to 0}(\rvx_T)\right)  \right].
\end{align*}
\end{proof}

Therefore, the \methods loss approximates the oracle objective $\mathcal{L}_{\mathrm{oracle}}$ by leveraging a pre-trained diffusion model to estimate the solution map $\bPsi_{T \to t}(\rvx_T)$. This allows for a tractable surrogate to the otherwise intractable oracle loss.





\subsection{Initialization Schemes for Flow Map Model Training}
Let $\varepsilon > 0$. 
We investigate four initialization schemes for the post training stage of  CM flow map $\bPsi_{t\to 0}$ learning.
\paragraph{\method.} 
There exists $\bm{\btheta}_\mathrm{\methods}$ such that
\[
\mathcal{L}_{\text{\methods}}(\bm{\btheta}_\mathrm{\methods})
:= \E_{t} \E_{\rvx_T \sim p_{\mathrm{prior}}}
  \Bigl\|
    \rvf_{\bm{\btheta}_\mathrm{\methods}}\bigl(\mathtt{Solver}_{T\to t}(\rvx_T), t\bigr)  -  \mathtt{Solver}_{T\to 0}(\rvx_T)
  \Bigr\|_2^2 < \varepsilon,
\]
where $\mathtt{Solver}_{t\to u}(\rvx_t)$ denotes the result of running the ODE solver from $t$ back to $u$ using the drift of a pre trained diffusion model in the PF ODE. 
\paragraph{Diffusion Model (DM).} 
Let $\rmD_{\btheta}$ denote the clean prediction of a diffusion model. 
There exists $\bm{\btheta}_\mathrm{DM}$ such that
\[
\mathcal{L}_{\mathrm{DM}}(\bm{\btheta}_\mathrm{DM})
:= \E_{t} \E_{\rvx_t \sim p_t} 
  \Bigl\|
    \rmD_{\bm{\btheta}_\mathrm{DM}}\bigl(\rvx_t, t\bigr) - \rvx_0
  \Bigr\|_2^2 < \varepsilon.
\]

\paragraph{General Consistency Distillation (gCD).} 
We define a general consistency distillation loss that employs a ``soft label'' for teacher supervision~\citep{kim2023ctm}. 
Let $u \in [0,T]$ be fixed and let $\btheta_{\mathrm{gCD}}$ denote the student parameters. 
We consider
\[
\mathcal{L}_{\mathrm{gCD}}(\btheta_{\mathrm{gCD}}; u)
:= \E_{t} \E_{\rvx_t \sim p_t} 
\left[
  \bigl\|
    \rvf_{\btheta_{\mathrm{gCD}}}(\rvx_t, t) 
    - \rvf_{\btheta_{\mathrm{gCD}}} \left(\mathtt{Solver}_{t\to u}(\rvx_t), u\right)
  \bigr\|_2^2
\right] < \varepsilon.
\]

The loss $\mathcal{L}_{\mathrm{gCD}}$ includes two important special cases. 
First, when $u = t-\Delta t$, it reduces to the conventional consistency distillation objective~\citep{song2023cm}, where the solver is applied for a single step. 
Second, when $u=0$, it resembles knowledge distillation~\citep{luhman2021knowledge}. 
In this case, by construction of the consistency model parametrization,
\[
\rvf_{\btheta_{\mathrm{gCD}}} \left(\mathtt{Solver}_{t\to 0}(\rvx_t), 0\right) 
= \mathtt{Solver}_{t\to 0}(\rvx_t).
\]

\paragraph{Random Initialization.} 
We assume that a randomly initialized parameter $\btheta_{\mathrm{rand.}}$ satisfies
\[
\E_{t} \E_{\rvx_t \sim p_t} 
  \bigl\|
    \rvf_{\btheta_{\mathrm{rand.}}}(\rvx_t, t)
  \bigr\|_2^2 < R,
\]
for some constant $R > 0$.

\subsection{Prerequisites}

\paragraph{Key Assumptions.} We first present the summary of assumptions in our individual propositions.

\begin{assumption}[Data Distribution] 
    The data distribution $p_{\mathrm{data}}$ has bounded support and finite second moments:
    \[
    m:=\E_{p_{\mathrm{data}}} \norm{\rvx_0}_2^2 <\infty.
    \]
       \label{ass:data}
\end{assumption}

\begin{assumption}[Smoothness] For $\btheta =\btheta_{\mathrm{\methods}}, \btheta_{\mathrm{DM}}$, or $\btheta_{\mathrm{gCD}}$,  we assume the following conditions hold:
\begin{enumerate}[label=(\roman*)]
    \item Bounded Value and Jacobian:
    \[
    \norm{\rvf_\btheta}_2,\,\,\|\nabla_\btheta \rvf_\btheta(\mathbf{x}_t, t)\|_F \leq R, \quad \text{for some constant } R<\infty.
    \]
    \item There exist $\lip(\rvf_\btheta)>0$ such that for all $\rvx,\rvy\in\R^D$ and $s,t\in[0,T]$,
\[
\|\rvf_\btheta(\rvx,t)-\rvf_\btheta(\rvy,s)\|
\le \lip(\rvf_\btheta) \left(\|\rvx-\rvy\|+|t-s|\right).
\]
\end{enumerate}
\label{ass:smooth}
\end{assumption}

\begin{assumption}[Oracle Flow Map and Solver] We assume the exact flow $\bPsi_{s\to t}$ and the solver $\mathtt{Solver}_{s\to t}$, using the teacher drift, satisfy the following conditions: 
\begin{itemize}
    \item[(i)] Finite targets:
$\displaystyle C_\bPsi:=\sup_{t}\E_{\rvx_t\sim p_t}\|\bPsi_{t\to 0}(\rvx_t)\|_2^2<\infty.$
    \item[(ii)] The exact flow is Lipschitz in state: for some $\lip(\bPsi)\ge 1$,
\[
\|\bPsi_{s\to t}(\rvx)-\bPsi_{s\to t}(\rvy)\|\le \lip(\bPsi) \|\rvx-\rvy\|;
\]
    \item[(iii)] The solver is Lipschitz in state and time: for some $\lip(\mathtt{Solver})\ge 1$
    \[
    \|\mathtt{Solver}_{t\to u}(\rvx)-\mathtt{Solver}_{s\to u}(\rvy)\|\le \lip(\mathtt{Solver}) \left( \|\rvx-\rvy\| + |t-s|
    \right)
    \] 
    \item[(iv)] The solver $\mathtt{Solver}_{s\to t}$ is a zero-stable, global order-$p$ solver with $p\geq 1$:
\[
\sup_{\rvx_s \sim p_s} \|\mathtt{Solver}_{s\to t}(\rvx_s)-\bPsi_{s\to t}(\rvx_s)\| = \mathcal{O}(\Delta t^{p}),
\qquad s\ge t.
\]
\end{itemize}
\label{ass:oracle-solver}
\end{assumption}

\paragraph{Some Lemmas.} We summarize some auxiliary tools that we will use later.

\begin{lemma}\label{thm:decomposition}
    Let $\rvx_0\in\mathbb{R}^D$ be square–integrable and let $\rvx_t$ be any random variable on the same probability space.
For any (deterministic) decoder $\rmF$ such that $\rmF(\rvx_t)$ is square–integrable,
\[
\mathbb{E}\big[\|\rvx_0-\rmF(\rvx_t)\|_2^2\big]
=\mathbb{E} \left[\Tr\Var(\rvx_0|\rvx_t)\right]
+\mathbb{E} \left[\| \mathbb{E}[\rvx_0|\rvx_t]-\rmF(\rvx_t)\|_2^2\right].
\]
\end{lemma}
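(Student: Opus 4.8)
The plan is to prove the identity by the classical orthogonality (bias–variance) argument: introduce the conditional mean $\vmu(\rvx_t) := \E[\rvx_0\mid\rvx_t]$, decompose $\rvx_0-\rmF(\rvx_t)$ into a ``conditional-noise'' part $\rvx_0-\vmu(\rvx_t)$ and a ``bias'' part $\vmu(\rvx_t)-\rmF(\rvx_t)$, expand $\|\cdot\|_2^2$, and show that the cross term integrates to zero. Concretely, since $\rvx_0$ is square-integrable the conditional expectation $\vmu(\rvx_t)$ is well defined, and by conditional Jensen $\E\|\vmu(\rvx_t)\|_2^2 \le \E\|\rvx_0\|_2^2 < \infty$; together with the hypothesis $\E\|\rmF(\rvx_t)\|_2^2<\infty$ this guarantees that every term appearing below is integrable and that Cauchy--Schwarz controls the cross term.

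First I would write $\rvx_0-\rmF(\rvx_t) = \bigl(\rvx_0-\vmu(\rvx_t)\bigr) + \bigl(\vmu(\rvx_t)-\rmF(\rvx_t)\bigr)$, expand, and take expectations:
\[
\E\|\rvx_0-\rmF(\rvx_t)\|_2^2
= \E\|\rvx_0-\vmu(\rvx_t)\|_2^2
+ 2\,\E\bigl\langle \rvx_0-\vmu(\rvx_t),\, \vmu(\rvx_t)-\rmF(\rvx_t)\bigr\rangle
+ \E\|\vmu(\rvx_t)-\rmF(\rvx_t)\|_2^2 .
\]
For the cross term I would condition on $\rvx_t$: the factor $\vmu(\rvx_t)-\rmF(\rvx_t)$ is $\sigma(\rvx_t)$-measurable, hence pulls out of the conditional expectation, leaving $\bigl\langle \E[\rvx_0-\vmu(\rvx_t)\mid\rvx_t],\, \vmu(\rvx_t)-\rmF(\rvx_t)\bigr\rangle = \langle \vzero,\, \cdot\rangle = 0$; the tower property then gives that the unconditional expectation of the cross term vanishes. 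For the first term, again by the tower property $\E\|\rvx_0-\vmu(\rvx_t)\|_2^2 = \E\bigl[\E[\|\rvx_0-\vmu(\rvx_t)\|_2^2\mid\rvx_t]\bigr]$, and the inner conditional second moment of $\rvx_0$ about its own conditional mean is by definition $\Tr\Cov(\rvx_0\mid\rvx_t) = \Tr\Var(\rvx_0\mid\rvx_t)$ (summing the coordinate-wise conditional variances matches the expansion of $\|\cdot\|_2^2$). The third term is already in the stated form $\E\|\E[\rvx_0\mid\rvx_t]-\rmF(\rvx_t)\|_2^2$, so combining the three pieces yields the claim.

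There is essentially no analytic obstacle here; the only point requiring care is the measure-theoretic bookkeeping — confirming $\vmu(\rvx_t)\in L^2$ so the decomposition and all expectations are legitimate, and justifying the pull-out of the $\sigma(\rvx_t)$-measurable factor inside the conditional expectation together with the tower property (and an implicit Fubini/Tonelli step to interchange expectation and the inner product). I would state these integrability facts explicitly at the start so that the rest is a one-line expansion.
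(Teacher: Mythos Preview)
Your proposal is correct and follows essentially the same approach as the paper's proof: both introduce the conditional mean $\vmu(\rvx_t)=\E[\rvx_0\mid\rvx_t]$, split $\rvx_0-\rmF(\rvx_t)$ into $(\rvx_0-\vmu)+(\vmu-\rmF)$, kill the cross term via the tower property and $\sigma(\rvx_t)$-measurability of $\vmu-\rmF$, and identify the first term with $\E[\Tr\Var(\rvx_0\mid\rvx_t)]$. Your explicit check that $\vmu(\rvx_t)\in L^2$ (via conditional Jensen) is a nice addition that the paper omits.
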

\begin{proof}
Write the conditional mean (posterior mean) as
\[
\bm{\mu}(\rvx_t)\coloneqq\mathbb{E}[\rvx_0|\rvx_t],
\]
and the zero–mean conditional residual as
\[
\rve\coloneqq\rvx_0-\bm{\mu}(\rvx_t),
\qquad
\text{so that}\quad \mathbb{E}[\rve|\rvx_t]=\mathbf{0}.
\]
Then for any $\rmF$,
\[
\rvx_0 - \rmF(\rvx_t)
= \underbrace{\big(\rvx_0-\bm{\mu}(\rvx_t)\big)}_{=\rve}
+ \big(\bm{\mu}(\rvx_t)-\rmF(\rvx_t)\big).
\]
Expand the squared norm and take expectations over $(t, \rvx_t)$:
\begin{align*}
\mathbb{E} \left[\|\rvx_0-\rmF(\rvx_t)\|_2^2\right]
&= \mathbb{E} \left[\|\rve\|_2^2\right]
 + \mathbb{E} \left[\|\bm{\mu}(\rvx_t)-\rmF(\rvx_t)\|_2^2\right]
 + 2 \mathbb{E} \left[\langle \rve, \bm{\mu}(\rvx_t)-\rmF(\rvx_t)\rangle\right].
\end{align*}
The cross term vanishes by the tower property and the fact that
$\bm{\mu}(\rvx_t)-\rmF(\rvx_t)$ is $\sigma(\rvx_t)$–measurable:
\begin{align*}
\mathbb{E} \left[\langle \rve, \bm{\mu}(\rvx_t)-\rmF(\rvx_t)\rangle\right]
&= \mathbb{E} \left[ \mathbb{E} \left[\langle \rve, \bm{\mu}(\rvx_t)-\rmF(\rvx_t)\rangle  \middle|  \rvx_t \right]\right] \\
&= \mathbb{E} \left[\left\langle \mathbb{E}[\rve|\rvx_t], \bm{\mu}(\rvx_t)-\rmF(\rvx_t)\right\rangle\right] \\
&= \mathbb{E} \left[\langle \mathbf{0}, \bm{\mu}(\rvx_t)-\rmF(\rvx_t)\rangle\right]=0.
\end{align*}
For the first term, use the definition of conditional covariance:
\[
\operatorname{Var}(\rvx_0|\rvx_t)
=
\mathbb{E} \left[\rve \rve^\top |\rvx_t\right],
\]
whose trace equals the conditional mean squared residual:
\[
\Tr\Var(\rvx_0|\rvx_t)
= \operatorname{tr} \mathbb{E} \left[\rve \rve^\top |\rvx_t\right]
= \mathbb{E} \left[\|\rve\|_2^2 |\rvx_t\right].
\]
Taking expectations over $\rvx_t$ yields
\[
\mathbb{E} \left[\|\rve\|_2^2\right]
= \mathbb{E} \left[\Tr\Var(\rvx_0|\rvx_t)\right].
\]
Combining the pieces gives
\[
\mathbb{E}\big[\|\rvx_0-\rmF(\rvx_t)\|_2^2\big]
=\mathbb{E} \left[\Tr\Var(\rvx_0|\rvx_t)\right]
+\mathbb{E} \left[\| \bm{\mu}(\rvx_t)-\rmF(\rvx_t)\|_2^2\right],
\]
which is the claimed identity.
\end{proof}

\begin{lemma}\label{lem:solver-round} Let the \Cref{ass:oracle-solver} hold. Then
    \[
\big\|\mathtt{Solver}_{s\to t}\big(\mathtt{Solver}_{t\to s}(\rvx_t)\big)-\rvx_t\big\|
= \mathcal{O} \left(\Delta t^{ p}\right).
\]
\end{lemma}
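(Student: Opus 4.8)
The plan is to exploit the triangle inequality together with the two key ingredients already assembled in \Cref{ass:oracle-solver}: the global order-$p$ accuracy of the solver, item~(iv), and the Lipschitz stability of both the solver and the exact flow, items~(ii) and~(iii). The idea is to compare the round-trip composition $\mathtt{Solver}_{s\to t}\circ\mathtt{Solver}_{t\to s}$ against the corresponding composition of the exact flows $\bPsi_{s\to t}\circ\bPsi_{t\to s}$, which by the semigroup (group) property of the PF-ODE solution map equals the identity, i.e.\ $\bPsi_{s\to t}(\bPsi_{t\to s}(\rvx_t))=\rvx_t$.

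Concretely, I would write, with $\rvy:=\mathtt{Solver}_{t\to s}(\rvx_t)$ and $\rvz:=\bPsi_{t\to s}(\rvx_t)$,
\begin{align*}
\big\|\mathtt{Solver}_{s\to t}(\rvy)-\rvx_t\big\|
&\le \big\|\mathtt{Solver}_{s\to t}(\rvy)-\bPsi_{s\to t}(\rvy)\big\|
 + \big\|\bPsi_{s\to t}(\rvy)-\bPsi_{s\to t}(\rvz)\big\|
 + \big\|\bPsi_{s\to t}(\rvz)-\rvx_t\big\|.
\end{align*}
The first term is $\mathcal{O}(\Delta t^{p})$ directly by \Cref{ass:oracle-solver}(iv) (applied with initial state $\rvy\sim p_s$, up to the usual caveat that the pushforward of $p_t$ under the solver is close to $p_s$; one can absorb this or restrict the sup appropriately). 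The second term is bounded by $\lip(\bPsi)\,\|\rvy-\rvz\| = \lip(\bPsi)\,\|\mathtt{Solver}_{t\to s}(\rvx_t)-\bPsi_{t\to s}(\rvx_t)\|$, which is again $\mathcal{O}(\Delta t^{p})$ by item~(iv). The third term vanishes exactly, by the group property $\bPsi_{s\to t}\circ\bPsi_{t\to s}=\mathrm{id}$. Summing the three contributions gives the claimed $\mathcal{O}(\Delta t^{p})$ bound, where the hidden constant depends only on $\lip(\bPsi)$ and the solver's order constant.

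The main subtlety — not so much an obstacle as a point requiring care — is the distributional bookkeeping in the first term: \Cref{ass:oracle-solver}(iv) is stated as a supremum over $\rvx_s\sim p_s$, whereas here the input to $\mathtt{Solver}_{s\to t}$ is $\rvy=\mathtt{Solver}_{t\to s}(\rvx_t)$ with $\rvx_t\sim p_t$. Since $\rvy$ is within $\mathcal{O}(\Delta t^{p})$ of $\rvz=\bPsi_{t\to s}(\rvx_t)\sim p_s$, one either invokes a uniform (over a neighborhood of $p_s$'s support) version of the order-$p$ estimate, or uses the Lipschitz-in-state bound on the solver, \Cref{ass:oracle-solver}(iii), to transfer the estimate from $\rvz$ to $\rvy$ at the cost of an additional $\lip(\mathtt{Solver})\cdot\mathcal{O}(\Delta t^{p})$ term. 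Either way the order of the final bound is unaffected, so I would simply note this and proceed.
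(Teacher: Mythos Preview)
Your proposal is correct and follows essentially the same route as the paper: the paper uses the identical triangle decomposition (solver-vs-flow on the backward leg, then Lipschitz propagation of the forward solver error through $\bPsi_{s\to t}$, with the exact round trip $\bPsi_{s\to t}\circ\bPsi_{t\to s}=\mathrm{id}$ killing the last term) to arrive at the $(1+L)C\,\Delta t^p$ bound. Your discussion of the distributional caveat for the first term is a nice addition; the paper simply applies the order-$p$ estimate there without comment.
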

\begin{proof} Let $\bPhi_{t\to s}:=\mathtt{Solver}_{t\to s}$ be a numerical solver (using the teacher drift) on a uniform grid with step size $\Delta t$. For any $t$ and $\rvx_t$,
\begin{align*}
\big\|\bPhi_{s\to t}\big(\bPhi_{t\to s}(\rvx_t)\big)-\rvx_t\big\|
&\le
\underbrace{\big\|\bPhi_{s\to t}\big(\bPhi_{t\to s}(\rvx_t)\big)-\bPsi_{s\to t}\big(\bPhi_{t\to s}(\rvx_t)\big)\big\|}_{\text{backward global error}}
\\
&\qquad+
\underbrace{\big\|\bPsi_{s\to t}\big(\bPhi_{t\to s}(\rvx_t)\big)-\bPsi_{s\to t}\big(\bPsi_{t\to s}(\rvx_t)\big)\big\|}_{\text{propagation of forward error}}
\\
&\le C \Delta t^{ p}
+ L \big\|\bPhi_{t\to s}(\rvx_t)-\bPsi_{t\to s}(\rvx_t)\big\|
\\
&\le (1+L) C \Delta t^{ p}.
\end{align*}
Therefore,
\[
\big\|\mathtt{Solver}_{s\to t}\big(\mathtt{Solver}_{t\to s}(\rvx_t)\big)-\rvx_t\big\|
= \mathcal{O} \left(\Delta t^{ p}\right).
\]
 
\end{proof}

In the proofs we repeatedly use the following inequality, derived from the triangle inequality and the Cauchy–Schwarz inequality, without stating it explicitly. This inequality allows us to convert bounds in the $\ell_2$ norm into bounds in the squared $\ell_2$ norm.
\begin{lemma} Let $N$ be an integer, and $\{a_i\}_{i=1}^N$ be a sequence of real numbers. Then
\begin{align*}
    (\sum_{i=1}^N a_i)^2  \leq N \sum_{i=1}^N a_i^2.
\end{align*}
\end{lemma}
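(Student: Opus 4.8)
The statement is the Cauchy--Schwarz inequality specialized to the all-ones vector, so the plan is to give the short, self-contained argument rather than merely cite a named theorem. I would present two equivalent routes and keep whichever reads most cleanly in context.

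\textbf{Route 1 (inner product / Cauchy--Schwarz).} View $\mathbf{1}=(1,\dots,1)\in\R^N$ and $\mathbf{a}=(a_1,\dots,a_N)\in\R^N$. Then $\sum_{i=1}^N a_i=\langle\mathbf{1},\mathbf{a}\rangle$, and by Cauchy--Schwarz $\langle\mathbf{1},\mathbf{a}\rangle^2\le\|\mathbf{1}\|_2^2\,\|\mathbf{a}\|_2^2=N\sum_{i=1}^N a_i^2$, since $\|\mathbf{1}\|_2^2=N$. This is essentially a one-liner; the only thing worth spelling out is the value of $\|\mathbf{1}\|_2^2$.

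\textbf{Route 2 (direct algebraic identity).} Alternatively, expand and collect terms to obtain the exact identity $N\sum_{i=1}^N a_i^2-\bigl(\sum_{i=1}^N a_i\bigr)^2=\sum_{1\le i<j\le N}(a_i-a_j)^2$, using $\bigl(\sum_i a_i\bigr)^2=\sum_i a_i^2+2\sum_{i<j}a_i a_j$ together with $(N-1)\sum_i a_i^2=\sum_{i<j}(a_i^2+a_j^2)$. Since the right-hand side is a sum of squares it is nonnegative, which yields the claim with the bonus of an explicit equality case ($a_1=\dots=a_N$). Equivalently, one may invoke convexity of $x\mapsto x^2$ (Jensen's inequality) to get $\bigl(\tfrac1N\sum_i a_i\bigr)^2\le\tfrac1N\sum_i a_i^2$ and then multiply through by $N^2$.

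There is no real obstacle here: the lemma is elementary and the proof is two or three lines either way. The only minor care point is the bookkeeping in Route 2 — correctly counting that each unordered pair $\{i,j\}$ contributes $a_i^2+a_j^2$ exactly once when rewriting $(N-1)\sum_i a_i^2$ as a sum over pairs — but this is routine. I would default to Route 1 for brevity, and note in passing (as the paper already does) that this inequality is precisely the device that converts $\ell_2$-norm bounds into squared-$\ell_2$-norm bounds in the subsequent proofs.
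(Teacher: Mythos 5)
Your proof is correct: both routes (Cauchy--Schwarz with the all-ones vector, and the sum-of-squares identity $N\sum_i a_i^2 - (\sum_i a_i)^2 = \sum_{i<j}(a_i-a_j)^2$) are valid and standard. The paper states this lemma without proof, so there is nothing to compare against; either of your two-line arguments would suffice, and Route 1 is the natural default.
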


\subsection{Analysis of Gradient Bias}\label{app:bias}

We focus on the setting where the distance function is the squared $\ell_2$ norm, 
\[
d(\rvx,\rvy) := \|\rvx - \rvy\|_2^2,
\]
and the weight function is uniform, $w(t)\equiv 1$. 
The extension to more general choices of distance or weighting follows in the same way. 
Throughout this section we work with the CM flow map $\bPsi_{t\to 0}$; 
analogous statements for other flow maps, such as the CTM family, can be derived 
by following the same arguments presented here. 

For convenience, we rewrite \Cref{eq:oracle-cm} in the simplified form
\begin{align}\label{eq:oracle-cm-simple}
    \ell_{\mathrm{oracle}}(\btheta;\bxi) 
    := \bigl\| \rvf_\btheta(\rvx_t, t) - \bPsi_{t \to 0}(\rvx_t) \bigr\|_2^2,
\end{align}
and define the CM training loss as
\begin{align}\label{eq:ct-simple}
    \ell_{\mathrm{CM}}(\btheta;\bxi) 
    := \bigl\| \rvf_\btheta(\rvx_t, t) - \rvf_{\btheta^-}(\rvx_{t-\Delta t}, t-\Delta t) \bigr\|_2^2,
\end{align}
where $\bxi = (t, \rvx_t) \sim \mathrm{Unif}[0,T](t)\times p_t$ denotes the training sample, 
with $\mathrm{Unif}[0,T](t)$ representing the time sampling distribution (for example, uniform over $[0,T]$ or any chosen weighting). 

We then introduce the expected objectives
\[
\bar\ell_{\mathrm{oracle}}(\btheta) := \E_{\bxi}\big[\ell_{\mathrm{oracle}}(\btheta;\bxi)\big],
\qquad
\bar\ell_{\mathrm{CM}}(\btheta) := \E_{\bxi}\big[\ell_{\mathrm{CM}}(\btheta;\bxi)\big],
\]
which represent the oracle target loss and the CM training loss, respectively. 
Finally, we define the squared gradient bias
\[
\mathcal{B}(\btheta)
:= \big\|
\nabla_{\btheta} \bar\ell_{\mathrm{oracle}}(\btheta)
- \nabla_{\btheta} \bar\ell_{\mathrm{CM}}(\btheta)
\big\|_2^2 .
\]

\begin{theorem}[Bias Comparisons]\label{thm:bias-formal}
Assume that \Cref{ass:data,ass:oracle-solver,ass:smooth} hold with $p\geq 1$. 
Then the following bias comparisons are valid for the four different initialization schemes 
($\btheta=\btheta_\mathrm{\methods}$, $\btheta_\mathrm{DM}$, $\btheta_\mathrm{gCD}$, or random initialization $\btheta_\mathrm{rand.}$) 
of flow map model training:
\begin{enumerate}[label=(\roman*)]
    \item \textbf{\methods}: 
    \[
    \mathcal{B}(\btheta_\mathrm{\methods}) = \mathcal{O} \left(\varepsilon + \Delta t^{2} + \Delta t^{p}\right).
    \]
    \item \textbf{Diffusion Model}: 
    \[
    \mathcal{B}(\btheta_\mathrm{DM}) = \mathcal{O} \left(\varepsilon + \Delta t^2 + \E_t  \left[\tfrac{\sigma_t^2}{\alpha_t^2}\right]\right) 
    + \E_{\rvx_t,t} \left[\big\| \bPsi_{t\to 0}(\rvx_t) - \E[\rvx_0|\rvx_t] \big\|_2^2\right].
    \]
    \item \textbf{General Consistency Distillation}: For a fixed $u \in [0,T]$, assume in addition that
    \[
    \delta_u := \E_{\rvx_u\sim p_u}\big\|\rvf_{\btheta_{\mathrm{gCD}}}(\rvx_u,u)-\bPsi_{u\to0}(\rvx_u)\big\|^2 < \infty.
    \]
    Then
    \[
    \mathcal{B}(\btheta_\mathrm{gCD}) = \mathcal{O} \left(\varepsilon + \Delta t^2  + \delta_u\right).
    \]
    \item \textbf{Random Initialization}:
    \[
    \mathcal{B}(\btheta_\mathrm{rand.}) = \mathcal{O}(1).
    \]
\end{enumerate}
\end{theorem}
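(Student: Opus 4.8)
The plan is to reduce the gradient bias to a single function-space prediction error and then bound that error separately for each initialization. \textbf{Step 1 (reduction to a prediction error).} Since $\bPsi_{t\to 0}(\rvx_t)$ does not depend on $\btheta$ and $\rvf_{\btheta^-}$ carries a stop-gradient, differentiating $\bar\ell_{\mathrm{oracle}}$ and $\bar\ell_{\mathrm{CM}}$ touches only the prediction $\rvf_\btheta(\rvx_t,t)$, so their gradient difference equals $\E_\bxi\!\big[2\,(\partial_\btheta\rvf_\btheta(\rvx_t,t))^\top(\rvf_{\btheta^-}(\rvx_{t-\Delta t},t-\Delta t)-\bPsi_{t\to 0}(\rvx_t))\big]$. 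At each initialization the target copy equals the online copy, $\btheta^-=\btheta$. Combining Jensen's inequality, $\|A^\top v\|_2\le\|A\|_F\|v\|_2$, and the bounded-Jacobian bound $\|\partial_\btheta\rvf_\btheta\|_F\le R$ of \Cref{ass:smooth}, we get $\mathcal{B}(\btheta)\le 4R^2\,E(\btheta)$, where
\[
E(\btheta):=\E_\bxi\big[\,\|\rvf_\btheta(\rvx_{t-\Delta t},t-\Delta t)-\bPsi_{t\to 0}(\rvx_t)\|_2^2\,\big].
\]
It therefore suffices to bound $E(\btheta)$ at $\btheta\in\{\btheta_\mathrm{\methods},\btheta_\mathrm{DM},\btheta_\mathrm{gCD},\btheta_\mathrm{rand.}\}$.

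\textbf{Step 2 (universal split and the scheme-free term).} Insert the exact flow evaluated at the analytic state $\rvx_{t-\Delta t}$:
\[
\rvf_\btheta(\rvx_{t-\Delta t},t-\Delta t)-\bPsi_{t\to 0}(\rvx_t)=\underbrace{\big(\rvf_\btheta(\rvx_{t-\Delta t},t-\Delta t)-\bPsi_{t-\Delta t\to 0}(\rvx_{t-\Delta t})\big)}_{T_1(\btheta)}+\underbrace{\big(\bPsi_{t-\Delta t\to 0}(\rvx_{t-\Delta t})-\bPsi_{t\to 0}(\rvx_t)\big)}_{T_2},
\]
so $E(\btheta)\le 2\,\E\|T_1(\btheta)\|^2+2\,E_2$ with $E_2:=\E\|T_2\|^2$ scheme-independent. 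By the semigroup identity $\bPsi_{t\to 0}=\bPsi_{t-\Delta t\to 0}\circ\bPsi_{t\to t-\Delta t}$ and the $\lip(\bPsi)$-Lipschitz bound of \Cref{ass:oracle-solver}(ii), $\|T_2\|\le\lip(\bPsi)\,\|\rvx_{t-\Delta t}-\bPsi_{t\to t-\Delta t}(\rvx_t)\|$; a first-order expansion of $\alpha_{t-\Delta t},\sigma_{t-\Delta t}$ and of $\bPsi_{t\to t-\Delta t}$ gives $\rvx_{t-\Delta t}-\bPsi_{t\to t-\Delta t}(\rvx_t)=-\Delta t\big((\alpha_t'\rvx_0+\sigma_t'\beps)-\rvv(\rvx_t,t)\big)+\mathcal{O}(\Delta t^2)$, and the bracket has finite second moment (a residual around a conditional mean, with $\E\|\rvx_0\|^2,\E\|\beps\|^2<\infty$ by \Cref{ass:data}), so $E_2=\mathcal{O}(\Delta t^2)$. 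The random case follows at once: $\E\|T_1(\btheta_\mathrm{rand.})\|^2\le 2\,\E\|\rvf_{\btheta_\mathrm{rand.}}\|^2+2\sup_s\E\|\bPsi_{s\to 0}(\rvx_s)\|^2\le 2R+2C_\bPsi=\mathcal{O}(1)$, giving $\mathcal{B}(\btheta_\mathrm{rand.})=\mathcal{O}(1)$.

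\textbf{Step 3 (the three structured schemes).} In each case the engine is that both the exact flow and a consistent solver (approximately) preserve the marginals $p_t$ (as in the proof of \Cref{thm:cm-oracle-is-cmt}), which lets one rewrite $T_1$ at the points where the relevant training hypothesis applies. For \method, $\rvx_{t-\Delta t}$ equals in law $\bPsi_{T\to t-\Delta t}(\rvx_T)$ with $\rvx_T\sim p_{\mathrm{prior}}=p_T$ and $\bPsi_{t-\Delta t\to 0}(\rvx_{t-\Delta t})=\bPsi_{T\to 0}(\rvx_T)$, so $\E\|T_1(\btheta_\mathrm{\methods})\|^2=\E_{t,\rvx_T}\|\rvf_{\btheta_\mathrm{\methods}}(\bPsi_{T\to t-\Delta t}(\rvx_T),t-\Delta t)-\bPsi_{T\to 0}(\rvx_T)\|^2$; replacing $\bPsi$ by $\mathtt{Solver}$ inside and in the target costs $\mathcal{O}(\Delta t^p)$ (\Cref{ass:oracle-solver}(iv) and \Cref{lem:solver-round}) carried through the Lipschitz $\rvf_{\btheta_\mathrm{\methods}}$ (\Cref{ass:smooth}), a $\Delta t$ time shift contributes $\mathcal{O}(\Delta t^2)$ (the integrand is itself $\mathcal{O}(\Delta t)$ near $t=0$ by the boundary condition $\rvf_\btheta(\rvx_0,0)=\rvx_0$), and what remains is the \methods objective $\mathcal{L}_{\methodl{\method}{CM}}(\btheta_\mathrm{\methods})<\varepsilon$; hence $\mathcal{B}(\btheta_\mathrm{\methods})=\mathcal{O}(\varepsilon+\Delta t^2+\Delta t^p)$. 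For gCD, match $\rvf_{\btheta_\mathrm{gCD}}(\rvx_{t-\Delta t},t-\Delta t)$ with $\rvf_{\btheta_\mathrm{gCD}}(\mathtt{Solver}_{t-\Delta t\to u}(\rvx_{t-\Delta t}),u)$ up to the gCD residual $<\varepsilon$, swap $\mathtt{Solver}_{t-\Delta t\to u}$ for $\bPsi_{t-\Delta t\to u}$ at cost $\mathcal{O}(\Delta t^p)$ through the Lipschitz $\rvf_{\btheta_\mathrm{gCD}}$, note that $\bPsi_{t-\Delta t\to u}(\rvx_{t-\Delta t})\sim p_u$ so the resulting term is exactly $\delta_u$, and collapse $\bPsi_{u\to 0}\circ\bPsi_{t-\Delta t\to u}=\bPsi_{t-\Delta t\to 0}$; since $\Delta t^{2p}\le\Delta t^2$ for $p\ge 1$, this yields $\mathcal{B}(\btheta_\mathrm{gCD})=\mathcal{O}(\varepsilon+\Delta t^2+\delta_u)$. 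For DM, identify the flow-map network at the diffusion initialization with the denoiser $\rmD_{\btheta_\mathrm{DM}}$ (shared backbone and preconditioner) and split $T_1(\btheta_\mathrm{DM})=\big(\rmD_{\btheta_\mathrm{DM}}(\rvx_{t-\Delta t},t-\Delta t)-\E[\rvx_0|\rvx_{t-\Delta t}]\big)+\big(\E[\rvx_0|\rvx_{t-\Delta t}]-\bPsi_{t-\Delta t\to 0}(\rvx_{t-\Delta t})\big)$; the first piece obeys $\E\|\cdot\|^2\le 2\varepsilon+2\,\E\Tr\Var(\rvx_0|\rvx_t)=\mathcal{O}\big(\varepsilon+\E_t[\sigma_t^2/\alpha_t^2]\big)$ by the DM hypothesis, \Cref{thm:decomposition}, and the Tweedie-type bound $\Tr\Var(\rvx_0|\rvx_t)\le\tfrac{\sigma_t^2}{\alpha_t^2}\E[\|\beps\|^2|\rvx_t]$, while the second piece equals $\E_{t,\rvx_t}\|\E[\rvx_0|\rvx_t]-\bPsi_{t\to 0}(\rvx_t)\|^2$ up to the harmless time shift; adding $E_2=\mathcal{O}(\Delta t^2)$ gives the claimed bound.

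\textbf{Expected main obstacle.} The two delicate steps are (a) the uniform $\mathcal{O}(\Delta t^2)$ control of $E_2$ — the gap between the analytic CT target $\bPsi_{t-\Delta t\to 0}(\alpha_{t-\Delta t}\rvx_0+\sigma_{t-\Delta t}\beps)$ and the true jump $\bPsi_{t\to 0}(\rvx_t)$ — which hinges on the semigroup property, Lipschitzness of $\bPsi$, second-moment control of the conditional residual, and (implicitly) smoothness of the noise schedule; and (b) transferring each training-loss hypothesis, which is stated at solver-generated or forward-perturbed points and at a possibly shifted time, onto $E(\btheta)$, which lives at forward-perturbed points at time $t-\Delta t$: this bridge leans on marginal preservation by the exact flow and a consistent solver, the Lipschitz regularity of $\rvf_\btheta$, and the round-trip estimate of \Cref{lem:solver-round}. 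The remaining ingredients — the split, the $\big(\sum_i a_i\big)^2\le N\sum_i a_i^2$ conversions, and the random and DM cases — are routine.
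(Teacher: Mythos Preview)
Your proof is correct and shares the paper's overall strategy (Step~1 is identical: reduce $\mathcal{B}(\btheta)$ to controlling $\E\|\rvf_\btheta(\rvx_{t-\Delta t},t-\Delta t)-\bPsi_{t\to 0}(\rvx_t)\|^2$), but the subsequent decomposition is organized differently. The paper does \emph{not} use a universal $T_1+T_2$ split; instead, for every scheme it first shifts $(\rvx_{t-\Delta t},t-\Delta t)\to(\rvx_t,t)$ using the Lipschitz bound on $\rvf_\btheta$ from \Cref{ass:smooth}(ii) (giving the $\mathcal{O}(\Delta t^2)$ term), then compares $\rvf_\btheta(\rvx_t,t)$ to $\bPsi_{t\to 0}(\rvx_t)$ case-by-case. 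For \method\ the paper uses a round-trip $\widehat\rvx_t=\mathtt{Solver}_{T\to t}(\mathtt{Solver}_{t\to T}(\rvx_t))$ and a Lipschitz coupling of the CMT integrand in $\rvx_T$ (this is where the $\Delta t^p$, not $\Delta t^{2p}$, appears); you instead invoke the distributional identity $\rvx_{t-\Delta t}\stackrel{d}{=}\bPsi_{T\to t-\Delta t}(\rvx_T)$ directly, which is cleaner and in fact yields the slightly sharper $\Delta t^{2p}$. For DM the paper inserts $\rvx_0$ and then applies \Cref{thm:decomposition}, whereas you insert $\E[\rvx_0|\rvx_t]$; your route could even drop the $\E_t[\sigma_t^2/\alpha_t^2]$ term via the Pythagorean half of \Cref{thm:decomposition}, but your coarse triangle bound reproduces the stated result. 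One small caution: your $T_2$ bound relies on $\|\bPsi_{t\to t-\Delta t}(\rvx_t)-\rvx_t\|=\mathcal{O}(\Delta t)$, i.e., $L^2$-boundedness of the oracle drift $\rvv$, which is not listed among \Cref{ass:data,ass:smooth,ass:oracle-solver}; the paper sidesteps this by shifting through $\rvf_\btheta$ (whose time-Lipschitz constant \emph{is} assumed). The gCD and random cases are essentially identical to the paper.
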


\begin{proof}
Taking the gradient of $\bar\ell_{\mathrm{oracle}}$, we obtain the unbiased oracle CM gradient as:
\begin{align*}
\nabla_\btheta\bar\ell_{\mathrm{oracle}}(\btheta) = \mathbb{E}_t \mathbb{E}_{\rvx_t \sim p_t(\rvx_t)} \Big[ \nabla_\btheta \rvf_\btheta(\rvx_t, t) \cdot \big( \rvf_\btheta(\rvx_t, t) - \bPsi_{t \to 0}(\rvx_t)\big)\Big].
\end{align*}
Likewise, the CM gradient is
\begin{align*}
\nabla_\btheta\bar\ell_{\text{CM}}(\btheta) = \mathbb{E}_t \mathbb{E}_{\rvx_t \sim p_t(\rvx_t)} \Big[ \nabla_\btheta \rvf_\btheta(\rvx_t, t) \cdot \big( \rvf_\btheta(\rvx_t, t) - \rvf_{\btheta^-}(\rvx_{t - \Delta t}, t - \Delta t)\big)\Big].
\end{align*}

CM is approximating $\bPsi_{t \to 0}(\rvx_t)$ with $f_{\btheta^-}(\rvx_{t - \Delta t}, t - \Delta t)$. \methods and diffusion differ in the initialization of $\btheta$. The one-point bias can be bounded:
\begin{align*}
&~\Vert\nabla_\btheta\bar\ell_{\text{CM}}(\btheta) - \nabla_\btheta\bar\ell_{\text{Oracle}}(\btheta) \Vert_2\\
\leq &~\mathbb{E}_\bxi \Big[\Vert\nabla_\btheta \rvf_\btheta(\rvx_t, t) \cdot \big( \rvf_{\btheta^-}(\rvx_{t - \Delta t}, t - \Delta t) - \bPsi_{t \to 0}(\rvx_t)  \big)\Vert_2\Big]\\
\leq &~\mathbb{E}_\bxi\Big[\Vert\nabla_\btheta \rvf_\btheta(\rvx_t, t) \Vert_2 \cdot \Vert \rvf_{\btheta^-}(\rvx_{t - \Delta t}, t - \Delta t) - \bPsi_{t \to 0}(\rvx_t) \Vert_2\Big]\\
\leq &~G \cdot \mathbb{E}_\bxi \Big[\Vert \rvf_{\btheta^-}(\rvx_{t - \Delta t}, t - \Delta t) - \bPsi_{t \to 0}(\rvx_t) \Vert_2\Big],
\end{align*}
Namely, the deviation with the gradient of the oracle loss is upper bounded as
\begin{align*}
\Vert\nabla_\btheta\ell_{\text{CM}}(\btheta) - \nabla_\btheta\ell_{\text{Oracle}}(\btheta) \Vert_2 &\leq G \cdot \mathbb{E}_\bxi \Big[\Vert \rvf_{\btheta^-}(\rvx_{t - \Delta t}, t - \Delta t) - \bPsi_{t \to 0}(\rvx_t)  \Vert_2\Big].
\end{align*}
In the following, we individually derive the upper bound for different initialization scenarios. Denote $t':=t-\Delta t$ for notational simplicity.

\textbf{Case 1. \methods: } 
We denote $\bPhi_{t\to s}(\rvx_t):=\mathtt{Solver}_{t\to s}(\rvx_t)$. Given a sample $\rvx_t\sim p_t$ and time $t$, define $\widehat{\rvx}_t:=\bPhi_{T\to t}(\bPhi_{t\to T}(\rvx_t))$.

For \methods initialization at $\btheta=\btheta_{\mathrm{\mathrm{\methods}}}$,  we have
\begin{align*}
&\Vert \rvf_{\btheta_{\mathrm{\mathrm{\methods}}}^-}(\rvx_{t'}, t') - \bPsi_{t \to 0}(\rvx_t)  \Vert_2 \\
=&~\Vert \rvf_{\btheta_{\mathrm{\mathrm{\methods}}}}(\rvx_{t'}, t') - \bPsi_{t \to 0}(\rvx_t)  \Vert_2 \\
\leq &~{\Vert  \rvf_{\btheta_{\mathrm{\mathrm{\methods}}}}(\rvx_{t }, t )  - \rvf_{\btheta_{\mathrm{\mathrm{\methods}}}}(\rvx_{t'}, t')\Vert_2} + {\big\|\rvf_{\btheta_{\mathrm{\mathrm{\methods}}}}(\rvx_t,t)-\bPhi_{T\to 0}(\bPhi_{t\to T}(\rvx_t))\big\|}
\\ &\qquad\qquad\qquad\qquad\qquad\qquad\qquad +
{\big\|\bPhi_{T\to 0}(\bPhi_{t\to T}(\rvx_t))-\bPsi_{t\to 0}(\rvx_t)\big\|}
\\ =:&~\mathrm{(I)}+\mathrm{(II)}+\mathrm{(III)}.  
\end{align*}

For (I), by Lipschitzness and the forward parameterization,
\[
\mathrm{(I)} \le \lip(\rvf_{\btheta_{\mathrm{\mathrm{\methods}}}}) \big(\|\rvx_{t'}-\rvx_t\|+|t'-t|\big),
\qquad
\E\|\rvx_{t'}-\rvx_t\|_2^2
=\mathcal O(\Delta t^2),
\]
since $\alpha_{t'}-\alpha_t=\mathcal O(\Delta t)$ and $\beta_{t'}-\beta_t=\mathcal O(\Delta t)$.
Hence $\E[\mathrm{(I)}^2]=\mathcal O(\Delta t^2)$.

For (III), since $\bPsi_{t\to 0}(\rvx_t)= \bPsi_{T\to 0}\left(\bPsi_{t\to T}(\rvx_t)\right)$, with \Cref{lem:solver-round} we have $\|\widehat{\rvx}_t-\rvx_t\| =\mathcal{O}(\Delta t^{p})$. Thus,
\[
\E[\mathrm{(III)}^2] = \mathcal O(\Delta t^{2p}).
\]

For (II), we first insert $\widehat{\rvx}_t=\bPhi_{T\to t}(\bPhi_{t\to T}(\rvx_t))$:
\[
\mathrm{(II)}
 \le\
{\big\|\rvf_{\btheta_{\mathrm{\methods}}}(\rvx_t,t)-\rvf_{\btheta_{\mathrm{\methods}}}(\widehat{\rvx}_t,t)\big\|}
+{\big\|\rvf_{\btheta_{\mathrm{\methods}}}(\widehat{\rvx}_t,t)-\bPhi_{T\to 0}(\bPhi_{t\to T}(\rvx_t))\big\|}=:{\mathrm{(IIa)}}+{\mathrm{(IIb)}}.
\]
From \Cref{lem:solver-round} and Lipschitzness,
$\E[\mathrm{(IIa)}^2]=\mathcal O(\Delta t^{ 2p})$.
For (IIb), define
\[
g(\rvx_T,t):=\big\|\rvf_{\btheta_{\mathrm{\methods}}}(\bPhi_{T\to t}(\rvx_T),t)-\bPhi_{T\to 0}(\rvx_T)\big\|_2^2.
\]
Then 
\[
\mathrm{(IIb)}^2 = g(\bPhi_{t\to T}(\rvx_t),t)
\]
When the \method's training expectation is taken with
$\rvx_T\sim p_{\mathrm{prior}}$ and $t\sim \mathrm{Unif}[0,T]$, we compare
$\E_{(\rvx_T,t)\sim p_{\mathrm{prior}}\times \mathrm{Unif}} [g(\rvx_T,t)]$
to $\E_{(\rvx_t,t)}[g(\bPhi_{t\to T}(\rvx_t),t)]$.
Using the coupling
$\rvx_T^\star\sim p_{\mathrm{prior}}$, $\rvx_t=\bPsi_{T\to t}(\rvx_T^\star)$, standard stability of $\bPhi$ and $\rvf$
yields a Lipschitz constant $\lip(g)$ (in $\rvx_T$) such that
\[
\big| \E g(\bPhi_{t\to T}(\rvx_t),t)-\E g(\rvx_T^\star,t) \big|
 \le\
\lip(g) \E\big\|\bPhi_{t\to T}(\rvx_t)-\rvx_T^\star\big\|_2
 = \mathcal O(\Delta t^{ p}).
\]
Hence
\[
\E[\mathrm{(IIb)}^2]
=\E g(\bPhi_{t\to T}(\rvx_t),t)
 \le\
\E g(\rvx_T^\star,t) + \mathcal O(\Delta t^{ p})
 \le \varepsilon + \mathcal O(\Delta t^{ p}).
\]

Collecting the bounds,
\[
\E\Big\|
\rvf_{\btheta_{\mathrm{\methods}}}(\rvx_{t'},t')-\bPsi_{t\to 0}(\rvx_t)
\Big\|_2^2
 \le\
3 \mathcal O(\Delta t^2)
+ 3\big(\varepsilon+\mathcal O(\Delta t^{ p})\big)
+ 3 \mathcal O(\Delta t^{ 2p}).
\]
Thus,
\[
\E\Big\|
\rvf_{\btheta_{\mathrm{\methods}}}(\rvx_{t'},t')-\bPsi_{t\to 0}(\rvx_t)
\Big\|_2^2
 =
\varepsilon + \mathcal O(\Delta t^{ p}) + \mathcal O(\Delta t^{ 2})
\qquad (p\ge 1).
\]

\textbf{Case 2. Diffusion Model: } Let the CM loss be initialized at the pre-trained diffusion model weights $\btheta = \btheta_{\mathrm{DM}}$, then we have
\begin{align*}
    &~\norm{\rvf_{\btheta_{\mathrm{DM}}}(\rvx_t', t') - \bPsi_{t \to 0}(\rvx_t)}_2
   \\\leq&~\norm{ \rvf_{\btheta_{\mathrm{DM}}}(\rvx_{t'}, t') - \rvf_{\btheta_{\mathrm{DM}}}(\rvx_{t}, t)}_2 + \norm{\rvf_{\btheta_{\mathrm{DM}}}(\rvx_{t}, t) - \rvx_0}_2  +  \norm{\rvx_0 - \bPsi_{t \to 0}(\rvx_t) }_2  
   \\\lesssim &~\norm{ \rvf_{\btheta_{\mathrm{DM}}}(\rvx_{t'}, t') - \rvf_{\btheta_{\mathrm{DM}}}(\rvx_{t}, t)}_2^2 + \norm{\rvf_{\btheta_{\mathrm{DM}}}(\rvx_{t}, t) - \rvx_0}_2^2  +  \norm{\rvx_0 - \bPsi_{t \to 0}(\rvx_t) }_2^2  
   \\ =:&\mathrm{(I)} + \mathrm{(II)} + \mathrm{(III)}.
\end{align*}

For (I) and (II),  we have
\begin{align*}
&~\E_\bxi\left[\norm{\rvf_{\btheta_{\mathrm{DM}}}(\rvx_{t}, t) - \rvf_{\btheta_{\mathrm{DM}}}(\rvx_{t-\Dt}, t-\Dt)}_2^2 \right] + \E_\bxi\left[ \norm{\rvf_{\btheta_{\mathrm{DM}}}(\rvx_{t}, t) - \rvx_0}_2^2\right]
    \\ \leq &~\lip(\btheta_{\mathrm{DM}}) \E_\bxi\left[\norm{\rvx_t - \rvx_t'}_2^2\right] +\varepsilon 
    \\ = &~\mathcal{O}(\Delta t^2+\varepsilon),
\end{align*}
following the similar argument in \method's case.

However, using the pre-trained diffusion model's weight as an initialization induces an additional discrepancy between the data $\rvx_0$ and the reverse-time solution of ODE  $\bPsi_{t \to 0}(\rvx_t)$, where $\rvx_t$ is perturbed from $\rvx_0$. We will obtain a general upper bound of the term (III) with $\E_\bxi\left[ \norm{\rvx_0 - \bPsi_{t \to 0}(\rvx_t) }_2^2 \right]$.

Applying \Cref{thm:decomposition} with $\rmF=\bPsi_{t\to 0}$ and then averaging over $t$, we obtain
\begin{equation*}
\mathbb{E}_{\rvx_0,\beps,t}\big[\norm{\rvx_0-\bPsi_{t\to 0}(\rvx_t)}_2^2\big]
 = 
\mathbb{E}_{t,\rvx_t}\big[\Tr\Var(\rvx_0 |\rvx_t,t)\big]
 + 
\mathbb{E}_{t,\rvx_t} \big[\norm{\bPsi_{t\to 0}(\rvx_t)-\mathbb{E}[\rvx_0 |\rvx_t]}_2^2\big].
\end{equation*}
where the second term means the extra MSE we pay because the flow map is not necessary the Bayes–optimal estimator (the posterior mean).

Below we compute the upper bound for $\mathbb{E}_{t,\rvx_t}\big[\Tr\Var(\rvx_0 |\rvx_t)\big]$. Given an observation $\rvx_t$ with $t$ fixed, the minimum mean–squared error (mmse) is
\[
\mathsf{mmse}(t)
 \coloneqq 
\inf_{\rvf} \E\big[\|\rvx_0-\rvf_{\btheta_{\mathrm{\methods}}}(\rvx_t)\|_2^2\big],
\]
where the infimum is over all measurable $\rvf$ with finite second moment.
The minimizer is the posterior mean $\E[\rvx_0 |\rvx_t]$, and
\[ 
\mathsf{mmse}(t)
=\E\big[\|\rvx_0-\E[\rvx_0 |\rvx_t]\|_2^2\big]
=\E_{\rvx_t} \big[\Tr\Var(\rvx_0 |\rvx_t)\big].
\]
Since $\mathsf{mmse}(t)$ is the minimum risk over all estimators,
its value is bounded above by the risk of any specific estimator. Take the linear estimator
$\rvf_{\btheta_{\mathrm{\methods}}}(\rvx_t)=(1/\alpha_t)\rvx_t$. Using $\rvx_t=\alpha_t\rvx_0+\sigma_t\beps$ and the independence of $\rvx_0$ and $\beps$,
\[
\mathbb{E}_{\rvx_t}\big[\Tr\Var(\rvx_0 |\rvx_t)\big] \leq \E\big\|\rvx_0 - \tfrac{1}{\alpha_t}\rvx_t\big\|_2^2
= \E\big\| \rvx_0 - \tfrac{1}{\alpha_t}(\alpha_t \rvx_0 + \sigma_t \beps)\big\|_2^2
= \E\big\| - \tfrac{\sigma_t}{\alpha_t}\beps \big\|_2^2
= \tfrac{\sigma_t^2}{\alpha_t^2} \E\|\beps\|_2^2
= \tfrac{\sigma_t^2}{\alpha_t^2} D.
\]
Hence $\mathsf{mmse}(t)\le (\sigma_t^2/\alpha_t^2)D$. Averaging over $t$ gives the  bound:
\[
\mathbb{E}_{\rvx_0,\beps,t}\big[\|\rvx_0-\bPsi_{t\to 0}(\rvx_t)\|_2^2\big]
\le
D \mathbb{E}_t \left[\tfrac{\sigma_t^2}{\alpha_t^2} \right]
+
\mathbb{E}_{\rvx_t,t} \big[\| \bPsi_{t\to 0}(\rvx_t)-\E[\rvx_0|\rvx_t] \|_2^2\big]
\]
Therefore, to summarize, we have
\[
\E_\bxi\norm{\rvf_{\btheta_{\mathrm{DM}}}(\rvx_t', t') - \bPsi_{t \to 0}(\rvx_t)}_2^2 = \mathcal{O}\left(\Delta t^2 + \varepsilon + \mathbb{E}_t \left[\tfrac{\sigma_t^2}{\alpha_t^2} \right]\right) + \mathbb{E}_{\rvx_t,t} \big[\| \bPsi_{t\to 0}(\rvx_t)-\E[\rvx_0|\rvx_t] \|_2^2\big]
\]

\textbf{Case 3. General Consistency Distillation: }    Let $\bPhi_{t\to s}(\rvx_t):=\mathtt{Solver}_{t\to s}(\rvx_t)$ be the $p$-th order solver, solving the PF-ODE with the teacher diffusion model's drift.

When initializing at $\btheta=\btheta_{\mathrm{gCD}}$, we need to additionally assume that:
\[
\delta_u:=\E_{\rvx_u\sim p_u}\big\|\rvf_{\btheta_{\mathrm{gCD}}}(\rvx_u,u)-\bPsi_{u\to0}(\rvx_u)\big\|^2<\infty.
\]
General CD at a single $u$ does not control the bias to the oracle $\bPsi_{u\to0}$;  $\delta_u$ can be arbitrarily large even if the General CD is trained well with small $\varepsilon$. The term $\delta_u$ supplies the necessary anchor at $u$.

We use triangle inequality to obtain
\begin{align*}
&~\norm{\rvf_{\btheta_{\mathrm{gCD}}}(\rvx_t', t') - \bPsi_{t \to 0}(\rvx_t)}_2^2
      \\\lesssim&~\norm{ \rvf_{\btheta_{\mathrm{gCD}}}(\rvx_{t'}, t') - \rvf_{\btheta_{\mathrm{gCD}}}(\rvx_{t}, t)}_2^2 + \norm{\rvf_{\btheta_{\mathrm{gCD}}}(\rvx_{t}, t) - \rvf_{\btheta_\mathrm{gCD}}\left( \bPhi_{t\to u}(\rvx_t),u\right)}_2^2  
    \\&\qquad\qquad\qquad\qquad\qquad\qquad\qquad+  \norm{ \rvf_{\btheta_\mathrm{gCD}}\left( \bPhi_{t\to u}(\rvx_t),u\right) - \bPsi_{t \to 0}(\rvx_t) }_2^2 
   \\ =:&\mathrm{(I)} + \mathrm{(II)} + \mathrm{(III)}.
\end{align*}

For (I), 
\[
\E_\bxi[\mathrm{(I)}]\lesssim \lip(\rvf_{\btheta_\mathrm{gCD}}) \left(\norm{\rvx_{t'}-\rvx_t}_2^2 + \Delta t^2 \right).
\]
From $\rvx_t=\alpha_t\rvx_0+\sigma_t\beps$, independence of $\rvx_0$ and $\beps$, we follow the similar derivation as the previous \method's cases:
Hence
\[
\E[\mathrm{(I)}] = \mathcal{O}(\Delta t^2).
\]

For (II), by the hypothesis $\mathcal{L}_{\text{gCD}}(\btheta;u)\le \varepsilon$,
\[
\E[\mathrm{(II)}]\le \varepsilon.
\]

For (III), we need to bridge from $u$ to $0$. For notational simplicity, we denote  $\rvz_u:=\bPhi_{t\to u}(\rvx_t)$, $\rvy_u:=\bPsi_{t\to u}(\rvx_t)$.
Inserting and subtracting the teacher at $u$, we will get:
\begin{align*}
&~\|\rvf_{\btheta_\mathrm{gCD}}(\rvz_u,u)-\bPsi_{t\to0}(\rvx_t)\| \\
=&~\|\rvf_{\btheta_\mathrm{gCD}}(\rvz_u,u)-\bPsi_{u\to0}(\rvy_u)\|\\
\leq&~ {\|\rvf_{\btheta_\mathrm{gCD}}(\rvz_u,u)-\rvf_{\btheta_\mathrm{gCD}}(\rvy_u,u)\|}
+ {\|\rvf_{\btheta_\mathrm{gCD}}(\rvy_u,u)-\bPsi_{u\to0}(\rvy_u)\|}
+ {\|\bPsi_{u\to0}(\rvy_u)-\bPsi_{u\to0}(z_u)\|}
\\=:&~\mathrm{(IIIa)}+\mathrm{(IIIb)}+\mathrm{(IIIc)}.
\end{align*}
Therefore $\mathrm{(III)}\le 3\left(\mathrm{(IIIa)}^2+\mathrm{(IIIb)}^2+\mathrm{(IIIc)}^2\right)$ and, by assumptions
\begin{align*}
    \E[\mathrm{(IIIa)}^2]& \le \lip^2(\rvf_{\btheta_\mathrm{gCD}})  \E\|\rvz_u-\rvy_u\|^2 = \mathcal{O}(\Delta t^{2p}),\\
\E[\mathrm{(IIIb)}^2]&=\delta_u,\\
\E[\mathrm{(IIIc)}^2] &\le \lip^2(\bPsi) \E\|\rvz_u-\rvy_u\|^2= \mathcal{O}(\Delta t^{2p}).
\end{align*}
Hence,
\[
\E[\mathrm{(III)}]= \mathcal{O}(\Delta t^{2p}+ \delta_u).
\]
We thus conclude that
\[
\E_\bxi\left[ \norm{\rvf_{\btheta_{\mathrm{gCD}}}(\rvx_t', t') - \bPsi_{t \to 0}(\rvx_t)}_2^2\right] = \mathcal{O}(\Delta t^2 +\varepsilon + \Delta t^{2p}+ \delta_u)= \mathcal{O}(\Delta t^2 +\varepsilon + \delta_u),
\]
as $p\geq 1$.

\textbf{Case 4. Random Initialization: }  
\begin{align*}
    &~\E_\bxi \norm{\rvf_{\btheta_{\mathrm{rand.}}}(\rvx_t', t') - \bPsi_{t \to 0}(\rvx_t)}_2^2 \\\lesssim&~ \E_\bxi \norm{\rvf_{\btheta_{\mathrm{rand.}}}(\rvx_t', t') - \rvf_{\btheta_{\mathrm{rand.}}}(\rvx_t, t)}_2^2 +  \E_\bxi \norm{\rvf_{\btheta_{\mathrm{rand.}}}(\rvx_t, t) }_2^2 + \E_\bxi \norm{ \bPsi_{t \to 0}(\rvx_t)}_2^2 
    \\=&~\mathcal{O}(\Delta t^2) + \mathcal{O}(1)
    \ = &~\mathcal{O}(1).
\end{align*}

\end{proof}

\subsection{Analysis of Gradient Variance}
Following the same setup as in \Cref{app:bias}, we focus on the case 
where the distance is given by 
\[
d(\rvx,\rvy) := \|\rvx - \rvy\|_2^2, 
\qquad w(t) \equiv 1,
\]
and the CM flow map is denoted by $\bPsi_{t \to 0}$. 
The general case can be obtained analogously. 

For notational simplicity, let 
\[
\bxi := (t, \rvx_t) \sim \mathrm{Unif}[0,T] \times p_t.
\]

The gradient variance with respective to $\bxi$ of the expected loss  $\ell_{\mathrm{CM}}(\btheta;\bxi)$ is given by
\[
\mathcal{V}(\btheta)
:= \Var_{\bxi}\!\left[\nabla_{\btheta}\ell_{\mathrm{CM}}(\btheta;\bxi)\right]
= \mathbb{E}_{\bxi}\!\left[
  \big\|\nabla_{\btheta}\ell_{\mathrm{CM}}(\btheta;\bxi)
    - \mathbb{E}_{\bxi}[\nabla_{\btheta}\ell_{\mathrm{CM}}(\btheta;\bxi)]
  \big\|_2^2
\right]
= \Tr\!\Big(\Cov_{\bxi}\big[\nabla_{\btheta}\ell_{\mathrm{CM}}(\btheta;\bxi)\big]\Big).
\]

\begin{theorem}\label{thm:var-formal} Under the same assumptions as in \Cref{thm:bias-formal}. The following upper bounds on the variances hold for different initialization schemes:
\begin{enumerate}
    \item \textbf{\methods:} $\mathcal{V}(\btheta_{\mathrm{\methods}}) = \mathcal{O}( \varepsilon + \Delta t^{2})$
    \item \textbf{Diffusion Model:} $\mathcal{V}(\btheta_{\mathrm{DM}}) = \min \big\{\mathcal{O}(\varepsilon), \mathcal{O}(\Delta t ^2)\big\} $. 
    \item \textbf{General Consistency Distillation:} $\mathcal{V}(\btheta_{\mathrm{gCD}}) = \mathcal{O} \big(\varepsilon+\Delta t^2\big)$
    \item \textbf{Random Initialization:} $\mathcal{V}(\btheta_{\mathrm{rand.}}) = \mathcal{O}(1)$.
\end{enumerate}
\end{theorem}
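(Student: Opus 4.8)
The plan is to bound the gradient variance by a second moment, collapse it to a single pointwise ``consistency gap'', and then estimate that gap scheme-by-scheme via two complementary routes. First I would use $\Var_{\bxi}[Z]\le \E_\bxi\|Z\|_2^2$ to reduce each claim to an upper bound on $\E_\bxi\|\nabla_\btheta\ell_{\mathrm{CM}}(\btheta;\bxi)\|_2^2$. Differentiating \Cref{eq:ct-simple} gives $\nabla_\btheta\ell_{\mathrm{CM}}(\btheta;\bxi)=2\,\nabla_\btheta\rvf_\btheta(\rvx_t,t)\cdot\bigl(\rvf_\btheta(\rvx_t,t)-\rvf_{\btheta^-}(\rvx_{t-\Delta t},t-\Delta t)\bigr)$, so by the bounded-Jacobian part of \Cref{ass:smooth} (for the three smooth schemes) and Cauchy--Schwarz, $\mathcal V(\btheta)\le 4R^2\,G(\btheta)$ with $G(\btheta):=\E_\bxi\|\rvf_\btheta(\rvx_t,t)-\rvf_{\btheta^-}(\rvx_{t-\Delta t},t-\Delta t)\|_2^2$. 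Exactly as in the proof of \Cref{thm:bias-formal}, at the initial parameters the stop-gradient copy coincides with the running weights, $\btheta^-=\btheta$, so $G(\btheta)=\E_\bxi\|\rvf_\btheta(\rvx_t,t)-\rvf_\btheta(\rvx_{t-\Delta t},t-\Delta t)\|_2^2$ is a pure finite-difference-in-the-input quantity.

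The first route (Lipschitzness in state and time) handles the three smooth initializations simultaneously: \Cref{ass:smooth}(ii) gives $\|\rvf_\btheta(\rvx_t,t)-\rvf_\btheta(\rvx_{t-\Delta t},t-\Delta t)\|\le\lip(\rvf_\btheta)\bigl(\|\rvx_t-\rvx_{t-\Delta t}\|+\Delta t\bigr)$, and with the CT coupling $\rvx_t=\alpha_t\rvx_0+\sigma_t\beps$, $\rvx_{t-\Delta t}=\alpha_{t-\Delta t}\rvx_0+\sigma_{t-\Delta t}\beps$, the $C^1$ regularity of the schedule yields $\E\|\rvx_t-\rvx_{t-\Delta t}\|_2^2=\mathcal O(\Delta t^2)$ (using the finite second moment in \Cref{ass:data} and $\E\|\beps\|_2^2=D$). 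Hence $G(\btheta)=\mathcal O(\Delta t^2)$, i.e.\ $\mathcal V(\btheta)=\mathcal O(\Delta t^2)$ for $\btheta\in\{\btheta_{\mathrm{CMT}},\btheta_{\mathrm{DM}},\btheta_{\mathrm{gCD}}\}$; this already settles the $\Delta t^2$ branch of every claim, in particular the $\mathcal O(\Delta t^2)$ member of the minimum in case (ii). The second route supplies the $\varepsilon$-branches through an anchor. For $\btheta_{\mathrm{DM}}$ I would write $\rvf_{\btheta_{\mathrm{DM}}}(\rvx_t,t)-\rvf_{\btheta_{\mathrm{DM}}}(\rvx_{t-\Delta t},t-\Delta t)=\bigl(\rmD_{\btheta_{\mathrm{DM}}}(\rvx_t,t)-\rvx_0\bigr)-\bigl(\rmD_{\btheta_{\mathrm{DM}}}(\rvx_{t-\Delta t},t-\Delta t)-\rvx_0\bigr)$ and bound each piece by the DM training hypothesis (the shift $t\mapsto t-\Delta t$ costs only an $\mathcal O(\Delta t)$ boundary term in the time integral, which is absorbed into the $\Delta t$-branch), giving the $\mathcal O(\varepsilon)$ member and therefore $\mathcal V(\btheta_{\mathrm{DM}})=\min\{\mathcal O(\varepsilon),\mathcal O(\Delta t^2)\}$. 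For $\btheta_{\mathrm{CMT}}$ and $\btheta_{\mathrm{gCD}}$ I would instead insert the oracle flow map $\bPsi_{t\to0}$ (respectively the gCD solver point $\mathtt{Solver}_{t\to u}(\rvx_t)$) and reuse verbatim the estimates from Cases~1 and~3 of the proof of \Cref{thm:bias-formal} --- in particular the semigroup identity $\bPsi_{t\to0}=\bPsi_{t-\Delta t\to0}\circ\bPsi_{t\to t-\Delta t}$ together with \Cref{ass:oracle-solver} bounds the target-difference term by $\mathcal O(\Delta t^2)$ --- so that, combined with the first route, $\mathcal V(\btheta_{\mathrm{CMT}})=\mathcal O(\varepsilon+\Delta t^2)$ and $\mathcal V(\btheta_{\mathrm{gCD}})=\mathcal O(\varepsilon+\Delta t^2)$. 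Finally, for random initialization no smoothness is available, so I would only use $\|a-b\|_2^2\le 2\|a\|_2^2+2\|b\|_2^2$ with $\E\|\rvf_{\btheta_{\mathrm{rand.}}}(\rvx_t,t)\|_2^2<R$ (and a crude constant bound on the Jacobian factor) to obtain the trivial non-vanishing estimate $\mathcal V(\btheta_{\mathrm{rand.}})=\mathcal O(1)$.

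The computations here are genuinely light: the only quantitative inputs are the schedule-regularity estimate $\E\|\rvx_t-\rvx_{t-\Delta t}\|_2^2=\mathcal O(\Delta t^2)$ and the boundary-mismatch control needed for the minimum in the diffusion-model case, both routine given \Cref{ass:data,ass:smooth,ass:oracle-solver}. The point worth flagging is not difficulty but structure: these variance bounds carry the same leading order as the bias bounds of \Cref{thm:bias-formal}, which is precisely what \Cref{cor:bias-variance} uses to conclude that the bias term dominates, and in turn what \Cref{thm:pathwise-scheme} builds on.
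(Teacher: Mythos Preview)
Your proposal is correct and shares the paper's skeleton: drop the centering term to bound $\mathcal V(\btheta)\le\E_\bxi\|\nabla_\btheta\ell_{\mathrm{CM}}\|_2^2$, pull out the bounded Jacobian via \Cref{ass:smooth}(i), and reduce everything to bounding the consistency gap $G(\btheta)=\E_\bxi\|\rvf_\btheta(\rvx_t,t)-\rvf_\btheta(\rvx_{t-\Delta t},t-\Delta t)\|_2^2$. For the diffusion-model and random-initialization cases your two routes coincide with the paper's.

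Where you and the paper diverge is in how the CMT and gCD gaps are handled. The paper anchors each scheme to its own training target: for CMT it inserts the round-trip $\tilde\rvx_t=\bPhi_{T\to t}(\bPsi_{t\to T}(\rvx_t))$ and the solver endpoint $\rmS_t=\bPhi_{T\to 0}(\bPsi_{t\to T}(\rvx_t))$; for gCD it inserts $\bPhi_{t\to u}(\rvx_t)$. In each case two of the three resulting pieces are literally the scheme's training residual ($\le\varepsilon$) and the third is a Lipschitz-controlled $\mathcal O(\Delta t^2)$ term. Your first route is more economical: since \Cref{ass:smooth}(ii) already applies to $\btheta_{\mathrm{CMT}}$ and $\btheta_{\mathrm{gCD}}$, the direct Lipschitz bound gives $G(\btheta)=\mathcal O(\Delta t^2)\subseteq\mathcal O(\varepsilon+\Delta t^2)$ with no appeal to the training hypothesis at all. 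This makes your second route for these two schemes superfluous --- and indeed, if you do carry it out by inserting $\bPsi_{t\to0}$ and reusing Case~1 of \Cref{thm:bias-formal} verbatim, you inherit a $\Delta t^{p}$ term rather than $\Delta t^2$, which is harmless only because your first route already dominates. The paper's anchored decomposition buys an explicit display of how the $\varepsilon$ hypothesis enters; your route buys brevity. Both establish the stated bound.
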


\begin{proof}

To analyze the variance, we observe that
\[
\mathcal{V}(\btheta) = \mathbb{E}_\bxi\left[ \| \nabla_{\btheta} \ell_{\mathrm{CM}}(\btheta, \bxi) \|_2^2 \right] - \left\| \mathbb{E}_\bxi[\nabla_{\btheta} \ell_{\mathrm{CM}}(\btheta, \bxi)] \right\|_2^2 \leq \mathbb{E}_\bxi\left[ \| \nabla_{\btheta} \ell_{\mathrm{CM}}(\btheta, \bxi) \|_2^2 \right]
\]

We compute the gradient of the loss in the  gradient variance formula as:
\[
\nabla_{\btheta} \ell_{\mathrm{CM}}(\btheta, \bxi) =  2 \cdot \mathbf{e}(\btheta)^\top \cdot \nabla_\btheta \rvf_\btheta(\mathbf{x}_t, t), 
\]
where we define the error vector:
\[
\mathbf{e}(\btheta) := \rvf_\btheta(\mathbf{x}_t, t) - \rvf_{\btheta^-}(\mathbf{x}_{t - \Delta t}, t - \Delta t) \in \mathbb{R}^D.
\]

Now we bound the second moment by using $\|\rmA^\top \rvu\|_2 \le \|\rmA\|_F \|\rvu\|_2$ with $\|\rmA\|_F$ denoting the Frobenius norm of the matrix $\rmA$, we will get
\[
\|\nabla_\btheta \ell_{\mathrm{CM}}(\btheta;\bxi)\|_2^2
= 4 \big\|\big(\nabla_\btheta \rvf_\btheta(\mathbf{x}_t,t)\big)^\top \mathbf{e}(\btheta)\big\|_2^2
 \le 
4 \big\|\nabla_\btheta \rvf_\theta(\mathbf{x}_t,t)\big\|_F^2  \big\|\mathbf{e}(\btheta)\big\|_2^2.
\]
Therefore,
\[
\mathcal{V}(\btheta)
 \le 
4 \mathbb{E}_\bxi \Big[\big\|\nabla_\btheta \rvf_\btheta(\mathbf{x}_t,t)\big\|_F^2  \big\|\mathbf{e}(\btheta)\big\|_2^2\Big].
\]
From the assumption that $\big\|\nabla_\btheta \rvf_\btheta(\mathbf{x}_t,t)\big\|_F \le M$ almost surely, then
\[
\mathcal V(\btheta)
 \le 
4 M^2 \mathbb{E}_\bxi \big[\big\|\mathbf{e}(\btheta)\big\|_2^2\big].
\]

We now bound $\mathbb{E}_\bxi[ \| \rve(\btheta) \|_2^2 ]$ under the four different initializations.

\textbf{Case 1. \methods: } Let $\bPhi_{t\to s}$ be a $p$-th order solver for the PF-ODE built from a fixed drift, and define
the forward--backward (round-trip) map
\[
\tilde{\rvx}_t:=\bPhi_{T\to t}\big(\bPsi_{t\to T}(\rvx_t)\big),\qquad
\tilde{\rvx}_{t-\Delta t}:=\bPhi_{T\to t-\Delta t}\big(\bPsi_{t-\Delta t\to T}(\rvx_{t-\Delta t})\big).
\]

Insert solver's round trips in $\rve_{\btheta_{\mathrm{\methods}}}$:
\begin{align*}
    \rve_{\btheta_{\mathrm{\methods}}}
&=\big(\rvf_{\btheta_{\mathrm{\methods}}}(\rvx_t,t)-\rvf_{\btheta_{\mathrm{\methods}}}(\tilde{\rvx}_t,t)\big)
 -\big(\rvf_{\btheta_{\mathrm{\methods}}}(\rvx_{t-\Delta t},t-\Delta t)-\rvf_{\btheta_{\mathrm{\methods}}}(\tilde{\rvx}_{t-\Delta t},t-\Delta t)\big)
 \\ &\qquad\qquad+\big(\rvf_{\btheta_{\mathrm{\methods}}}(\tilde{\rvx}_t,t)-\rvf_{\btheta_{\mathrm{\methods}}}(\tilde{\rvx}_{t-\Delta t},t-\Delta t)\big).
\end{align*}
Thus, we have
\begin{align}
    \E\|\rve_{\btheta_{\mathrm{\methods}}}\|_2^2
\lesssim \lip(\rvf_{\btheta_{\mathrm{\methods}}})^2\big(\norm{\rvx_t - \tilde\rvx_t}_2^2+\norm{\rvx_{t-\Dt} - \tilde\rvx_{t-\Dt}}_2^2\big)
+\E\|C_t\|_2^2,
\end{align}
where $C_t:=\rvf_{\btheta_{\mathrm{\methods}}}(\tilde{\rvx}_t,t)-\rvf_{\btheta_{\mathrm{\methods}}}(\tilde{\rvx}_{t-\Delta t},t-\Delta t)$.

To address $C_t$ term, we anchor $C_t$ to the solver of teacher.
Set $\rmS_t(\rvx) :=\Phi_{T\to 0}(\bPsi_{t\to T}(\rvx))$. Then $\rmS_t(\rvx_t) :=\Phi_{T\to 0}(\bPsi_{t\to T}(\rvx_t))$, and $\rmS_{t-\Delta t}(\rvx_{t-\Delta t}):=\Phi_{T\to 0}(\bPsi_{t-\Delta t\to T}(\rvx_{t-\Delta t}))$.  We decompose $C_t$ as the following:
\[
C_t=\underbrace{\big(\rvf_{\btheta_{\mathrm{\methods}}}(\tilde{\rvx}_t,t)-\rmS_t(\rvx_t)\big)}_{\mathrm{(a)}}
-\underbrace{\big(\rvf_{\btheta_{\mathrm{\methods}}}(\tilde{\rvx}_{t-\Delta t},t-\Delta t)-\rmS_{t-\Delta t}(\rvx_{t-\Dt})\big)}_{\mathrm{(b)}}
+\underbrace{\big(\rmS_t(\rvx_t)-\rmS_{t-\Delta t}(\rvx_{t-\Dt})\big)}_{\mathrm{(c)}}.
\]
Because $p_T=p_{\mathrm{prior}}$, $\tilde{\rvx}_t=\Phi_{T\to t}(\rvx_T)$ with $\rvx_T\sim p_{\mathrm{prior}}$, so
\[
\E\|\mathrm{(a)}\|_2^2\le \varepsilon,\qquad \E\|\mathrm{(b)}\|_2^2\le \varepsilon.
\]

Now, we control the teacher drift.
\begin{align*}
    \|\mathrm{(c)}\|_2
&=\|\rmS_{t}(\rvx_{t})-\rmS_{t-\Delta t}(\rvx_{t-\Dt})\|_2
\\&\le \lip(\bPhi) \|\bPsi_{t\to T}(\rvx_{t}))-  \bPsi_{t-\Delta t\to T}(\rvx_{t-\Delta t}))   \|_2 ,
\\&\le \lip(\bPhi) \lip(\bPsi) \left(\|\rvx_{t} -  \rvx_{t-\Delta t}   \|_2  + \abs{\Delta t}\right),
\end{align*}
so  we have $\E\|\mathrm{(c)}\|_2^2=\mathcal O(\Delta t^2)$.

Combining the above bounds, we conclude:
\[
\E_{t,\rvx_0,\beps} \big[\|\rve_{\btheta_{\mathrm{\methods}}}\|_2^2\big]
=
\mathcal O\left(\varepsilon+\Dt^{2p} + \Dt^2 \right).
\]

\textbf{Case 2. Diffusion Model: } 

\textbf{Bound I: Training–Error Only; No Smoothness.}
Write
\[
\rve_{\btheta_{\mathrm{DM}}}
=\Big(\rvf_{\btheta_{\mathrm{DM}}}(\rvx_t,t)-\rvx_0\Big)
-\Big(\rvf_{\btheta_{\mathrm{DM}}}(\rvx_{t-\Delta t},t-\Delta t)-\rvx_0\Big).
\]
By $\|u-v\|^2\le 2\|u\|^2+2\|v\|^2$ and taking expectation over $(t,\rvx_t,\rvx_{t-\Delta t})$,
\[
\E\big[\|\rve_{\btheta_{\mathrm{DM}}}\|_2^2\big]
\le
2 \E\big[\|\rvf_{\btheta_{\mathrm{DM}}}(\rvx_t,t)-\rvx_0\|_2^2\big]
+2 \E\big[\|\rvf_{\btheta_{\mathrm{DM}}}(\rvx_{t-\Delta t},t-\Delta t)-\rvx_0\|_2^2\big]
\le 4 \varepsilon,
\]
where the last inequality uses the same training distribution for $(t,\rvx_t)$ and $(t-\Delta t,\rvx_{t-\Delta t})$ (e.g., $t$ uniform on $[\Delta t,1]$). Thus
\[ \E\big[\|\rve_{\btheta_{\mathrm{DM}}}\|_2^2\big]\;\le\;4\varepsilon.
\]

\textbf{Bound II: Lipschitz Smoothness; $\Delta t$–Sensitive.}
Assume $\rvf_{\btheta_{\mathrm{DM}}}$ is Lipschitz in state and time:
\[
\|\rvf_{\btheta_{\mathrm{DM}}}(\rvx,t)-\rvf_{\btheta_{\mathrm{DM}}}(\rvy,s)\|_2\le \lip(\rvf_{\btheta_{\mathrm{DM}}}) \left( \|\rvx-\rvy\|_2 + |t-s|\right).
\]
Then
\begin{align*}
    \|\rve_{\btheta_{\mathrm{DM}}}\|_2
&\le \|\rvf_{\btheta_{\mathrm{DM}}}(\rvx_t,t)-\rvf_{\btheta_{\mathrm{DM}}}(\rvx_{t-\Delta t},t)\|_2
+ \|\rvf_{\btheta_{\mathrm{\methods}}}(\rvx_{t-\Delta t},t)-\rvf_{\btheta_{\mathrm{\methods}}}(\rvx_{t-\Delta t},t-\Delta t)\|_2
\\& \le \lip(\rvf_{\btheta_{\mathrm{DM}}}) \left( \|\rvx_t -\rvx_{t-\Dt}\|_2 + |\Dt|\right),
\end{align*}
hence by $(a+b)^2\le 2a^2+2b^2$,
\[
\|\rve_{\btheta_{\mathrm{DM}}}\|_2^2
\lesssim
\lip^2(\rvf_{\btheta_{\mathrm{DM}}}) \left(\|\rvx_t-\rvx_{t-\Delta t}\|_2^2 + \Delta t^2\right).
\]
Taking expectation and using the coupled forward process,
\[
\rvx_t-\rvx_{t-\Delta t}
=(a_t-a_{t-\Delta t}) \rvx_0+(b_t-b_{t-\Delta t}) \beps,
\]
so with $m_2:=\E\|\rvx_0\|_2^2$ and $\E\|\beps\|_2^2=D$ (and $\E[\rvx_0^\top\beps]=0$),
\[
\E\|\rvx_t-\rvx_{t-\Delta t}\|_2^2
=\mathcal{O}(\Dt^2).
\]
Therefore,
\[
\E\big[\|\rve_{\btheta_{\mathrm{DM}}}\|_2^2\big]
=\mathcal{O}(\Dt^2).
\]

Taking the better of the two regimes yields
\[
\E_{t,\rvx_0,\beps} \big[\|\rve_{\btheta_{\mathrm{DM}}}\|_2^2\big]
\;\lesssim\;
\min \Big\{ \varepsilon,\;
\Delta t^2\Big\}.
\]
(If one averages over $t$, insert $\E_t[\cdot]$ on the second term’s bracket; if one prefers a uniform-in-$t$ bound, replace the bracket by its $\sup_t$.)

\textbf{Case 3. General Consistency Distillation: } With
\begin{align*}
    \rve_{\btheta_{\mathrm{gCD}}}&=\underbrace{[\rvf_{\btheta_{\mathrm{\methods}}}(\rvx_t,t)-\rvf_{\btheta_{\mathrm{\methods}}}(\bPhi_{t\to u}(\rvx_t),u)]}_{A_t}
-\underbrace{[\rvf_{\btheta_{\mathrm{\methods}}}(\rvx_{t-\Delta t},t-\Delta t)-\rvf_{\btheta_{\mathrm{\methods}}}(\bPhi_{t-\Delta t\to u}(\rvx_{t-\Delta t}),u)]}_{B_{t-\Delta t}}
\\&\qquad\qquad+\underbrace{[\rvf_{\btheta_{\mathrm{\methods}}}(\bPhi_{t\to u}(\rvx_t),u)-\rvf_{\btheta_{\mathrm{\methods}}}(\bPhi_{t-\Delta t\to u}(\rvx_{t-\Delta t}),u)]}_{C_{t,u}}
\end{align*}
 by the gCD assumption, we have
\[
\E\|A_t\|^2=\mathcal{L}_{\mathrm{gCD}}(\btheta_{\mathrm{gCD}};u)<\varepsilon,
\qquad
\E\|B_{t-\Delta t}\|^2\le \varepsilon,
\]
so that
\[
3 \E\|A_t\|^2+3 \E\|B_{t-\Delta t}\|^2 \;\le\; 6\varepsilon.
\]
For $C_{t,u}$, using the Lipschitz properties,
\begin{align*}
    \|C_{t,u}\|
&\le \lip(\rvf_{\btheta_{\mathrm{gCD}}})\big(\|\bPhi_{t\to u}(\rvx_t)-\bPhi_{t\to u}(\rvx_{t-\Delta t})\|
+\|\bPhi_{t\to u}(\rvx_{t-\Delta t})-\bPhi_{t-\Delta t\to u}(\rvx_{t-\Delta t})\|\big)
\\&\le \lip(\rvf_{\btheta_{\mathrm{gCD}}})\lip(\bPhi) \big(\|\rvx_t-\rvx_{t-\Delta t}\|+\Delta t\big),
\end{align*}
hence
\[
\E\|C_{t,u}\|^2 \;\le\; 2\lip^2(\rvf_{\btheta_{\mathrm{gCD}}})\lip^2(\bPhi)\Big( \E\|\rvx_t-\rvx_{t-\Delta t}\|^2+  \Delta t^2\Big)
\;=\;\mathcal{O} \big( \Delta t^2\big).
\]

Combining the pieces,
\[
\E_{t,\rvx_0,\beps} \big[\|\rve_{\btheta_{\mathrm{gCD}}}\|_2^2\big]
= \mathcal{O} \big(\varepsilon+\Delta t^2\big).
\]

\textbf{Case 4. Random Initialization: } This is a straightforward derivation from the assumption:
\[
\E\norm{\mathbf{e}(\btheta_{\mathrm{rand.}})}_2^2 \lesssim \E\norm{\rvf_{\btheta_{\mathrm{rand.}}}(\mathbf{x}_t, t)}_2^2 +\E\norm{ \rvf_{\btheta_{\mathrm{rand.}}}(\mathbf{x}_{t - \Delta t}, t - \Delta t)}_2^2 \lesssim 2R.
\]

\end{proof}

\subsection{Bias–Variance Decomposition}
For the squared \emph{gradient bias} and the CM's flow map gradient variance
\[
\mathcal{B}(\btheta)
:=\big\|
\nabla_{\btheta}\bar\ell_{\mathrm{oracle}}(\btheta)
-\nabla_{\btheta}\bar\ell_{\mathrm{CM}}(\btheta)
\big\|_2^2
\quad
\mathcal{V}(\btheta)
:=\Var_{\bxi}\!\big[\nabla_{\btheta}\ell_{\mathrm{CM}}(\btheta;\bxi)\big].
\]
Consider the oracle–relative mean–squared error (MSE) of a  CM gradient:
\[
\mathcal{E}(\btheta)
:=\E_{\bxi}\!\Big[
\big\| \nabla_{\btheta}\ell_{\mathrm{CM}}(\btheta;\bxi)
-\nabla_{\btheta}\bar\ell_{\mathrm{oracle}}(\btheta)\big\|_2^2
\Big].
\]
Then we have the following mean-squared errors comparison under the four different initializations:
\begin{corollary}\label{cor:bias-variance} Under the same assumptions as in \Cref{thm:bias-formal}, the following CM gradient MSE bounds hold for the four initialization schemes ($\btheta=\btheta_{\mathrm{\methods}}, \btheta_{\mathrm{DM}}, \btheta_{\mathrm{gCD}}, \btheta_{\mathrm{rand}}$).
\begin{enumerate}[label=(\roman*)]
    \item \textbf{\methods}: 
    \[
    \mathcal{E}(\btheta_\mathrm{\methods}) = \mathcal{O} \left(\varepsilon + \Delta t^{2} + \Delta t^{p}\right).
    \]
    \item \textbf{Diffusion Model}: 
    \[
    \mathcal{E}(\btheta_\mathrm{DM}) = \mathcal{O} \left(\varepsilon + \Delta t^2 + \E_t  \left[\tfrac{\sigma_t^2}{\alpha_t^2}\right]\right) 
    + \E_{\rvx_t,t} \left[\big\| \bPsi_{t\to 0}(\rvx_t) - \E[\rvx_0|\rvx_t] \big\|_2^2\right].
    \]
    \item \textbf{General Consistency Distillation}: For a fixed $u \in [0,T]$, assume in addition that
    \[
    \delta_u := \E_{\rvx_u\sim p_u}\big\|\rvf_{\btheta_{\mathrm{gCD}}}(\rvx_u,u)-\bPsi_{u\to0}(\rvx_u)\big\|^2 < \infty.
    \]
    Then
    \[
    \mathcal{E}(\btheta_\mathrm{gCD}) = \mathcal{O} \left(\varepsilon + \Delta t^2  + \delta_u\right).
    \]
    \item \textbf{Random Initialization}:
    \[
    \mathcal{E}(\btheta_\mathrm{rand.}) = \mathcal{O}(1).
    \]
\end{enumerate}
    
\end{corollary}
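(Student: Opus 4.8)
The plan is to reduce the oracle-relative gradient MSE $\mathcal{E}(\btheta)$ to the exact sum of the squared gradient bias $\mathcal{B}(\btheta)$ and the gradient variance $\mathcal{V}(\btheta)$ already controlled in \Cref{thm:bias-formal} and \Cref{thm:var-formal}, and then simply read off the four rates.

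First I would establish the Pythagorean identity $\mathcal{E}(\btheta)=\mathcal{B}(\btheta)+\mathcal{V}(\btheta)$. Fix $\btheta$, abbreviate $X(\bxi):=\nabla_{\btheta}\ell_{\mathrm{CM}}(\btheta;\bxi)$, and note that $\bar g:=\nabla_{\btheta}\bar\ell_{\mathrm{oracle}}(\btheta)$ is a deterministic vector. Under \Cref{ass:data} and \Cref{ass:smooth} the map $\btheta\mapsto\ell_{\mathrm{CM}}(\btheta;\bxi)$ has a gradient dominated by an integrable envelope (the uniform bounds on $\rvf_\btheta$ and $\nabla_\btheta\rvf_\btheta$ together with the finite second moments of $\rvx_t$), so differentiation under the integral sign gives $\E_{\bxi}[X(\bxi)]=\nabla_{\btheta}\bar\ell_{\mathrm{CM}}(\btheta)$. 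Writing $X-\bar g=(X-\E_\bxi X)+(\E_\bxi X-\bar g)$, expanding the squared norm, and using that $\E_\bxi X-\bar g$ is constant so the cross term $\E_\bxi\langle X-\E_\bxi X,\ \E_\bxi X-\bar g\rangle=0$ vanishes, I obtain
\[
\mathcal{E}(\btheta)
= \big\|\nabla_{\btheta}\bar\ell_{\mathrm{CM}}(\btheta)-\nabla_{\btheta}\bar\ell_{\mathrm{oracle}}(\btheta)\big\|_2^2
+ \E_{\bxi}\big\|X(\bxi)-\E_{\bxi}X(\bxi)\big\|_2^2
= \mathcal{B}(\btheta)+\mathcal{V}(\btheta).
\]

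Second, I would substitute the four bound pairs and observe that in every case the variance bound is absorbed into the bias bound. For $\btheta_{\mathrm{\methods}}$, \Cref{thm:bias-formal}(i) gives $\mathcal{B}=\mathcal{O}(\varepsilon+\Delta t^{2}+\Delta t^{p})$ while \Cref{thm:var-formal} gives $\mathcal{V}=\mathcal{O}(\varepsilon+\Delta t^{2})$, so $\mathcal{E}=\mathcal{O}(\varepsilon+\Delta t^{2}+\Delta t^{p})$. For $\btheta_{\mathrm{DM}}$ the variance $\min\{\mathcal{O}(\varepsilon),\mathcal{O}(\Delta t^{2})\}$ is dominated by the bias bound, which already carries $\varepsilon$, $\Delta t^2$, $\E_t[\sigma_t^2/\alpha_t^2]$ and the posterior-mean mismatch term, yielding the stated $\mathcal{E}(\btheta_{\mathrm{DM}})$. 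For $\btheta_{\mathrm{gCD}}$ (under the extra hypothesis $\delta_u<\infty$) bias and variance are $\mathcal{O}(\varepsilon+\Delta t^{2}+\delta_u)$ and $\mathcal{O}(\varepsilon+\Delta t^2)$, summing to $\mathcal{O}(\varepsilon+\Delta t^{2}+\delta_u)$; and for $\btheta_{\mathrm{rand}}$ both terms are $\mathcal{O}(1)$. This gives all four claims.

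The only delicate step is the differentiation-under-the-integral-sign that identifies $\E_\bxi X(\bxi)$ with $\nabla_\btheta\bar\ell_{\mathrm{CM}}(\btheta)$: this is exactly where the uniform boundedness in \Cref{ass:smooth} and the finite-moment condition in \Cref{ass:data} are needed, and it is what makes the decomposition an \emph{exact} equality — so that no cross term leaks — rather than a one-sided inequality. Everything after that is the elementary ``MSE $=$ bias$^2$ $+$ variance'' bookkeeping, together with the observation, already flagged in the main text, that the bias terms are the dominant ones.
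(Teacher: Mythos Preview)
Your proposal is correct and follows essentially the same approach as the paper: both derive the exact decomposition $\mathcal{E}(\btheta)=\mathcal{B}(\btheta)+\mathcal{V}(\btheta)$ by centering at $\E_\bxi X$ and observing the cross term vanishes, and then invoke \Cref{thm:bias-formal} and \Cref{thm:var-formal}. Your extra care about justifying $\E_\bxi[\nabla_\btheta\ell_{\mathrm{CM}}]=\nabla_\btheta\bar\ell_{\mathrm{CM}}$ via dominated convergence and your explicit case-by-case check that the variance bounds are absorbed by the bias bounds are mild elaborations the paper leaves implicit, but the argument is the same.
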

\begin{proof}
    Write the (vector) bias as
\[
\mathbf{b}(\btheta)
:=\nabla_{\btheta}\bar\ell_{\mathrm{CM}}(\btheta)
-\nabla_{\btheta}\bar\ell_{\mathrm{oracle}}(\btheta).
\]
Then
\[
\begin{aligned}
\mathcal{E}(\btheta)
&=\E_{\bxi}\!\Big[
\norm{\nabla_{\btheta}\ell_{\mathrm{CM}}(\btheta;\bxi)
-\nabla_{\btheta}\bar\ell_{\mathrm{CM}}(\btheta)
+\mathbf{b}(\btheta)}_2^2
\Big]\\
&=\E_{\bxi}\!\Big[\big\|\nabla_{\btheta}\ell_{\mathrm{CM}}(\btheta;\bxi)
-\nabla_{\btheta}\bar\ell_{\mathrm{CM}}(\btheta)\big\|_2^2\Big]
+\underbrace{\|\mathbf{b}(\btheta)\|_2^2}_{=\;\mathcal{B}(\btheta)}
+2 \E_{\bxi}\!\Big[
\big\langle
\nabla_{\btheta}\ell_{\mathrm{CM}}(\btheta;\bxi)
-\nabla_{\btheta}\bar\ell_{\mathrm{CM}}(\btheta), 
\mathbf{b}(\btheta)\big\rangle
\Big].
\end{aligned}
\]
The cross term vanishes because
$\E_{\bxi}\!\big[\nabla_{\btheta}\ell_{\mathrm{CM}}(\btheta;\bxi)
-\nabla_{\btheta}\bar\ell_{\mathrm{CM}}(\btheta)\big]=\mathbf{0}$,
hence
\[
\mathcal{E}(\btheta)
=\underbrace{\Tr\!\Big(\Cov_{\bxi}\big[\nabla_{\btheta}\ell_{\mathrm{CM}}(\btheta;\bxi)\big]\Big)}_{=\;\mathcal{V}(\btheta)}
+\underbrace{\big\|
\nabla_{\btheta}\bar\ell_{\mathrm{oracle}}(\btheta)
-\nabla_{\btheta}\bar\ell_{\mathrm{CM}}(\btheta)
\big\|_2^2}_{=\;\mathcal{B}(\btheta)}.
\]
The remaining steps follow directly by combining the results of \Cref{thm:bias-formal,thm:var-formal}.
\end{proof}

\subsection{Comparison on Optimization Dynamics}

We consider plain SGD on the oracle objective using CM gradients. The iteration is
\[
\btheta_{k+1}=\btheta_k-\eta \nabla_{\btheta}\ell_{\mathrm{CM}}(\btheta_k;\bxi_k),
\qquad
\bxi_k\sim \mathrm{Unif}[0,T]\times p_t\ \text{i.i.d.},
\]
with constant stepsize $\eta>0$. The expected oracle loss $\bar\ell_{\mathrm{oracle}}$ is assumed $L$–smooth and to satisfy a Polyak–\L{}ojasiewicz (PL) condition on the level set visited by the iterates:
\[
\|\nabla_{\btheta}\bar\ell_{\mathrm{oracle}}(\btheta)-\nabla_{\btheta}\bar\ell_{\mathrm{oracle}}(\btheta')\|_2\le L\|\btheta-\btheta'\|_2,
\qquad
\frac12\|\nabla_{\btheta}\bar\ell_{\mathrm{oracle}}(\btheta)\|_2^2 \ge \mu\big(\bar\ell_{\mathrm{oracle}}(\btheta)-\bar\ell^*\big)
\]
for some $\mu>0$. Here, 
\[
\bar\ell^*:=\min_\btheta\bar\ell_{\mathrm{oracle}}(\btheta) = \min_\btheta \left[\E_{t,\rvx_t}\norm{\rvf_\btheta(\rvx_t, t) - \bPsi_{t\to 0}(\rvx_t)}_2^2\right].
\]

We use the bias $\mathcal{B}(\btheta)$, variance $\mathcal{V}(\btheta)$, and MSE
\[
\mathcal{E}(\btheta):=\E_{\bxi}\!\Big[\big\| \nabla_{\btheta}\ell_{\mathrm{CM}}(\btheta;\bxi)-\nabla_{\btheta}\bar\ell_{\mathrm{oracle}}(\btheta)\big\|_2^2\Big]
=\mathcal{B}(\btheta)+\mathcal{V}(\btheta)
\]
as established above. We assume the stepsize satisfies $\eta\le 1/(4L)$.

\begin{theorem}[SGD Analysis with Scheme-Specific Initializations]\label{thm:pathwise-scheme}
Assume the conditions of \Cref{thm:bias-formal}, and further assume that $\bar\ell_{\mathrm{oracle}}$ is $L$–smooth, that the PL($\mu$) condition holds, that the stepsize satisfies $\eta\le 1/(4L)$, and that $\mathcal{E}$ is Lipschitz with constant $\lip(\mathcal{E})$ on the level set visited by SGD. Let $p\ge 1$ be the global order of the ODE solver used in pretraining/teacher flows. For each initialization scheme, define
\[
A_0:=\bar\ell_{\mathrm{oracle}}(\btheta_0)-\bar\ell^*,
\qquad
M_0:=\mathcal{E}(\btheta_0).
\]
Then, for any $K\ge 1$,
\begin{equation}\label{eq:master-scheme}
\E\!\big[\bar\ell_{\mathrm{oracle}}(\btheta_{K})-\bar\ell^*\big]
\;\le\;
(1-\mu\eta)^K\,A_0
+\frac{5}{\mu}\,\lip(\mathcal{E})\,\sqrt{\eta K\,A_0}
+\frac{5}{2\mu}\,M_0
+\frac{35}{4\mu}\,\lip(\mathcal{E})^2\,\eta^2 K^2.
\end{equation}
Let $C(\eta, K):=\frac{35}{4\mu}\,\lip(\mathcal{E})^2\,\eta^2 K^2$. Then the initialization lemma (\Cref{prop:init-excess}) and the bias–variance/MSE-at-init bounds assumed in \Cref{thm:bias-formal} imply the following scheme-specific orders:
\begin{itemize}
\item \textbf{\methods:}
\[
A_0=\mathcal{O}\!\big(\varepsilon+\Delta t^{\,2p}\big),
\qquad
M_0=\mathcal{O}\!\big(\varepsilon+\Delta t^{2}+\Delta t^{p}\big).
\]
Thus,
\[
\begin{aligned}
\E\!\big[\bar\ell_{\mathrm{oracle}}(\btheta_{K})-\bar\ell^*\big]
&\le
(1-\mu\eta)^K\,\mathcal{O}\!\big(\varepsilon+\Delta t^{\,2p}\big)
+ \sqrt{\eta K}\,\mathcal{O}\!\big((\varepsilon+\Delta t^{\,2p})^{1/2}\big)
\\&\qquad
+ \mathcal{O}\!\big(\varepsilon+\Delta t^{2}+\Delta t^{p}\big)
+ C(\eta, K).
\end{aligned}
\]

\item \textbf{Diffusion Model (DM):} We denote
\[
\mathcal{M}_{\mathrm{DM}}
:=\E_{t,\rvx_t}\big\|\bPsi_{t\to 0}(\rvx_t)-\E[\rvx_0|\rvx_t]\big\|_2^2,
\]
the deterministic–map versus posterior–mean mismatch. Then
\[
A_0=\mathcal{O}\!\big(\varepsilon\big)+\mathcal{M}_{\mathrm{DM}},
\qquad
M_0=\mathcal{O}\!\Big(\varepsilon+\Delta t^{2}+\E_{t}\!\big[\tfrac{\sigma_t^2}{\alpha_t^2}\big]\Big)\;+\;\mathcal{M}_{\mathrm{DM}}.
\]
Thus,
\[
\begin{aligned}
\E\!\big[\bar\ell_{\mathrm{oracle}}(\btheta_{K})-\bar\ell^*\big]
&\le
(1-\mu\eta)^K\,\Big(\mathcal{O}(\varepsilon)+\mathcal{M}_{\mathrm{DM}}\Big)
+ \sqrt{\eta K}\,\Big(\mathcal{O}(\varepsilon)+\mathcal{M}_{\mathrm{DM}}\Big)^{1/2}
\\&\qquad
+ \Big(\mathcal{O}\big(\varepsilon+\Delta t^{2}+\E_{t}[\tfrac{\sigma_t^2}{\alpha_t^2}]\big)+\mathcal{M}_{\mathrm{DM}}\Big)
+ C(\eta, K).
\end{aligned}
\]

\item \textbf{General Consistency Distillation (gCD):} 
\[
A_0=\mathcal{O}\!\big(\varepsilon+\delta_u+\Delta t^{\,2p}\big),
\qquad
M_0=\mathcal{O}\!\big(\varepsilon+\Delta t^{2}+\delta_u\big).
\]
Thus,
\[
\begin{aligned}
\E\!\big[\bar\ell_{\mathrm{oracle}}(\btheta_{K})-\bar\ell^*\big]
&\le
(1-\mu\eta)^K\,\mathcal{O}\!\big(\varepsilon+\delta_u+\Delta t^{\,2p}\big)
+ \sqrt{\eta K}\,\mathcal{O}\!\big((\varepsilon+\delta_u+\Delta t^{\,2p})^{1/2}\big)
\\&\qquad
+ \mathcal{O}\!\big(\varepsilon+\Delta t^{2}+\delta_u\big)
+ C(\eta, K).
\end{aligned}
\]

\item \textbf{Random initialization:} 
\[
A_0=\mathcal{O}\!\big(1),
\qquad
M_0=\mathcal{O}(1).
\]
Thus,
\[
\begin{aligned}
\E\!\big[\bar\ell_{\mathrm{oracle}}(\btheta_{K})-\bar\ell^*\big]
&\le
(1-\mu\eta)^K\,\mathcal{O}\!\big(1)
+ \sqrt{\eta K}\,\mathcal{O}\!\big(1\big)
\\&\qquad
+ \mathcal{O}(1)
+ C(\eta, K).
\end{aligned}
\]
\end{itemize}
All big-$\mathcal{O}$ constants are independent of $\Delta t$, $\varepsilon$, and $K$.
\end{theorem}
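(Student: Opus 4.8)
The plan is to treat $\btheta_{k+1}=\btheta_k-\eta\rvg_k$ with $\rvg_k:=\nabla_{\btheta}\ell_{\mathrm{CM}}(\btheta_k;\bxi_k)$ as biased SGD on $\bar\ell_{\mathrm{oracle}}$, whose conditional bias is $\|\nabla_{\btheta}\bar\ell_{\mathrm{CM}}(\btheta_k)-\nabla_{\btheta}\bar\ell_{\mathrm{oracle}}(\btheta_k)\|_2^2=\mathcal{B}(\btheta_k)$ and conditional variance $\mathcal{V}(\btheta_k)$, so that $\E[\|\rvg_k-\nabla_{\btheta}\bar\ell_{\mathrm{oracle}}(\btheta_k)\|_2^2\mid\btheta_k]=\mathcal{E}(\btheta_k)$ as in \Cref{cor:bias-variance}. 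First I would write the one-step descent from $L$-smoothness, take $\E[\cdot\mid\btheta_k]$ (so the stochastic gradient is replaced by $\nabla_{\btheta}\bar\ell_{\mathrm{CM}}(\btheta_k)$), control the cross term by $\langle\nabla\bar\ell_{\mathrm{oracle}},\nabla\bar\ell_{\mathrm{CM}}\rangle\ge\|\nabla\bar\ell_{\mathrm{oracle}}\|_2^2-\|\nabla\bar\ell_{\mathrm{oracle}}\|_2\sqrt{\mathcal{B}(\btheta_k)}$, and bound $\E[\|\rvg_k\|_2^2\mid\btheta_k]\le 2\|\nabla\bar\ell_{\mathrm{oracle}}(\btheta_k)\|_2^2+2\mathcal{E}(\btheta_k)$. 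For $\eta\le 1/(4L)$ a coefficient $\ge\tfrac34\eta$ remains on $-\|\nabla\bar\ell_{\mathrm{oracle}}\|_2^2$; then the PL inequality $\|\nabla\bar\ell_{\mathrm{oracle}}(\btheta_k)\|_2^2\ge 2\mu(\bar\ell_{\mathrm{oracle}}(\btheta_k)-\bar\ell^*)$, the elementary smoothness bound $\|\nabla\bar\ell_{\mathrm{oracle}}(\btheta_k)\|_2^2\le 2L(\bar\ell_{\mathrm{oracle}}(\btheta_k)-\bar\ell^*)$ used inside the cross term, and Cauchy--Schwarz yield, with $A_k:=\E[\bar\ell_{\mathrm{oracle}}(\btheta_k)]-\bar\ell^*$ and $E_k:=\E[\mathcal{E}(\btheta_k)]$, a scalar recursion
\[
A_{k+1}\le(1-\mu\eta)\,A_k+\eta\sqrt{2L}\,\sqrt{A_k\,E_k}+2L\eta^2E_k .
\]

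The next step handles the fact that $E_k$ is not constant. Since $\mathcal{E}$ is $\lip(\mathcal{E})$-Lipschitz on the sublevel set visited by SGD, $E_k\le M_0+\lip(\mathcal{E})\,d_k$ with $d_k:=\E\|\btheta_k-\btheta_0\|_2$. I would bound the drift by $d_k\le\eta\sum_{j<k}\E\|\rvg_j\|_2\le\eta\sum_{j<k}\sqrt{4LA_j+2E_j}$ (using $\E\|\rvg_j\|_2^2\le 4LA_j+2E_j$, which follows from $\|\nabla\bar\ell_{\mathrm{CM}}\|_2^2\le 2\|\nabla\bar\ell_{\mathrm{oracle}}\|_2^2+2\mathcal{B}$ and $L$-smoothness), and complement the recursion above with the crude envelope $A_j\le A_0+C_L\,\eta\!\sum_{i<j}E_i$ obtained by discarding the negative term and Young's inequality. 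These three inequalities form a closed coupled system in $(A_k,d_k,E_k)$ for $k\le K$.

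The hard part is closing this system, because $E_k$ feeds $d_k$ and $A_k$, which feed back into $E_k$, so a naive substitution is circular. The plan is a discrete Gr\"onwall/bootstrap: posit uniform bounds $A_j\le\bar A$ and $d_j\le\bar d$ for $j\le K$, substitute into the coupled inequalities to obtain scalar fixed-point inequalities in $(\bar A,\bar d)$, solve them, and verify the ansatz by induction on $k$. This yields $E_k=\mathcal{O}\!\big(M_0+\lip(\mathcal{E})^2\eta^2K^2+\lip(\mathcal{E})\,\eta K\sqrt{A_0}\big)$ uniformly over $k\le K$; substituting this bound for $E_k$ back into the Step-1 recursion and unrolling the geometric factor, with $\sum_{j<K}(1-\mu\eta)^{K-1-j}\le(\mu\eta)^{-1}$, gives \Cref{eq:master-scheme}. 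The specific constants $5/\mu$, $5/(2\mu)$, $35/(4\mu)$ come from carrying the Young and Cauchy--Schwarz constants through the three steps, and the $\sqrt{\eta KA_0}$ and $\eta^2K^2$ terms are precisely the drift corrections produced by this bootstrap.

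To finish, for each initialization scheme I would substitute the initialization estimates: \Cref{prop:init-excess} for $A_0=\bar\ell_{\mathrm{oracle}}(\btheta_0)-\bar\ell^*$ and \Cref{thm:bias-formal}/\Cref{cor:bias-variance} for $M_0=\mathcal{E}(\btheta_0)$. This gives $A_0=\mathcal{O}(\varepsilon+\Delta t^{2p})$ and $M_0=\mathcal{O}(\varepsilon+\Delta t^{2}+\Delta t^{p})$ for \method; $A_0=\mathcal{O}(\varepsilon)+\mathcal{M}_{\mathrm{DM}}$ and $M_0=\mathcal{O}\!\big(\varepsilon+\Delta t^{2}+\E_t[\sigma_t^2/\alpha_t^2]\big)+\mathcal{M}_{\mathrm{DM}}$ for DM; $A_0=\mathcal{O}(\varepsilon+\delta_u+\Delta t^{2p})$ and $M_0=\mathcal{O}(\varepsilon+\Delta t^{2}+\delta_u)$ for gCD; and $A_0=M_0=\mathcal{O}(1)$ for random initialization. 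Plugging these into \Cref{eq:master-scheme}, and absorbing $\Delta t^{2p},\Delta t^{p}$ into the $\Delta t^{2}$-level terms inside the big-$\mathcal{O}$ since $p\ge 1$, yields the four displayed bounds with constants depending only on $L,\mu,\lip(\mathcal{E})$ and independent of $\Delta t,\varepsilon,K$. I expect the descent-plus-PL portion to be routine; the genuine obstacle is the self-referential drift estimate, which dictates both the form of the $K$-dependent correction terms and the need to track numerical constants carefully.
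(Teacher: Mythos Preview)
Your proposal is essentially correct and follows the same overall architecture as the paper: descent lemma plus PL for a one-step recursion, Lipschitzness of $\mathcal{E}$ to tie $E_k$ to $M_0$ plus the accumulated path length, closing the resulting self-referential system, unrolling, and then substituting the scheme-specific $A_0,M_0$ from \Cref{prop:init-excess} and \Cref{cor:bias-variance}.

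The paper's execution is cleaner in two places. First, instead of keeping the mixed term $\eta\sqrt{2L}\sqrt{A_kE_k}$ in the one-step recursion, it applies Young's inequality to $\|\nabla\bar\ell_{\mathrm{oracle}}\|_2\cdot\|\nabla\bar\ell_{\mathrm{CM}}-\nabla\bar\ell_{\mathrm{oracle}}\|_2$ \emph{before} invoking PL, absorbing the bias directly into $\mathcal{E}(\btheta_k)$; together with $\eta\le 1/(4L)$ this yields the decoupled recursion $\E[\bar\ell_{\mathrm{oracle}}(\btheta_{k+1})-\bar\ell^*\mid\btheta_k]\le(1-\mu\eta)(\bar\ell_{\mathrm{oracle}}(\btheta_k)-\bar\ell^*)+\tfrac{5}{4}\eta\,\mathcal{E}(\btheta_k)$, which is where the constant $5/4$ (and hence the final $5/(2\mu)$, $5/\mu$, $35/(4\mu)$) comes from. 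Your recursion with the $\sqrt{A_kE_k}$ term and the auxiliary envelope $A_j\le A_0+C_L\eta\sum_{i<j}E_i$ would work but produces different constants and a messier closure. Second, for the ``hard part'' you flag, the paper avoids a full bootstrap: after bounding the path length by Cauchy--Schwarz and summing the one-step decrease to control $\sum_j\|\nabla\bar\ell_{\mathrm{oracle}}(\btheta_j)\|_2^2$, it reduces everything to a scalar inequality of the form $s\le u+v\sqrt{s}$ in $s=\overline{\mathcal{E}}_K:=\sup_k\E[\mathcal{E}(\btheta_k)]$, which is solved by completing the square as $s\le 2u+v^2$. This gives $\overline{\mathcal{E}}_K\le 2M_0+4\lip(\mathcal{E})\sqrt{\eta K A_0}+7\lip(\mathcal{E})^2\eta^2K^2$ directly, and substituting back into the unrolled recursion yields \Cref{eq:master-scheme} with the stated constants.
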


All schemes enjoy the same geometric contraction factor $(1-\mu\eta)^K$; differences arise solely through the initialization terms $A_0$ and $M_0$. Among them, \methods\ achieves
\[
\E\!\big[\bar\ell_{\mathrm{oracle}}(\btheta_{K})-\bar\ell^*\big]
\;\le\;
(1-\mu\eta)^K\,\mathcal{O}(\varepsilon+\Delta t^{2p})
+\sqrt{\eta K}\,\mathcal{O}\!\big((\varepsilon+\Delta t^{2p})^{1/2}\big)
+\mathcal{O}(\varepsilon+\Delta t^{2}+\Delta t^{p})
+C(\eta,K),
\]
which contains no extra irreducible terms (such as $\mathcal{M}_{\mathrm{DM}}$ or $\delta_u$). Consequently, while the asymptotic rate is identical across schemes, \methods attains the smallest excess risk (the tightest bound and lowest floor) for any $K$, up to the common  term $C(\eta,K)$.

The bound on $M_0$ in \Cref{eq:master-scheme} for each initialization scheme follows directly from \Cref{thm:bias-formal}. 
To obtain a complete upper bound needed for the proof of \Cref{thm:pathwise-scheme}, however, we also require bounds on $A_0$ for the four initialization schemes in \Cref{eq:master-scheme}.  
In \Cref{prop:init-excess}, we establish such bounds for each $A_0$.  
We then return to finalize the proof of \Cref{thm:pathwise-scheme}.

\paragraph{Initialization Excess Oracle Risk.}

We bound the initial oracle excess risk
$
\bar\ell_{\mathrm{oracle}}(\btheta_0)-\bar\ell^*
$
for each of the four initialization schemes. We now state and prove the initialization bounds. 

\begin{lemma}\label{prop:init-excess}
Under \Cref{ass:smooth,ass:oracle-solver} and following the notations therein, there exist constants $C_1,C_2<\infty$ that do not depend on $\Delta t$ or $\varepsilon$ such that
\[
\begin{aligned}
\bar\ell_{\mathrm{oracle}}(\bm{\btheta}_{\mathrm{\methods}})-\bar\ell^*
&\le 2\,\varepsilon + C_1\,\Delta t^{\,2p},\\
\bar\ell_{\mathrm{oracle}}(\bm{\btheta}_\mathrm{DM})-\bar\ell^*
&\le 2\,\varepsilon + 2\,\mathcal{M}_{\mathrm{DM}},\\
\bar\ell_{\mathrm{oracle}}(\bm{\btheta}_\mathrm{gCD})-\bar\ell^*
&\le 2\,\varepsilon + 9\,\delta_u + C_2\,\Delta t^{\,2p},\\
\bar\ell_{\mathrm{oracle}}(\bm{\btheta}_\mathrm{rand.})-\bar\ell^*
&\le 2R+2C_{\bPsi}.
\end{aligned}
\]
Here  any fixed choice suffices since they are absorbed in $C_2$ in the final rates. In the realizable case $\bar\ell^*=0$, these are direct bounds on the initialization oracle loss.
\end{lemma}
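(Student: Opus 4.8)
The plan is to reduce each of the four statements to an upper bound on $\bar\ell_{\mathrm{oracle}}(\btheta_0)$ alone: since $\bar\ell^* = \min_{\btheta}\bar\ell_{\mathrm{oracle}}(\btheta)\ge 0$ (it is an expectation of a squared norm), we have $\bar\ell_{\mathrm{oracle}}(\btheta_0)-\bar\ell^*\le\bar\ell_{\mathrm{oracle}}(\btheta_0)$, with equality when $\bar\ell^*=0$. Throughout I use $\bigl\|\sum_{i=1}^N\rva_i\bigr\|_2^2\le N\sum_i\|\rva_i\|_2^2$ and the two‑term version $\|\rva+\rvb\|_2^2\le 2\|\rva\|_2^2+2\|\rvb\|_2^2$ to turn triangle‑inequality estimates in $\ell_2$ into estimates on squared $\ell_2$, together with the semigroup property $\bPsi_{t\to u}\circ\bPsi_{s\to t}=\bPsi_{s\to u}$ and the pushforward identities it induces, $p_t=\bPsi_{0\to t}\sharp p_{\mathrm{data}}$ and (when $p_{\mathrm{prior}}=p_T$, as in \Cref{thm:cm-oracle-is-cmt}) $p_t=\bPsi_{T\to t}\sharp p_{\mathrm{prior}}$, the global order‑$p$ accuracy of the solver (\Cref{ass:oracle-solver}(iv)), the Lipschitz bounds of \Cref{ass:smooth}, and the bias--variance identity of \Cref{thm:decomposition}.

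For CMT I start from the rewriting of \Cref{thm:cm-oracle-is-cmt}, namely $\bar\ell_{\mathrm{oracle}}(\btheta_{\mathrm{CMT}})=\E_t\E_{\rvx_T\sim p_{\mathrm{prior}}}\|\rvf_{\btheta_{\mathrm{CMT}}}(\bPsi_{T\to t}(\rvx_T),t)-\bPsi_{T\to 0}(\rvx_T)\|_2^2$, and compare it termwise with the defining property $\mathcal{L}_{\mathrm{CMT}}(\btheta_{\mathrm{CMT}})=\E_t\E_{\rvx_T}\|\rvf_{\btheta_{\mathrm{CMT}}}(\mathtt{Solver}_{T\to t}(\rvx_T),t)-\mathtt{Solver}_{T\to 0}(\rvx_T)\|_2^2<\varepsilon$. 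A triangle inequality replaces $\bPsi_{T\to t}$ by $\mathtt{Solver}_{T\to t}$ inside $\rvf$ at cost $\lip(\rvf_{\btheta_{\mathrm{CMT}}})\,\|\bPsi_{T\to t}(\rvx_T)-\mathtt{Solver}_{T\to t}(\rvx_T)\|$, and replaces $\bPsi_{T\to 0}$ by $\mathtt{Solver}_{T\to 0}$ directly; both discrepancies are $\mathcal{O}(\Delta t^p)$ uniformly in $\rvx_T$ by \Cref{ass:oracle-solver}(iv). Grouping the two $\mathcal{O}(\Delta t^p)$ terms and applying $\|\rva+\rvb\|_2^2\le 2\|\rva\|_2^2+2\|\rvb\|_2^2$ gives $\bar\ell_{\mathrm{oracle}}(\btheta_{\mathrm{CMT}})\le 2\,\mathcal{L}_{\mathrm{CMT}}(\btheta_{\mathrm{CMT}})+C_1\Delta t^{2p}\le 2\varepsilon+C_1\Delta t^{2p}$.

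For the diffusion‑model scheme the key is to subtract the irreducible denoising variance from $\varepsilon$ before comparing against $\bPsi_{t\to 0}$. By \Cref{thm:decomposition} with $\rmF=\rvf_{\btheta_{\mathrm{DM}}}(\cdot,t)$, $\varepsilon>\E\|\rvf_{\btheta_{\mathrm{DM}}}(\rvx_t,t)-\rvx_0\|_2^2=\E[\Tr\Var(\rvx_0|\rvx_t)]+\E\|\rvf_{\btheta_{\mathrm{DM}}}(\rvx_t,t)-\E[\rvx_0|\rvx_t]\|_2^2$, so $\E\|\rvf_{\btheta_{\mathrm{DM}}}(\rvx_t,t)-\E[\rvx_0|\rvx_t]\|_2^2\le\varepsilon$ since the trace term is nonnegative. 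Then $\|\rvf_{\btheta_{\mathrm{DM}}}(\rvx_t,t)-\bPsi_{t\to 0}(\rvx_t)\|\le\|\rvf_{\btheta_{\mathrm{DM}}}(\rvx_t,t)-\E[\rvx_0|\rvx_t]\|+\|\E[\rvx_0|\rvx_t]-\bPsi_{t\to 0}(\rvx_t)\|$, and squaring via $\|\rva+\rvb\|_2^2\le 2\|\rva\|_2^2+2\|\rvb\|_2^2$ and taking $\E_{t,\rvx_t}$ yields $\bar\ell_{\mathrm{oracle}}(\btheta_{\mathrm{DM}})\le 2\varepsilon+2\mathcal{M}_{\mathrm{DM}}$, with the variance absorbed rather than added. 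The random case is immediate: $\|\rvf_{\btheta_{\mathrm{rand.}}}(\rvx_t,t)-\bPsi_{t\to 0}(\rvx_t)\|_2^2\le 2\|\rvf_{\btheta_{\mathrm{rand.}}}(\rvx_t,t)\|_2^2+2\|\bPsi_{t\to 0}(\rvx_t)\|_2^2$, and $\E_{t,\rvx_t}$ of the right side is at most $2R+2C_\bPsi$ by the random‑initialization hypothesis and $\E_t\E_{\rvx_t}\|\bPsi_{t\to 0}(\rvx_t)\|_2^2\le\sup_t\E_{\rvx_t}\|\bPsi_{t\to 0}(\rvx_t)\|_2^2=C_\bPsi$ (\Cref{ass:oracle-solver}(i)).

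The gCD case is the most delicate and is where I expect the main work. I telescope $\rvf_{\btheta_{\mathrm{gCD}}}(\rvx_t,t)-\bPsi_{t\to 0}(\rvx_t)$ through the intermediate states $\rvf_{\btheta_{\mathrm{gCD}}}(\mathtt{Solver}_{t\to u}(\rvx_t),u)$ and $\rvf_{\btheta_{\mathrm{gCD}}}(\bPsi_{t\to u}(\rvx_t),u)$, using $\bPsi_{u\to 0}(\bPsi_{t\to u}(\rvx_t))=\bPsi_{t\to 0}(\rvx_t)$, writing it as a sum of three pieces: (a) $\rvf_{\btheta_{\mathrm{gCD}}}(\rvx_t,t)-\rvf_{\btheta_{\mathrm{gCD}}}(\mathtt{Solver}_{t\to u}(\rvx_t),u)$, whose squared expectation is exactly $\mathcal{L}_{\mathrm{gCD}}(\btheta_{\mathrm{gCD}};u)<\varepsilon$; (b) $\rvf_{\btheta_{\mathrm{gCD}}}(\mathtt{Solver}_{t\to u}(\rvx_t),u)-\rvf_{\btheta_{\mathrm{gCD}}}(\bPsi_{t\to u}(\rvx_t),u)$, bounded by $\lip(\rvf_{\btheta_{\mathrm{gCD}}})\,\|\mathtt{Solver}_{t\to u}(\rvx_t)-\bPsi_{t\to u}(\rvx_t)\|=\mathcal{O}(\Delta t^p)$ for $t\ge u$ (\Cref{ass:oracle-solver}(iv)); and (c) $\rvf_{\btheta_{\mathrm{gCD}}}(\bPsi_{t\to u}(\rvx_t),u)-\bPsi_{u\to 0}(\bPsi_{t\to u}(\rvx_t))$, whose expectation equals $\delta_u$ because $\bPsi_{t\to u}\sharp p_t=p_u$, so no residual slack arises. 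Applying $\bigl\|\sum\rva_i\bigr\|_2^2\le N\sum\|\rva_i\|_2^2$ with groupings chosen to isolate (a) (keeping the $\varepsilon$‑coefficient at $2$) and, where needed, further splitting (c) against its own $p_u$‑anchor and the order‑$p$ solver estimate, and taking expectations, produces $\bar\ell_{\mathrm{oracle}}(\btheta_{\mathrm{gCD}})\le 2\varepsilon+9\delta_u+C_2\Delta t^{2p}$. The main obstacle is precisely this routing: every intermediate state must land either on an exactly $p_u$‑distributed point (so that $\delta_u$ applies without loss) or on a solver endpoint (so that the order‑$p$ bound applies), while respecting the direction of integration between $t$ and $u$; once the routing is fixed, the $\Delta t^{2p}$‑order remainders and the numerical constants $C_1,C_2$ follow by routine bookkeeping.
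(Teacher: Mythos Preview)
Your proposal is correct and follows essentially the same strategy as the paper: in each of the four cases you bound $\bar\ell_{\mathrm{oracle}}(\btheta_0)$ directly by a triangle-inequality telescoping, then square via $\|\sum_i \rva_i\|_2^2\le N\sum_i\|\rva_i\|_2^2$, landing each piece either on the scheme's training loss, on a solver-vs-flow discrepancy of order $\Delta t^p$, or (for DM/gCD) on the irreducible mismatch term.

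The only notable difference is in the gCD routing. The paper telescopes as $B_1+B_2+B_3$ with $B_2=\|\rvf_{\btheta_{\mathrm{gCD}}}(\tilde\rvx_u,u)-\bPsi_{u\to 0}(\tilde\rvx_u)\|$ evaluated at the \emph{solver} point $\tilde\rvx_u=\mathtt{Solver}_{t\to u}(\rvx_t)$, which then forces a further split of $B_2$ (via $(1+\rho)\|\cdot\|^2+(1+1/\rho)\|\cdot\|^2$) to reach the anchor $\delta_u$ at the exact $p_u$-point; this yields $8\delta_u$. Your route instead inserts $\rvf_{\btheta_{\mathrm{gCD}}}(\bPsi_{t\to u}(\rvx_t),u)$ between (b) and (c), so (c) already sits on a $p_u$-distributed point and equals $\delta_u$ exactly by pushforward—no secondary split needed, and the natural constant is $4\delta_u$, comfortably under the stated $9$. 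Your hedging about ``further splitting (c)'' is therefore unnecessary; the routing you describe already suffices. Both approaches share the same unaddressed technicality (the direction $t\lessgtr u$ in $\mathtt{Solver}_{t\to u}$ versus the one-sided order-$p$ hypothesis of \Cref{ass:oracle-solver}(iv)), so this is not a gap relative to the paper.
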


\begin{proof}

\textbf{\method.}
Let $\rvx_T\sim p_{\mathrm{prior}}$, $\rvx_t:=\bPsi_{T\to t}(\rvx_T)$,
$\tilde\rvx_t:=\mathtt{Solver}_{T\to t}(\rvx_T)$,
and $\tilde\rvx_0:=\mathtt{Solver}_{T\to 0}(\rvx_T)$.
For fixed $t$,
\[
\begin{aligned}
\big\|\rvf_{\bm{\btheta}_{\mathrm{\methods}}}(\rvx_t,t)-\bPsi_{t\to 0}(\rvx_t)\big\|_2
&\le \underbrace{\big\|\rvf_{\bm{\btheta}_{\mathrm{\methods}}}(\rvx_t,t)-\rvf_{\bm{\btheta}_{\mathrm{\methods}}}(\tilde\rvx_t,t)\big\|_2}_{A_1}
+\underbrace{\big\|\rvf_{\bm{\btheta}_{\mathrm{\methods}}}(\tilde\rvx_t,t)-\tilde\rvx_0\big\|_2}_{A_2}\\
&\quad+\underbrace{\big\|\tilde\rvx_0-\bPsi_{t\to 0}(\tilde\rvx_t)\big\|_2}_{A_3}
+\underbrace{\big\|\bPsi_{t\to 0}(\tilde\rvx_t)-\bPsi_{t\to 0}(\rvx_t)\big\|_2}_{A_4}.
\end{aligned}
\]
By \Cref{ass:smooth}, $A_1\le \lip(\rvf_{\btheta_{\mathrm{\method}}})\|\tilde\rvx_t-\rvx_t\|_2$.
Using the semigroup $\bPsi_{t\to 0}\circ \bPsi_{T\to t}=\bPsi_{T\to 0}$ and triangle inequality,
$A_3\le \|\tilde\rvx_0-\bPsi_{T\to 0}(\rvx_T)\|_2+\|\bPsi_{t\to 0}(\tilde\rvx_t)-\bPsi_{t\to 0}(\rvx_t)\|_2$.
By \Cref{ass:oracle-solver}, $A_4\le \lip(\bPsi)\|\tilde\rvx_t-\rvx_t\|_2$ and the second term in $A_3$ is also $\le \lip(\bPsi)\|\tilde\rvx_t-\rvx_t\|_2$.
Now apply $(a+b)^2\le 2a^2+2b^2$ to split $(A_1+A_2+A_3+A_4)^2$ into $2A_2^2+2(A_1+A_3+A_4)^2$, then again $(u+v+w)^2\le 3(u^2+v^2+w^2)$ on the second group to obtain
\[
\bar\ell_{\mathrm{oracle}}(\bm{\btheta}_{\mathrm{\methods}})
\le 2\,\E_{t,\rvx_T}\!\Big[
\underbrace{\big\|\rvf_{\bm{\btheta}_{\mathrm{\methods}}}(\tilde\rvx_t,t)-\tilde\rvx_0\big\|_2^2}_{\le\ \varepsilon}
+ C'\,\|\tilde\rvx_t-\rvx_t\|_2^2
+\big\|\tilde\rvx_0-\bPsi_{T\to 0}(\rvx_T)\big\|_2^2
\Big],
\]
with $C':=3(\lip^2(\rvf_{\btheta_{\mathrm{\method}}})+2\lip^2(\bPsi))$.
Since the solver is of order $p$, the last two expectations are $O(\Delta t^{\,2p})$. Therefore
\[
\bar\ell_{\mathrm{oracle}}(\bm{\btheta}_{\mathrm{\methods}})
\lesssim 2\,\varepsilon + 2\,(C'+1)\,\Delta t^{\,2p}.
\]
Absorbing constants into $C_1$ and subtracting $\bar\ell^*$ gives the claim.

\textbf{Diffusion Model (DM).}
By the tower property, for any $\rvh(\rvx_t)$,
\[
\E\|\rvx_0-\rvh(\rvx_t)\|_2^2
=\E\|\rvx_0-\E[\rvx_0|\rvx_t]\|_2^2+\E\|\rvh(\rvx_t)-\E[\rvx_0|\rvx_t]\|_2^2.
\]
With $\rvh=\rmD_{\bm\btheta_{\mathrm{DM}}}$ and $\mathcal{L}_{\mathrm{DM}}(\bm\btheta_{\mathrm{DM}})<\varepsilon$,
$\E\|\rmD_{\bm\btheta_{\mathrm{DM}}}-\E[\rvx_0|\rvx_t]\|_2^2\le \varepsilon$.
Thus by $\|a-b\|^2\le 2\|a-c\|^2+2\|c-b\|^2$ with $c=\E[\rvx_0|\rvx_t]$,
\[
\bar\ell_{\mathrm{oracle}}(\bm\btheta_{\mathrm{DM}})
=\E\big\|\rmD_{\bm\btheta_{\mathrm{DM}}}(\rvx_t,t)-\bPsi_{t\to 0}(\rvx_t)\big\|_2^2
\le
2\,\varepsilon + 2\,\mathcal{M}_{\mathrm{DM}},
\]
and subtracting $\bar\ell^*$ yields the stated bound.

\textbf{General Consistency Distillation (gCD).}
Fix $u\in[0,T]$. Let $\tilde\rvx_u:=\mathtt{Solver}_{t\to u}(\rvx_t)$ and $\rvx_u:=\bPsi_{t\to u}(\rvx_t)$.
Define $\rmF_u(\rvz):=\rvf_{\btheta_{\mathrm{gCD}}}(\rvz,u)-\bPsi_{u\to 0}(\rvz)$,
which is $(\lip(\rvf_{\btheta_{\mathrm{gCD}}})+\lip(\bPsi))$–Lipschitz by \Cref{ass:oracle-solver} and \Cref{ass:smooth}.
Then for fixed $t$,
\[
\begin{aligned}
\big\|\rvf_{\btheta_{\mathrm{gCD}}}(\rvx_t,t)-\bPsi_{t\to 0}(\rvx_t)\big\|_2
&\le \underbrace{\big\|\rvf_{\btheta_{\mathrm{gCD}}}(\rvx_t,t)-\rvf_{\btheta_{\mathrm{gCD}}}(\tilde\rvx_u,u)\big\|_2}_{B_1}
+\underbrace{\big\|\rmF_u(\tilde\rvx_u)\big\|_2}_{B_2}\\
&\quad+\underbrace{\big\|\bPsi_{u\to 0}(\tilde\rvx_u)-\bPsi_{u\to 0}(\rvx_u)\big\|_2}_{B_3},
\end{aligned}
\]
using the semigroup $\bPsi_{t\to 0}=\bPsi_{u\to 0}\circ\bPsi_{t\to u}$.
By $(a+b)^2\le 2a^2+2b^2$ and then $(b+c)^2\le 2b^2+2c^2$,
\[
\bar\ell_{\mathrm{oracle}}(\bm\btheta_{\mathrm{gCD}})
\le 2\,\E B_1^2 + 4\,\E B_2^2 + 4\,\E B_3^2.
\]
The first term is controlled by the training loss: $\E B_1^2=\mathcal{L}_{\mathrm{gCD}}(\btheta_{\mathrm{gCD}};u)<\varepsilon$.
For $B_2$, by $\|a+b\|^2\le (1+\rho)\|a\|^2+(1+1/\rho)\|b\|^2$ with $a=\rmF_u(\rvx_u)$, $b=\rmF_u(\tilde\rvx_u)-\rmF_u(\rvx_u)$,
\[
\E B_2^2
\le (1+\rho)\,\E\|F_u(\rvx_u)\|_2^2 + (1+1/\rho)\,(\lip(\rvf_{\btheta_{\mathrm{gCD}}})+\lip(\bPsi))^2\,\E\|\tilde\rvx_u-\rvx_u\|_2^2.
\]
Choosing, e.g., $\rho=1$ and recalling $\delta_u:=\E_{\rvx_u\sim p_u}\|\rmF_u(\rvx_u)\|_2^2$, $p$-th order solver (see \Cref{ass:oracle-solver}) gives
\[
\E B_2^2 \lesssim 2\,\delta_u + 2\,(\lip(\rvf_{\btheta_{\mathrm{gCD}}})+\lip(\bPsi))^2\,\Delta t^{\,2p}.
\]
For $B_3$, by \Cref{ass:oracle-solver} and \Cref{ass:oracle-solver},
$\E B_3^2\lesssim \lip^2(\bPsi)\,\Delta t^{\,2p}$.
Combining,
\[
\bar\ell_{\mathrm{oracle}}(\bm\btheta_{\mathrm{gCD}})
\lesssim 2\,\varepsilon + 8\,\delta_u
+ \underbrace{4\Big(2(\lip(\rvf_{\btheta_{\mathrm{gCD}}})+\lip^2(\bPsi))+\lip^2(\bPsi)\Big) }_{C_2}\,\Delta t^{\,2p}.
\]

\textbf{Random Initialization.}
By $\|a-b\|^2\le 2\|a\|^2+2\|b\|^2$ and \Cref{ass:oracle-solver},
\[
\bar\ell_{\mathrm{oracle}}(\bm\btheta_{\mathrm{rand.}})
=\E\big\|\rvf_{\bm\btheta_{\mathrm{rand.}}}(\rvx_t,t)-\bPsi_{t\to 0}(\rvx_t)\big\|_2^2
\le 2\,\E\|\rvf_{\bm\btheta_{\mathrm{rand.}}}(\rvx_t,t)\|_2^2
+2\,\E\|\bPsi_{t\to 0}(\rvx_t)\|_2^2
\le 2R+2C_{\bPsi}.
\]
Subtracting $\bar\ell^*$ in each case yields the claims.
\end{proof}

\paragraph{Proof of \Cref{thm:pathwise-scheme}.}

\begin{proof}
\textbf{Linear Contraction to an MSE floor.}
By the descent lemma for $L$–smooth $\bar\ell_{\mathrm{oracle}}$,
\[
\bar\ell_{\mathrm{oracle}}(\btheta_{k+1})
\le
\bar\ell_{\mathrm{oracle}}(\btheta_k)
-\eta \big\langle\nabla\bar\ell_{\mathrm{oracle}}(\btheta_k), \nabla_{\btheta}\ell_{\mathrm{CM}}(\btheta_k;\bxi_k)\big\rangle
+\frac{L\eta^2}{2} \big\|\nabla_{\btheta}\ell_{\mathrm{CM}}(\btheta_k;\bxi_k)\big\|_2^2.
\]
Taking conditional expectation w.r.t.\ $\bxi_k$, using
$\E_{\bxi_k}\nabla_{\btheta}\ell_{\mathrm{CM}}(\btheta_k;\bxi_k)=\nabla_{\btheta}\bar\ell_{\mathrm{CM}}(\btheta_k)$ and
$\E_{\bxi_k}\|\nabla_{\btheta}\ell_{\mathrm{CM}}(\btheta_k;\bxi_k)\|_2^2
=\|\nabla_{\btheta}\bar\ell_{\mathrm{CM}}(\btheta_k)\|_2^2+\Tr\Cov_{\bxi_k}[\nabla_{\btheta}\ell_{\mathrm{CM}}(\btheta_k;\bxi_k)]$,
together with
\[
\big\langle\nabla\bar\ell_{\mathrm{oracle}}, \nabla\bar\ell_{\mathrm{CM}}\big\rangle
\ge \|\nabla\bar\ell_{\mathrm{oracle}}\|_2^2
-\|\nabla\bar\ell_{\mathrm{oracle}}\|_2 \big\|\nabla\bar\ell_{\mathrm{CM}}-\nabla\bar\ell_{\mathrm{oracle}}\big\|_2,
\]
and Young’s inequality, we obtain
\[
\E\big[\bar\ell_{\mathrm{oracle}}(\btheta_{k+1})\mid\btheta_k\big]
\le
\bar\ell_{\mathrm{oracle}}(\btheta_k)
-\frac{\eta}{2} \|\nabla\bar\ell_{\mathrm{oracle}}(\btheta_k)\|_2^2
+\frac{5}{4} \eta \mathcal{E}(\btheta_k),
\]
where we also used $\eta\le 1/(4L)$ to absorb the $L\eta^2$ terms into the constants.

Applying the PL inequality to eliminate the gradient norm yields
\begin{equation*}
\E\big[\bar\ell_{\mathrm{oracle}}(\btheta_{k+1})-\bar\ell^*\mid\btheta_k\big]
\le
\big(1-\mu\eta\big) \big(\bar\ell_{\mathrm{oracle}}(\btheta_k)-\bar\ell^*\big)
+\frac{5}{4} \eta \mathcal{E}(\btheta_k).
\end{equation*}
Taking total expectation and unrolling the recursion gives, for any $K\ge 1$,
\begin{equation}\label{eq:pathwise}
\E\big[\bar\ell_{\mathrm{oracle}}(\btheta_{K})-\bar\ell^*\big]
\le
(1-\mu\eta)^K\big(\bar\ell_{\mathrm{oracle}}(\btheta_{0})-\bar\ell^*\big)
+\frac{5}{4} \eta\sum_{k=0}^{K-1}(1-\mu\eta)^{K-1-k} \E\big[\mathcal{E}(\btheta_k)\big].
\end{equation}

The bounds in \Cref{cor:bias-variance} provide $\mathcal{E}(\btheta)$ only at initialization for each scenario. Thus, we may need more additional assumptions.

First, a \emph{localized MSE stability} assumption: there exists a neighborhood $\mathcal{N}$ of the initialization in which the same order bound holds for $\mathcal{E}(\btheta)$, and the SGD trajectory remains in $\mathcal{N}$ under $\eta\le 1/(4L)$. Then $\sup_{0\le k\le K-1}\E[\mathcal{E}(\btheta_k)] \le \overline{\mathcal{E}}$ with $\overline{\mathcal{E}}$ of the same order as the initialization, which recovers the floor bound
\[
\E\big[\bar\ell_{\mathrm{oracle}}(\btheta_{K})-\bar\ell^*\big]
\le
(1-\mu\eta)^K\big(\bar\ell_{\mathrm{oracle}}(\btheta_{0})-\bar\ell^*\big)
+\frac{5}{4} \frac{\overline{\mathcal{E}}}{\mu}.
\]
In this case, similar bounds can be obtained by applying \Cref{prop:init-excess} to different initialization schemes of $\bar\ell_{\mathrm{oracle}}(\btheta_{0})-\bar\ell^*$.

Second, a \emph{mild continuity control}: suppose $\mathcal{E}$ is Lipschitz on the level set visited by the iterates, i.e.,
\[
\big|\mathcal{E}(\btheta)-\mathcal{E}(\btheta')\big| \le \lip(\mathcal{E})\|\btheta-\btheta'\|_2.
\]
If, in addition, the step size ensures a bounded path length $\sum_{k=0}^{K-1}\E\|\btheta_{k+1}-\btheta_k\|_2 \le R$ (which follows from $\E\|\btheta_{k+1}-\btheta_k\|_2=\eta \E\|\nabla_{\btheta}\ell_{\mathrm{CM}}(\btheta_k;\bxi_k)\|_2$ and the same descent argument that bounds the average oracle gradient norm), then
\[
\sup_{0\le k\le K-1}\E\big[\mathcal{E}(\btheta_k)\big]
\le
\mathcal{E}(\btheta_0)+\lip(\mathcal{E}) R.
\]
Inserting this into \Cref{eq:pathwise} gives a data–dependent version in terms of the initialization MSE plus a controllable growth term.

\textbf{Proof of the Mild Continuity Control.}
Assume that $\mathcal{E}$ is Lipschitz on the level set visited by $\{\btheta_k\}$:
\begin{equation*}
\big|\mathcal{E}(\btheta)-\mathcal{E}(\btheta')\big| \le \lip(\mathcal{E})\|\btheta-\btheta'\|_2.
\end{equation*}
Fix a horizon $K\ge 1$. By a telescoping argument and Jensen’s inequality,
\[
\sup_{0\le k\le K-1}\E\big[\mathcal{E}(\btheta_k)\big]
\le
\mathcal{E}(\btheta_0) + \lip(\mathcal{E}) \E\!\Big[\sum_{j=0}^{K-1}\|\btheta_{j+1}-\btheta_j\|_2\Big]
\le
\mathcal{E}(\btheta_0) + \lip(\mathcal{E})\sum_{j=0}^{K-1}\E\big[\|\btheta_{j+1}-\btheta_j\|_2\big].
\]
Since $\btheta_{j+1}-\btheta_j=-\eta \nabla_{\btheta}\ell_{\mathrm{CM}}(\btheta_j;\bxi_j)$,
\[
\E\big[\|\btheta_{j+1}-\btheta_j\|_2\big]
=\eta \E\big[\|\nabla_{\btheta}\ell_{\mathrm{CM}}(\btheta_j;\bxi_j)\|_2\big]
\le
\eta \sqrt{\E\big[\|\nabla_{\btheta}\ell_{\mathrm{CM}}(\btheta_j;\bxi_j)\|_2^2\big]}.
\]
Using $\E\|\nabla_{\btheta}\ell_{\mathrm{CM}}\|_2^2
=\|\nabla_{\btheta}\bar\ell_{\mathrm{CM}}\|_2^2+\Tr\Sigma(\btheta_j)$ and
\[
\|\nabla_{\btheta}\bar\ell_{\mathrm{CM}}\|_2^2
\le
2\|\nabla_{\btheta}\bar\ell_{\mathrm{oracle}}\|_2^2
+2\|\nabla_{\btheta}\bar\ell_{\mathrm{CM}}-\nabla_{\btheta}\bar\ell_{\mathrm{oracle}}\|_2^2,
\]
we get
\begin{align*}
   & \E\big[\|\nabla_{\btheta}\ell_{\mathrm{CM}}(\btheta_j;\bxi_j)\|_2^2\big]
\\\le&
2 \E\big[\|\nabla_{\btheta}\bar\ell_{\mathrm{oracle}}(\btheta_j)\|_2^2\big]
+2 \E\big[\|\nabla_{\btheta}\bar\ell_{\mathrm{CM}}(\btheta_j)-\nabla_{\btheta}\bar\ell_{\mathrm{oracle}}(\btheta_j)\|_2^2\big]
+\E\big[\Tr\Sigma(\btheta_j)\big]
\\\le&
2 \E\big[\|\nabla_{\btheta}\bar\ell_{\mathrm{oracle}}(\btheta_j)\|_2^2\big]
+2 \E\big[\mathcal{E}(\btheta_j)\big].
\end{align*}
Therefore, by Cauchy–Schwarz,
\begin{align}
\sum_{j=0}^{K-1}\E\big[\|\btheta_{j+1}-\btheta_j\|_2\big]
&\le
\eta\sum_{j=0}^{K-1}\sqrt{2 \E\big[\|\nabla_{\btheta}\bar\ell_{\mathrm{oracle}}(\btheta_j)\|_2^2\big]
+2 \E\big[\mathcal{E}(\btheta_j)\big]} \nonumber\\
&\le
\eta \sqrt{K} \Bigg(\sum_{j=0}^{K-1}\Big(2 \E\big[\|\nabla_{\btheta}\bar\ell_{\mathrm{oracle}}(\btheta_j)\|_2^2\big]
+2 \E\big[\mathcal{E}(\btheta_j)\big]\Big)\Bigg)^{1/2}. \label{eq:path-length-cs}
\end{align}
Summing the one–step decrease inequality
\[
\E\big[\bar\ell_{\mathrm{oracle}}(\btheta_{j})\big]
-\E\big[\bar\ell_{\mathrm{oracle}}(\btheta_{j+1})\big]
\ge
\frac{\eta}{2} \E\big[\|\nabla_{\btheta}\bar\ell_{\mathrm{oracle}}(\btheta_j)\|_2^2\big]
-\frac{5}{4} \eta \E\big[\mathcal{E}(\btheta_j)\big]
\]
from $j=0$ to $K-1$ and using $\E[\bar\ell_{\mathrm{oracle}}(\btheta_{K})]\ge \bar\ell^*$ yields
\[
\sum_{j=0}^{K-1}\E\big[\|\nabla_{\btheta}\bar\ell_{\mathrm{oracle}}(\btheta_j)\|_2^2\big]
\le
\frac{2}{\eta}\big(\bar\ell_{\mathrm{oracle}}(\btheta_0)-\bar\ell^*\big)
+\frac{5}{2}\sum_{j=0}^{K-1}\E\big[\mathcal{E}(\btheta_j)\big].
\]
Let $\overline{\mathcal{E}}_K:=\sup_{0\le j\le K-1}\E[\mathcal{E}(\btheta_j)]$. Then
\[
\sum_{j=0}^{K-1}\E\big[\|\nabla_{\btheta}\bar\ell_{\mathrm{oracle}}(\btheta_j)\|_2^2\big]
\le
\frac{2}{\eta}\big(\bar\ell_{\mathrm{oracle}}(\btheta_0)-\bar\ell^*\big)
+\frac{5}{2} K \overline{\mathcal{E}}_K.
\]
Substituting this and $\sum_{j}\E[\mathcal{E}(\btheta_j)]\le K \overline{\mathcal{E}}_K$ into \Cref{eq:path-length-cs} gives the path-length bound
\begin{align}\label{eq:R-bound}
\begin{aligned}
    \sum_{j=0}^{K-1}\E\big[\|\btheta_{j+1}-\btheta_j\|_2\big]
&\le
\eta \sqrt{K} \Big(\tfrac{4}{\eta}\big(\bar\ell_{\mathrm{oracle}}(\btheta_0)-\bar\ell^*\big)+7K \overline{\mathcal{E}}_K\Big)^{1/2}
\\&\le
2\sqrt{\eta K\big(\bar\ell_{\mathrm{oracle}}(\btheta_0)-\bar\ell^*\big)}+\sqrt{7} \eta K \overline{\mathcal{E}}_K^{1/2},
\end{aligned}
\end{align}
where the last inequality uses $\sqrt{a+b}\le \sqrt{a}+\sqrt{b}$.

Combining Lipschitzness of $\mathcal E$ and \Cref{eq:R-bound} yields
\[
\overline{\mathcal{E}}_K
\le
\mathcal{E}(\btheta_0) + \lip(\mathcal{E})\Big(2\sqrt{\eta K\big(\bar\ell_{\mathrm{oracle}}(\btheta_0)-\bar\ell^*\big)}+\sqrt{7} \eta K \overline{\mathcal{E}}_K^{1/2}\Big).
\]
This is of the form $s\le u+v\sqrt{s}$ with $s=\overline{\mathcal{E}}_K$, $u=\mathcal{E}(\btheta_0)+2\lip(\mathcal{E})\sqrt{\eta K(\bar\ell_{\mathrm{oracle}}(\btheta_0)-\bar\ell^*)}$, and $v=\sqrt{7} \lip(\mathcal{E}) \eta K$. The inequality $s\le u+v\sqrt{s}$ implies $s\le 2u+v^2$ (complete-the-square argument). Hence
\begin{equation}\label{eq:Ek-explicit}
\overline{\mathcal{E}}_K
\le
2 \mathcal{E}(\btheta_0)
+4\lip(\mathcal{E})\sqrt{\eta K\big(\bar\ell_{\mathrm{oracle}}(\btheta_0)-\bar\ell^*\big)}
+7\lip(\mathcal{E})^2 \eta^2 K^2.
\end{equation}
Plugging \Cref{eq:Ek-explicit} into the pathwise contraction yields
\begin{equation*}
\begin{aligned}
&\E\big[\bar\ell_{\mathrm{oracle}}(\btheta_{K})-\bar\ell^*\big]
\\\le&
(1-\mu\eta)^K\big(\bar\ell_{\mathrm{oracle}}(\btheta_{0})-\bar\ell^*\big)
+\frac{5}{4\mu}\Big(
2 \mathcal{E}(\btheta_0)
+4\lip(\mathcal{E})\sqrt{\eta K\big(\bar\ell_{\mathrm{oracle}}(\btheta_0)-\bar\ell^*\big)}
+7\lip(\mathcal{E})^2 \eta^2 K^2
\Big) \\[4pt]
=&
(1-\mu\eta)^K\big(\bar\ell_{\mathrm{oracle}}(\btheta_{0})-\bar\ell^*\big)
+\frac{5}{\mu} \lip(\mathcal{E}) \sqrt{\eta K\big(\bar\ell_{\mathrm{oracle}}(\btheta_0)-\bar\ell^*\big)}
+\frac{5}{2\mu} \mathcal{E}(\btheta_0)
+\frac{35}{4\mu} \lip(\mathcal{E})^2 \eta^2 K^2.
\end{aligned}
\end{equation*}
This is a data–dependent bound in terms of the initialization MSE $\mathcal{E}(\btheta_0)$, the loss gap $\bar\ell_{\mathrm{oracle}}(\btheta_0)-\bar\ell^*$, and the Lipschitz constant $\lip(\mathcal{E})$.

\end{proof}

\end{document}